\documentclass[letterpaper]{article}
\usepackage[preprint]{aaai2027}
\usepackage[hyphens]{url}
\usepackage{graphicx}
\usepackage{natbib}
\usepackage{caption}
\usepackage{booktabs}
\usepackage{array}
\usepackage{amsmath}
\usepackage{amssymb}
\usepackage{amsfonts}
\usepackage{amsthm}
\usepackage{mathtools}
\usepackage{nicefrac}
\usepackage{xcolor}
\definecolor{examplebg}{RGB}{237,242,247}
\usepackage{tikz}
\usetikzlibrary{arrows.meta,positioning,calc}
\usepackage{pgfplots}
\pgfplotsset{compat=1.17}
\usepackage{placeins}

\title{CAGE: Certified Authorization under Typed-Return Uncertainty\\ for Tool-Using Agents}
\author{
    Blaise Delattre\textsuperscript{\rm 1},\quad
    Cong Wang\textsuperscript{\rm 2},\quad
    Yang Cao\textsuperscript{\rm 1}
}
\affiliations{
    \textsuperscript{\rm 1}Department of Computer Science, School of Computing, Institute of Science Tokyo, Tokyo, Japan\\
    \textsuperscript{\rm 2}City University of Hong Kong\\
    \texttt{delattre.b.aa@m.titech.ac.jp}
}

\newtheorem{definition}{Definition}
\newtheorem{theorem}{Theorem}
\newtheorem{proposition}{Proposition}
\newtheorem{remark}{Remark}

\newcommand{\Safe}{\operatorname{Safe}}
\newcommand{\Allow}{\operatorname{Allow}}
\newcommand{\Disc}{D_{\mathrm{disc}}}
\newcommand{\Ball}{B_{d,\varepsilon}}
\newcommand{\Ndisc}{\mathcal N_d}
\newcommand{\Rallow}{R_{\mathrm{allow}}}
\newcommand{\method}{CAGE}
\newcommand{\Callow}{C_{\mathrm{allow}}}
\newcommand{\Uallow}{U_{\mathrm{allow}}}

\definecolor{AGInk}{HTML}{183238}
\definecolor{AGMuted}{HTML}{52666B}
\definecolor{AGPanel}{HTML}{F1F5F2}
\definecolor{AGLine}{HTML}{D8E1DD}
\definecolor{AGTeal}{HTML}{008C89}
\definecolor{AGTealSoft}{HTML}{E7F4F2}
\definecolor{AGAmber}{HTML}{D99A1B}
\definecolor{AGAmberSoft}{HTML}{FFF2D3}
\definecolor{AGSignal}{HTML}{C83F36}
\definecolor{AGSignalSoft}{HTML}{F9E3DF}
\definecolor{AGBlueSoft}{HTML}{E9EEF8}
\definecolor{AGGreenSoft}{HTML}{E8F4E8}

\tikzset{
  agbox/.style={
    draw=AGLine,
    line width=0.45pt,
    rounded corners=2pt,
    align=center,
    inner xsep=5pt,
    inner ysep=4pt,
    minimum height=10mm,
    fill=AGPanel,
    text=AGInk
  },
  agsmall/.style={font=\scriptsize, text=AGMuted},
  agtitle/.style={font=\scriptsize\bfseries, text=AGInk},
  agarrow/.style={
    -{Stealth[length=2.4mm,width=1.7mm]},
    line width=0.65pt,
    draw=AGInk
  },
  agdasharrow/.style={
    -{Stealth[length=2.4mm,width=1.7mm]},
    line width=0.65pt,
    dashed,
    draw=AGSignal
  }
}

\begin{document}

\maketitle

\begin{abstract}
    Tool-using LLM agents act on typed tool returns, records pairing provenance and categorical fields with numerical values.
    Runtime permission gates generally authorize the observed return and action, leaving the decision unprotected against small errors in how the return was bound to its source.
    We ask whether a candidate action stays authorized over a declared neighborhood of plausible correctly bound returns: one admissible binding fault plus bounded numerical drift.
    We prove that certifying the categorical and numerical channels separately does not compose: perturbations that are safe on each channel alone can jointly turn the same action unsafe.
    \method{} certifies this joint neighborhood directly, enumerating the discrete branches exactly and certifying the continuous perturbation within each branch.
    Across synthetic, policy-as-code, regulatory, and real-transaction settings, \method{} removes the in-budget false allows that accurate pointwise gates admit, while keeping a useful fraction of decisions autonomous.
    When the policy is executable, \method{}-Exact certifies the policy itself; otherwise \method{}-Lip and \method{}-RS certify a learned gate under an explicit, measured fidelity assumption.
\end{abstract}

\section{Introduction}

LLM agents operate in a loop (reason, emit a structured tool request, observe, continue~\cite{yao2023react}), so tool observations become inputs to future actions: external bytes influence control flow~\cite{greshake2023indirect,ruan2024toolemu}.
Deployed agent harnesses increasingly place a runtime permission gate between the model and tool execution~\cite{liu2026claudecode,ji2026permissiongate}; current gates are heuristic, pre-execution, and defined over the proposed call: they decide whether it may run, and leave open whether the agent may safely act on what it returns.
Existing defenses concentrate on the untrusted text channel: sanitizing or spotlighting tool outputs~\cite{das2025commandsans,hines2024spotlighting}, structured prompting and instruction hierarchies~\cite{chen2025struq,wallace2024instructionhierarchy}, and information-flow control by design~\cite{debenedetti2025camel}; these act on the text or data-flow channel, complementary to the typed authorization decision we certify (\S\ref{sec:e2e}).
\begin{figure}[t]
    \centering
    \includegraphics[width=\linewidth]{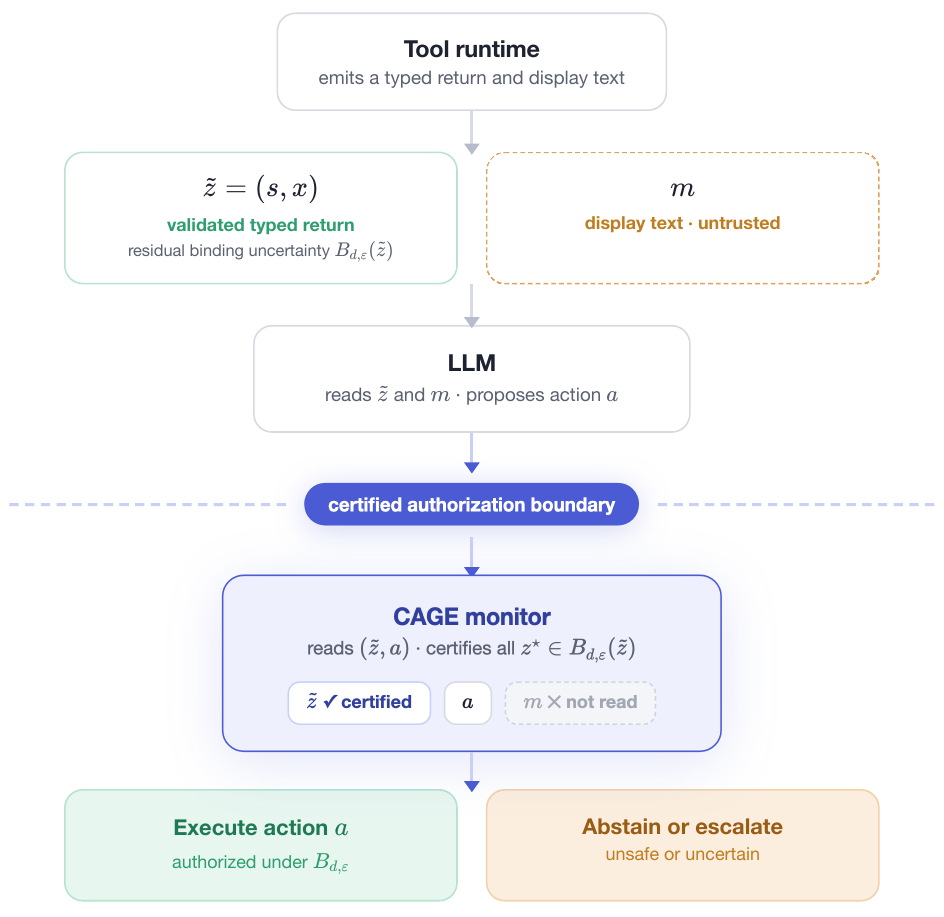}
    \caption{Post-tool-return authorization boundary. The LLM reads the validated record \(\tilde z=(s,x)\) and untrusted text \(m\), which can steer the proposal \(a\); the gate reads only \((\tilde z,a)\): the text channel is excluded architecturally; the typed channel is validated, its residual binding uncertainty certified.}
    \label{fig:post-return-boundary}
\end{figure}
\smallskip
\noindent\fcolorbox{gray!45}{examplebg}{\parbox{0.94\columnwidth}{\footnotesize
        \textbf{Example (joint-gap witness).} The served record reads \(\texttt{source}=\texttt{credit\_check}\), \(\texttt{risk\_score}=0.46\), and the agent proposes \(\texttt{approve\_transaction}\); under credit-check provenance the approval threshold is loose.
        The tag is stale: the record was produced by the sanctions screen, whose threshold is strict; the fresh score sits slightly higher, within validation tolerance.
        The observed record is safe at the point and passes each single-channel check; a state it plausibly stands for is not.
        The pointwise predicate authorizes the served record; evaluation over \(\Ball\) identifies the unsafe plausible return.}}
\smallskip

Existing runtime hooks can host the required decision, but permission gates, runtime enforcers, and LLM judges all evaluate the realized return--action pair pointwise.
We change the object of authorization from the observed point \((\tilde z,a)\) to the joint neighborhood \(\Ball(\tilde z)\): a candidate action is authorized only when it is safe for every correctly bound return in \(\Ball(\tilde z)\).

We therefore study the node \((z,a)\mapsto \Allow(z,a)\), where \(z=(s,x)\) is a typed tool return and \(a\) is a candidate downstream action; the LLM may still propose \(a\), and the runtime gate decides whether executing it is authorized under a specified perturbation model, abstention routing the residual to human review.

The central vulnerability is a joint-gap attack: a return-action pair may be clean-safe at the observed point and pass discrete-only and continuous-only checks, yet fail under a joint perturbation, because the discrete part of the return changes the semantics of the numerical fields, as in the screening example above.
We call such points \emph{joint-gap witnesses} (category \(C\) of the oracle taxonomy): safe at the observed point and under each marginal check, yet unsafe for some \((s',x')\in\Ball(s,x)\).
The defense, our monitor \textbf{\method{}} (Certified Authorization Gate for Execution), certifies the actual joint perturbation set: enumerate the finite discrete neighborhood exactly and certify each continuous branch with a sound backend.
When the policy is executable, \method{}-Exact certifies it directly; otherwise, \method{}-Lip or \method{}-RS certifies a learned gate under a measured fidelity assumption (Table~\ref{tab:backends}).

\paragraph{Contributions.}
\begin{itemize}
    \item \textbf{Robust authorization problem.} We formalize authorization after a tool return as a decision under bounded semantic uncertainty: the runtime observes a validated record \(\tilde z\), but the correctly bound return may be any \(z^\star\in\Ball(\tilde z)\). Proposition~\ref{prop:return} establishes that return-dependent safety requires reading the realized return (\(\approx 44\%\) of matched pairs, Section~\ref{sec:q2}); \method{} replaces pointwise evaluation with certification over the full joint neighborhood (Figure~\ref{fig:eval-locus}).
    \item \textbf{Non-composition.} We prove separate certificates for the categorical and numerical channels do not imply safety over their Cartesian product (Theorem~\ref{thm:noncomp}): joint-gap witnesses are safe at the observed point and under each marginal perturbation, yet unsafe for a combined move; the witness interval has length \(\min(\Delta,\varepsilon)\) and the empirical witness frequency follows this geometry (Section~\ref{sec:q3}).
    \item \textbf{Certified monitor with an assumption ladder.} \method{} enumerates \(\Ndisc(s)\) exactly and certifies each branch by a backend from an assumption ladder, with a backend-independent abstention floor (Proposition~\ref{prop:floor}). For the learned rungs, the certificate is over the gate; it is policy-sound under an explicit gate--policy fidelity condition, measured throughout the evaluation.
    \item \textbf{Measured safety case.} We calibrate the discrete and continuous budgets from injected return-assembly faults, derive a deployment-specific freshness limit, and evaluate adaptive attacks and policy shift. The formal guarantee is separated from its operational preconditions and residual risk; external-validity claims concern existence, mechanism, and frequency within the evaluated settings, and field prevalence is not measured.
\end{itemize}

\section{Background and Related Work}

The harness owns the agent loop's security-relevant boundaries, the raw model reduced to a proposer~\cite{yao2023react,liu2026claudecode} (extended background: Appendix~B).
Deployed permission systems sit at those boundaries but are almost entirely \emph{pre-execution}; a recent stress test reports the same state-changing effect reachable through an unevaluated path~\cite{ji2026permissiongate}.
\emph{Indirect prompt injection} and tool/description poisoning~\cite{greshake2023indirect,zhan2024injecagent,debenedetti2024agentdojo,wang2025mcptox} are a \emph{different stage}: there adversarial bytes steer the model, whereas we assume a tool has already returned a typed object and ask whether acting on it is authorized.

\paragraph{Certified robustness.}
Two certification families transfer to typed authorization.
\emph{Lipschitz-margin certification} certifies any decision whose margin exceeds \(L\varepsilon\)~\cite{tsuzuku2018lipschitz}, with expressive \(1\)-Lipschitz architectures built from orthogonal layers~\cite{anil2019sorting,trockman2021orthogonalizing}; it is our primary learned backend (\method{}-Lip, Section~\ref{sec:backends}).
\emph{Randomized smoothing} gives model-agnostic guarantees under continuous perturbations for black-box classifiers~\cite{cohen2019certified}, with extensions to conformal prediction~\cite{zargarbashi2025onesample}; it backs \method{}-RS.
Enumerate-then-certify parallels certified word-substitution defenses~\cite{jia2019certified,huang2019achieving}.
Smoothing an LLM \emph{transcript} classifier is infeasible (high-dimensional, discrete, semantically unstable); typed returns are favorable: low-dimensional continuous fields, small \(\Ndisc(s)\) (\(8\)--\(18\) at \(d=1\), Table~S29).
Hybrid and discrete smoothing kernels~\cite{lee2019tight,bojchevski2020efficient} pay discrete Monte-Carlo variance and win only when \(\Ndisc(s)\) is too large to enumerate \emph{and} the gate is black-box; neither holds here (Proposition~4).
The technical components build on established certification methods; the contribution lies in the joint-neighborhood authorization formulation, the non-composition result, and the measured safety case.

\paragraph{Runtime enforcement, verification, and what stays uncertified.}
\begin{figure}[t]
    \centering
    \includegraphics[width=\linewidth]{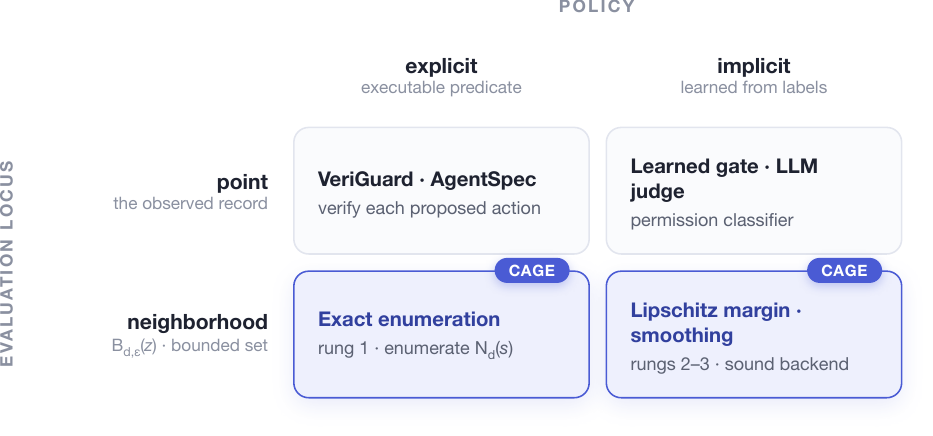}
    \caption{Two axes place the post-tool-return node: \emph{evaluation locus} (the observed point vs.\ the bounded neighborhood \(\Ball(z)\)) and \emph{policy} (executable predicate vs.\ learned from labels); our monitor occupies the neighborhood row, off-axis defenses the text channel, tool descriptions, or training data.}
    \label{fig:eval-locus}
\end{figure}

Two axes separate our node from recent agent-security work (Figure~\ref{fig:eval-locus}): whether a defense reads the typed pair \((z,a)\), and whether it is \emph{certified} under bounded uncertainty between the validated record and its correctly bound return.
A growing line of \emph{runtime enforcement} reads the action and checks it against a policy but offers no uniform guarantee over bounded semantic return uncertainty: VeriGuard formally verifies a synthesized policy and runs a per-action monitor~\cite{miculicich2025veriguard}; AgentSpec enforces trigger/predicate/enforcement rules with no trajectorial analysis~\cite{wang2026agentspec}; PFI and SAFEFLOW import OS-style isolation and information-flow control~\cite{kim2025pfi,li2025safeflow}.
These runtime defenses evaluate the observed action--state pair and do not provide a certificate uniform over \(\Ball(z)\); \method{} provides this uniform neighborhood certificate, while the runtime placement follows this enforcement line.
The certified line targets \emph{different} corruptions, adversarial tool \emph{descriptions} (ToolCert~\cite{yeon2025toolcert}) and \emph{training-set} poisoning (TPA~\cite{ghitu2026tpa}); neither addresses certification over \(\Ball(z)\) at inference, an open surface in a recent SoK~\cite{ling2026securesok}.
Beyond agent security, access-control policy verification~\cite{fisler2005margrave}, attribute-based access control~\cite{hu2014abac}, runtime shielding~\cite{alshiekh2018shielding}, and robust optimization~\cite{bental2009robust} verify, enforce, or optimize the policy itself; \method{} certifies the authorization \emph{decision} over \(\Ball(z)\) at inference.
Our safety predicate follows the oracle-function formalization of agent security~\cite{siu2026formalizing}, realized with a typed gate, since LLM judges measure adversarial robustness no better than chance under shift~\cite{schwinn2026coinflip}.

\section{Problem Setting}

\smallskip
\noindent\fcolorbox{gray!45}{examplebg}{\parbox{0.94\columnwidth}{\footnotesize
        \textbf{What exactly is the task?} Given a validated typed tool return \(\tilde z\) and an action \(a\) proposed by the agent, decide whether \(a\) remains authorized for \emph{every} plausible correctly bound return \(z'\in\Ball(\tilde z)\).
        \\[3pt]
        \textbf{Observed return:} \(\texttt{source}=\texttt{credit\_check}\), \(\texttt{score}=0.46\)\\
        \textbf{Proposed action:} \(\texttt{approve\_transaction}\)\\
        \textbf{Policy:} the \(\texttt{score}\) threshold depends on the \(\texttt{source}\)\\
        \textbf{Uncertainty:} one source-binding error and bounded score drift}}
\smallskip

\subsection{Typed tool returns}

We model a structured tool return as \(z=(s,x)\), where \(s\in\mathcal S\) is the discrete part (tool identity, provenance, categorical fields, status labels) and \(x\in\mathbb R^k\) the continuous part (risk scores, amounts, latencies, prices).

The agent proposes a candidate downstream action \(a\in\mathcal A\); the policy oracle is an action-indexed predicate \(\Safe(z,a)\in\{0,1\}\), and the same return may be safe for \(\texttt{manual\_review}\) yet unsafe for \(\texttt{approve\_transaction}\).

\subsection{Threat model}

The gate reads an observed record \(\tilde z=(s,x)\): typed, schema-validated, integrity- and freshness-checked.
The record is syntactically valid at decision time; residual uncertainty concerns its binding to provenance, fresh values, and policy state.
The state this record binds to (the provenance in force, the fresh score, the resolved policy pack) is denoted \(z^\star\); assembly by adapters, caches, and schema migrations means \(z^\star=(s',x')\) may differ from \(\tilde z\) only within the validated envelope
\[
    \Ball(s,x)
    =
    \left\{
    (s',x'):\Disc(s,s')\le d,\ \|x'-x\|_2\le\varepsilon
    \right\}.
\]

The certified object is \(\Allow(\tilde z,a)=1\Rightarrow\forall z^\star\in\Ball(\tilde z):\ \Safe(z^\star,a)=1\): robust authorization under bounded semantic return uncertainty; hereafter we write \(z\) for the observed record.
The discrete component of \(\Ball\) captures the small admissible \emph{binding} faults of that assembly (a stale provenance tag, a confused policy pack, a read-then-act race).
The continuous component captures the residual numerical uncertainty left after ordinary validation.
Fully fabricated identities, compromised endpoints, and values outside the declared radius are outside the certificate (Table~S32); the typed constructor and validation stack sit in the trusted computing base (TCB; Appendix~C).
The budget is two-layered: integrity+freshness validation \emph{enforces} it against adaptive attackers; \(\varepsilon\) is a p95 residual \emph{calibrated} on the tested fault process (Section~\ref{sec:q6}); \(d=1\) is the measured single-fault granularity (\(d=2\) recommended against fault-inducing adversaries; Tables~S28--S29).

\subsection{Two channels: typed return and untrusted display text}
\label{sec:tm}

In deployment the model consumes \(o=(z,m)\), validated record plus attacker-influenced text: two threat models follow (Figure~\ref{fig:post-return-boundary}). \\
\textbf{TM1 (untrusted text channel):} the adversary controls \(m\) but not \(z\); \(m\) can move the LLM proposal, but a gate reading only \(z\) has \(\Allow(z,a)\) independent of \(m\) \emph{by construction}: the defense is architectural; any typed gate excluding \(m\) inherits it (cross-action mis-selection keeps a \(0.31\) residual, Table~S33). \\
\textbf{TM2 (bounded semantic return uncertainty):} the runtime observes the validated record \(z\), while the correctly bound return may be any \(z^\star\in\Ball(z)\); excluding \(m\) no longer suffices; authorization must hold uniformly over \(\Ball(z)\).

\subsection{Return-dependent authorization requires observing the return}

The requirement is independent of any certificate machinery.

\begin{proposition}[Return dependence]
    \label{prop:return}
    Suppose there exist two possible tool returns \(z_0,z_1\) and a candidate action \(a\) such that \(\Safe(z_0,a)=1\) and \(\Safe(z_1,a)=0\).
    Then any authorization rule that does not inspect the realized return \(z\) cannot correctly authorize \(a\) in both cases.
\end{proposition}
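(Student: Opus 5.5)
The argument is a short indistinguishability argument, and I would make its one formal ingredient explicit first: what it means for a rule not to inspect the realized return. I model an authorization rule as a map \(R\) whose input is everything the runtime supplies except the return. That input is the candidate action \(a\), possibly together with other context \(c\) such as the tool call, its arguments, or the transcript. A rule that \emph{does not inspect the realized return} is one whose output \(R(c,a)\in\{0,1\}\) is measurable with respect to \((c,a)\) alone. So if two executions share \((c,a)\) and differ only in the realized return \(z\), the rule outputs the same decision in both. Correct authorization of \(a\) at return \(z\) means \(R=\Safe(z,a)\), since we want to allow exactly when the action is safe.

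The key step is to place the two returns \(z_0,z_1\) from the hypothesis into two executions with identical pre-return context \(c\) and the same proposed action \(a\). I would justify this by reading ``two possible tool returns'' as two returns that the same call can produce; otherwise the hypothesis would be vacuous for a pre-execution gate. The rule then outputs a single bit \(b=R(c,a)\) in both executions. There are two cases. If \(b=1\), the rule allows \(a\) when the return is \(z_1\), but \(\Safe(z_1,a)=0\), so this is a false allow. If \(b=0\), the rule blocks \(a\) when the return is \(z_0\), but \(\Safe(z_0,a)=1\), so this is a false block. In either case at least one of the two executions is decided incorrectly, and this proves the claim.

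I would close with a remark that sharpens the statement. If ``correctly'' is weakened to soundness only, meaning no false allows, then a return-blind rule must set \(b=0\). It therefore blocks every safe case as well, including \(z_0\). This is the sense in which pre-execution gates either over-allow or over-abstain.

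The main obstacle is not the logic but fixing the formalization honestly. ``Does not inspect'' must mean that the decision is a function of inputs identical across the two executions, and the executions must be chosen to share all non-return inputs. I would state this modeling assumption explicitly in the proof. With it in place, the rest is the two-case pigeonhole argument above.
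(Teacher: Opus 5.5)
Your proposal is correct and is essentially the paper's own indistinguishability argument. A return-blind rule must give the same decision on two executions that differ only in \(z_0\) versus \(z_1\), so allowing is wrong on \(z_1\) and denying is wrong (conservative) on \(z_0\). Your explicit shared-context modeling and your soundness-only remark just spell out what the paper states as ``cannot be decided soundly and non-vacuously.''
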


The elementary proof is in Appendix~A. Proposition~\ref{prop:return} locates the decision after the return becomes available, and existing runtime hooks can implement it; Section~\ref{sec:q2} measures how often return dependence occurs.

\begin{table}[t]
    \centering
    \small
    \caption{Notation used throughout.}
    \label{tab:notation}
    \resizebox{\columnwidth}{!}{%
    \begin{tabular}{ll}
        \toprule
        symbol & meaning \\
        \midrule
        \(\tilde z=(s,x)\), later \(z\)          & observed validated record (\(s\) discrete, \(x\) continuous) \\
        \(z^\star\)                              & a plausible correctly bound return represented by the observed record \\
        \(\Ball(z)\)                             & budget: \(\le d\) discrete edits and \(\ell_2\) drift \(\le\varepsilon\) \\
        \(\Ndisc(s)\)                            & discrete neighborhood \(\{s':\Disc(s,s')\le d\}\) \\
        \(U,\;C,\;R\)                            & unsafe at point / joint-gap witness / robust in budget \\
        \(W=\{\text{clean-safe}\}\setminus R\)   & the witnesses any sound gate must refuse \\
        CFA                                      & certified false-allow rate (oracle-measured) \\
        \(\Rallow\)                              & robust-safe allow rate (certified autonomy) \\
        rung 1 / 2 / 3                           & \method{}-Exact / \method{}-Lip / \method{}-RS backend \\
        TM1 / TM2                                & untrusted text channel / bounded return uncertainty \\
        \bottomrule
    \end{tabular}}
\end{table}

\section{Why Marginal Certificates Do Not Compose}

A natural baseline certifies the two channels separately and allows if both marginal certificates pass; this is unsound.
A continuous certificate at the original discrete state proves \(\forall x'\in B_\varepsilon(x):\Safe((s,x'),a)=1\), and a discrete certificate at the original numerical value proves \(\forall s'\in\Ndisc(s):\Safe((s',x),a)=1\), but the needed joint statement is strictly stronger:
\begin{align*}
                      & \bigl[\forall x'\in B_\varepsilon(x):\Safe((s,x'),a)=1\bigr]                            \\
    \land\quad        & \bigl[\forall s'\in\Ndisc(s):\Safe((s',x),a)=1\bigr]                                    \\
    \nRightarrow\quad & \bigl[\forall s'\in\Ndisc(s),\,\forall x'\in B_\varepsilon(x):\Safe((s',x'),a)=1\bigr].
\end{align*}
The joint-gap attack occupies exactly this logical gap: the unsafe region exists only in the discrete--continuous interaction.

\begin{theorem}[Non-composition of marginal certificates]
    \label{thm:noncomp}
    There exist safety predicates \(\Safe\), points \((s,x)\), actions \(a\), discrete neighborhoods \(\Ndisc(s)\), and continuous balls \(B_\varepsilon(x)\) such that \(\Safe((s,x'),a)=1\) for all \(x'\in B_\varepsilon(x)\) and \(\Safe((s',x),a)=1\) for all \(s'\in\Ndisc(s)\), yet \(\Safe((s',x'),a)=0\) for some \(s'\in\Ndisc(s)\), \(x'\in B_\varepsilon(x)\).
\end{theorem}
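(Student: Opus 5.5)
The statement is existential, so I will prove it with one explicit witness. I take a one-dimensional construction modeled on the screening example from the introduction.

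\textbf{Construction.} Let \(\mathcal S=\{s_0,s_1\}\) with \(\Disc(s_0,s_1)=1\), take \(d=1\) so that \(\Ndisc(s_0)=\{s_0,s_1\}\), and let \(k=1\). Fix a single action \(a\). Define the safety predicate by source-dependent thresholds \(\tau_0>\tau_1\):
\[
\Safe((s_i,x),a)=\mathbf 1[x\le\tau_i].
\]
Thus \(s_0\) plays the loose credit-check threshold and \(s_1\) the strict sanctions threshold. Write \(\Delta=\tau_0-\tau_1>0\). Choose the observed point \(x\) with
\[
\tau_1-\varepsilon<x\le\tau_1 \quad\text{and}\quad x+\varepsilon\le\tau_0,
\]
for example \(x=\tau_1\) with \(\varepsilon\le\Delta\). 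Concretely, take \(\tau_0=1\), \(\tau_1=1/2\), \(x=1/2\), \(\varepsilon=1/4\).

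\textbf{Verification.} There are three facts to check, each by direct substitution.
\begin{enumerate}
\item \emph{Continuous marginal at \(s_0\).} For all \(x'\in B_\varepsilon(x)\) we have \(x'\le x+\varepsilon\le\tau_0\), so \(\Safe((s_0,x'),a)=1\).
\item \emph{Discrete marginal at \(x\).} We have \(x\le\tau_1\le\tau_0\), so \(\Safe((s',x),a)=1\) for both \(s'\in\Ndisc(s_0)\).
\item \emph{Joint move.} Take \(s'=s_1\) and \(x'=x+\varepsilon/2\). Then \(x'\in B_\varepsilon(x)\) and \(x'>\tau_1\), so \(\Safe((s_1,x'),a)=0\).
\end{enumerate}
This establishes the theorem.

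\textbf{Main point of care.} The delicate step is only the choice of constraints: the base point must sit inside the strict threshold while its \(\varepsilon\)-ball stays inside the loose one. Both constraints hold simultaneously exactly when \(\varepsilon\le\Delta\) (up to boundary conventions). I will note this explicitly, because it also explains the witness-interval length \(\min(\Delta,\varepsilon)\) claimed in the contributions. The admissible \(x\) form the interval \((\tau_1-\varepsilon,\tau_1]\cap(-\infty,\tau_0-\varepsilon]\). For \(\varepsilon\le\Delta\) this interval has length \(\varepsilon\). For \(\varepsilon>\Delta\) its length is \(\Delta\), because then the upper endpoint \(\tau_0-\varepsilon\) lies strictly below \(\tau_1\).

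I would remark, but not need for the theorem, that strict versus non-strict inequalities only affect endpoints of this interval. I would also remark that embedding the example in any larger \(\mathcal S\) or \(\mathbb R^k\) is immediate: let the other coordinates be ignored by \(\Safe\).
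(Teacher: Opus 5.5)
Your proof is correct and follows the paper's argument: the same one-dimensional source-shifted threshold construction, checking the continuous marginal at the loose state, the discrete marginal at the observed point, and a joint swap-plus-shift that crosses the strict threshold, giving the same witness interval of length \(\min(\Delta,\varepsilon)\). The paper shifts \(x\to x+\varepsilon\) and argues over the whole interval, whereas you verify one concrete instance with \(x'=x+\varepsilon/2\); one small slip in your side remark is that \(\varepsilon\le\Delta\) is needed only for your particular choice \(x=\tau_1\), not for a witness to exist, since \((\tau_1-\varepsilon,\min(\tau_1,\tau_0-\varepsilon)]\) is nonempty for every \(\Delta,\varepsilon>0\), as your own length computation shows.
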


With one-dimensional \(x\) and a threshold that shifts by \(\Delta\) across the swap, continuous-only safety needs \(x\le q-\varepsilon\) and discrete-only safety needs \(x\le q-\Delta\), yet the joint move is unsafe for \(x>q-\Delta-\varepsilon\); all three hold exactly on the witness interval \(\bigl(q-\Delta-\varepsilon,\ \min(q-\varepsilon,q-\Delta)\bigr]\) of length \(\min(\Delta,\varepsilon)\) (Figure~\ref{fig:joint-gap-witness}; proof in Appendix~A).
The prevalence of \(C\) therefore scales as \(\min(\Delta,\varepsilon)\) (measured in Section~\ref{sec:q3}).

\section{Method}

\method{} certifies the actual joint perturbation set: enumerate the finite discrete neighborhood \(\Ndisc(s)=\{s':\Disc(s,s')\le d\}\) exactly, run a sound per-branch test on the continuous ball at every \(s'\), and allow only when every branch passes,
\[
    \Allow_{\mathrm{\method{}}}(s,x,a)=1 \iff \min_{s'\in\Ndisc(s)}\mathrm{Cert}_\varepsilon(s',x,a)=1 .
\]
A certified allow covers every branch under the continuous budget, up to the branch test's stated confidence: exactly the joint budget the joint-gap attack exploits.
Each branch is read as a plausible correctly bound return \(z^\star\in\Ball(z)\) (TM2, Section~\ref{sec:tm}).
The branch test \(\mathrm{Cert}_\varepsilon\) comes from the assumption ladder of Section~\ref{sec:backends}.

\begin{table*}[t]
    \centering
    \small
    \caption{What each backend certifies: rung 1 is \emph{policy-certified}; rungs 2--3 and the ceiling are \emph{gate-certified}, policy-sound under measured fidelity. Every result is one of: policy certificate / gate certificate / oracle-measured fidelity (\(\texttt{cert\_false\_allow}\), henceforth CFA) / out-of-budget escape (Section~\ref{sec:q6}).}
    \label{tab:backends}
    \resizebox{\textwidth}{!}{%
        \begin{tabular}{lllll}
            \toprule
            backend                           & used when                & certifies                  & cost / slack                                  & failure mode             \\
            \midrule
            \method{}-Exact (rung 1)          & affine-fragment policy   & the policy \(\Safe\)       & none inside \(\Ball\)                         & TCB / budget escape      \\
            \method{}-Lip (rung 2)            & trainable gate           & trained gate \(h_\theta\)  & \(L_{\mathrm{cert}}\varepsilon\) margin slack & gate--policy misfit      \\
            \method{}-RS (rung 3)             & black-box gate           & smoothed gate \(p_\theta\) & \(\sigma\Phi^{-1}(\tau)\) buffer + MC         & misfit + MC conservatism \\
            complete verif.\ / MILP (ceiling) & small ReLU gate, offline & trained gate, exactly      & seconds per record                            & misfit passed through    \\
            \bottomrule
        \end{tabular}}
\end{table*}

\textbf{Worked example.} For the screening example, take \(q=0.60\), \(\Delta=0.08\), \(\varepsilon=0.10\); the observed record \((\texttt{credit\_check},0.46)\) is then a joint-gap witness: \(0.46\le q-\varepsilon\) and \(0.46\le q-\Delta\), yet \(0.46>q-\Delta-\varepsilon\). \method{}-Exact enumerates \(\Ndisc(s)=\{\texttt{credit\_check},\texttt{sanctions\_screen}\}\) and runs the rung-1 test on each branch: \(0.46+0.10=0.56\le0.60\) certifies the \texttt{credit\_check} branch, but \(0.56>0.52\) fails the \texttt{sanctions\_screen} branch. The minimum over branches is \(0\): the action is therefore refused and routed to review, consistent with Proposition~\ref{prop:floor}.

\subsection{Certification backends: an assumption ladder}
\label{sec:backends}

The monitor's discrete channel is \emph{always} exact enumeration over the joint budget \(\Ball(z)\); only the continuous branch test varies along an \emph{assumption ladder} (Table~\ref{tab:backends}): each rung relaxes a requirement on the gate and pays a characterized slack over the soundness floor.
When the policy is implicit we train a binary action-indexed gate on typed returns and oracle labels: a \(1\)-Lipschitz signed-margin network \(h_\theta\) (orthogonal layers~\cite{boissin2025orthogonium}, \(L_{\mathrm{cert}}=1\)) for rung 2, or Gaussian smoothing of a black-box gate for rung 3 (Appendix~D).
Rung 1 allows iff every affine constraint of every branch passes the exact test \(w^\top x+\varepsilon\|w\|_2\le b(s',a)\), in \(O(|\Ndisc(s)|\,mk)\) over a \emph{verified affine fragment} (Definition~1, Proposition~7).
Rung 2 allows iff \(\min_{s'\in\Ndisc(s)}h_\theta(s',x,a)>L_{\mathrm{cert}}\varepsilon\); rung 3 iff \(\min_{s'\in\Ndisc(s)}\underline p(s',x,a)\ge\tau\).
An out-of-fragment policy is detected syntactically and routed to the ceiling or a gate-certified rung, never silently approximated.
We reserve \emph{policy-certified} for rung 1 and call rungs 2--3 (and the ceiling) \emph{gate-certified}.

\method{}-Exact applies when the policy is executable. \method{}-Lip is the default learned backend for implicit policies, while \method{}-RS provides a conservative fallback for black-box gates and bounded batches.
Certification of a learned gate does not establish policy correctness; gate--policy fidelity is evaluated separately.
\method{}-RS's smoothed certificate reduces empirical policy exposure without removing the fidelity assumption, and its per-decision confidence fails over a deployment horizon (Table~S47).
The complete verification / MILP ceiling (CROWN-style verifier~\cite{zhang2018crown} when available; our reported run uses MILP~\cite{tjeng2019evaluating}) is the offline audit rung: it certifies the trained gate's \emph{entire} robust-safe set, identifying whether a low allow rate stems from the learned margin or from the certification backend (Table~S6).
Where \(\Safe\) is executable, rung 1 dominates and the learned rungs serve as ablations; their practical role is the implicit-policy regime, where an executable predicate does not exist by construction (Section~\ref{sec:q5}).

\subsection{Guarantees}
\label{sec:theory}

The guarantees rely on three properties (proofs in Appendix~A); define the worst-branch joint margin \(m(z,a)=\min_{s'\in\Ndisc(s)}\inf\{\|x'-x\|_2:\Safe((s',x'),a)=0\}\), so the robust-safe set is \(R=\{m>\varepsilon\}\).

\begin{itemize}
    \item Exact enumeration with per-branch bounds is a valid \emph{joint} discrete--continuous certificate, with no Monte-Carlo error on the discrete channel (Proposition~4).
    \item The Lipschitz margin test certifies the learned gate's positive decision over the whole joint ball (Proposition~\ref{prop:lip}).
    \item Any sound gate must abstain near the unsafe boundary (Proposition~\ref{prop:floor}).
\end{itemize}

\begin{proposition}[Lipschitz-margin certificate]
    \label{prop:lip}
    If \(\Allow_{\mathrm{Lip}}(z,a)=1\), then \(h_\theta(s',x',a)>0\) for every \((s',x')\in\Ball(z)\): the learned gate's positive decision is certified over the whole joint budget.
\end{proposition}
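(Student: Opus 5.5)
The plan is to unfold the definition of \(\Allow_{\mathrm{Lip}}\) from Section~\ref{sec:backends} and then push the margin condition through the Lipschitz property of \(h_\theta\), one branch at a time. By definition, \(\Allow_{\mathrm{Lip}}(z,a)=1\) with \(z=(s,x)\) means
\[
    \min_{s'\in\Ndisc(s)}h_\theta(s',x,a)>L_{\mathrm{cert}}\varepsilon ,
\]
so \(h_\theta(s',x,a)>L_{\mathrm{cert}}\varepsilon\) holds for every \(s'\in\Ndisc(s)\). Because \(\Ndisc(s)\) is finite and is enumerated exactly, this single inequality covers the whole discrete component of \(\Ball(z)\). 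No union bound or probabilistic step is needed on the discrete channel.

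Now take an arbitrary \((s',x')\in\Ball(z)\). By the definition of \(\Ball\), we have \(\Disc(s,s')\le d\), so \(s'\in\Ndisc(s)\), and \(\|x'-x\|_2\le\varepsilon\). The key assumption is that, for each fixed discrete input \(s'\) and action \(a\), the map \(x\mapsto h_\theta(s',x,a)\) is \(L_{\mathrm{cert}}\)-Lipschitz in \(\ell_2\). This is stated for the orthogonal-layer architecture with \(L_{\mathrm{cert}}=1\). Given that assumption,
\[
    h_\theta(s',x',a)\ \ge\ h_\theta(s',x,a)-L_{\mathrm{cert}}\|x'-x\|_2\ \ge\ h_\theta(s',x,a)-L_{\mathrm{cert}}\varepsilon\ >\ 0 .
\]
Since \((s',x')\) was arbitrary, the claim follows.

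The main point needing care is this Lipschitz step. The network \(h_\theta\) takes \((s',x,a)\) jointly, with \(s'\) and \(a\) presumably entering as embeddings. The Lipschitz property is only needed in the continuous coordinates with \(s'\) and \(a\) held fixed. Fixing these simply translates the network's input by a constant vector, so the composition of \(1\)-Lipschitz orthogonal layers and \(1\)-Lipschitz activations stays \(L_{\mathrm{cert}}\)-Lipschitz in \(x\). This holds provided any input normalization of \(x\) is either absorbed into \(L_{\mathrm{cert}}\) or is itself non-expansive; otherwise \(L_{\mathrm{cert}}\) must be rescaled accordingly. I would state this as an explicit hypothesis on the architecture.

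The argument never compares Lipschitz constants across different discrete values, and it does not need to. Enumeration handles the discrete shift \(s\to s'\), and the Lipschitz bound handles only the continuous drift inside each branch. This is exactly the split that avoids the failure of marginal composition exhibited in Theorem~\ref{thm:noncomp}.
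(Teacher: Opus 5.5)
Your proof is correct and is essentially the paper's argument. You unfold the margin test \(\min_{s'\in\Ndisc(s)}h_\theta(s',x,a)>L_{\mathrm{cert}}\varepsilon\), then apply \(L_{\mathrm{cert}}\)-Lipschitzness of \(h_\theta\) in the continuous channel on each enumerated branch to get \(h_\theta(s',x',a)\ge h_\theta(s',x,a)-L_{\mathrm{cert}}\varepsilon>0\). Your side remark, that Lipschitzness is needed only in \(x\) with \((s',a)\) held fixed and that any feature rescaling must be absorbed into \(L_{\mathrm{cert}}\), matches how the paper applies the result (e.g., the certified constant \(\texttt{fscale}\cdot L_{\mathrm{cert}}\) in the NAB runs).
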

\begin{proposition}[Soundness requires boundary abstention]
    \label{prop:floor}
    If a gate is \emph{sound} for \(\Ball\) (\(\Allow(z,a)=1\Rightarrow\Safe(z',a)=1\) for all \(z'\in\Ball(z)\)), then \(\Allow(z,a)=1\Rightarrow m(z,a)\ge\varepsilon\), strict when the unsafe set is closed; hence \(\Pr[\Allow=1]\le\Pr[m\ge\varepsilon]\), with equality for the oracle robust gate up to the boundary \(\{m=\varepsilon\}\) (empirical mass \(\approx10^{-5}\); Appendix~A). Every joint-gap witness, and every point strictly within worst-branch margin \(\varepsilon\) of the unsafe set, is refused by \emph{any} sound mechanism.
\end{proposition}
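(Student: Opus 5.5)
The plan is to prove the three claims of Proposition~\ref{prop:floor} in order: the pointwise margin bound, then the probability inequality, then the statement about witnesses. Throughout, write \(U_{s'}=\{x':\Safe((s',x'),a)=0\}\) for the unsafe slice on branch \(s'\). Then \(m(z,a)=\min_{s'\in\Ndisc(s)}\operatorname{dist}(x,U_{s'})\), with the convention \(\operatorname{dist}(x,\emptyset)=\infty\).

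\textbf{Margin bound, by contraposition.} Assume \(\Allow(z,a)=1\) but \(m(z,a)<\varepsilon\).
\begin{itemize}
\item Since \(\Ndisc(s)\) is finite (it is enumerated), the minimum is attained at some branch \(s'\) with \(\operatorname{dist}(x,U_{s'})<\varepsilon\).
\item By the definition of the infimum, there is \(x'\in U_{s'}\) with \(\|x'-x\|_2<\varepsilon\).
\item Because \(\Disc(s,s')\le d\), the point \((s',x')\) lies in \(\Ball(z)\) and is unsafe, which contradicts soundness.
\end{itemize}
Hence \(\Allow=1\Rightarrow m\ge\varepsilon\).

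\textbf{Strict version for closed unsafe sets.} If each \(U_{s'}\) is closed and nonempty, the distance from \(x\) is attained at some \(x'\in U_{s'}\). If \(m=\varepsilon\), that attaining point satisfies \(\|x'-x\|_2=\varepsilon\), so it lies in the closed ball. This again contradicts soundness, so in fact \(m>\varepsilon\). This attainment step is the only place where topology enters, and it is the main point that needs care: without closedness an unsafe set can have \(\varepsilon\) as a non-attained infimum, so the inequality cannot be strict in general. I will state this caveat explicitly.

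\textbf{Probability inequality and equality case.}
\begin{itemize}
\item The event inclusion \(\{\Allow=1\}\subseteq\{m\ge\varepsilon\}\) gives \(\Pr[\Allow=1]\le\Pr[m\ge\varepsilon]\) under any distribution on \((z,a)\). This assumes \(m\) is measurable, which I will take as given.
\item For the oracle robust gate \(\Allow^\star=\mathbf 1[m>\varepsilon]\), soundness follows from the definition of the margin. Indeed, \(\|x'-x\|_2\le\varepsilon<\operatorname{dist}(x,U_{s'})\) for every branch \(s'\), so every point of \(\Ball(z)\) is safe.
\item Therefore \(\Pr[m\ge\varepsilon]-\Pr[\Allow^\star=1]=\Pr[m=\varepsilon]\). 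So equality holds up to the boundary mass. The empirical figure of about \(10^{-5}\) is an external measurement, not part of the proof.
\end{itemize}

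\textbf{Witnesses and near-boundary points.} A point with \(m<\varepsilon\) is refused by the margin bound above. For a joint-gap witness, some \((s',x')\in\Ball(z)\) is unsafe by definition. Soundness then forbids \(\Allow=1\) directly, with no need to pass through \(m\). Both arguments apply to every sound mechanism, independent of the backend.
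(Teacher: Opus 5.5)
Your proof is correct and follows the paper's argument essentially step for step. The shared steps are contraposition on \(m<\varepsilon\), attainment of the infimum under closedness to sharpen the bound to \(m>\varepsilon\), the event inclusion \(\{\Allow=1\}\subseteq\{m\ge\varepsilon\}\) with the oracle gate allowing exactly \(\{m>\varepsilon\}\), and direct refusal of joint-gap witnesses from the definition; your explicit checks (finiteness of \(\Ndisc(s)\) so the branch minimum is attained, and soundness of \(\mathbf 1[m>\varepsilon]\)) supply details the paper leaves implicit.
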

\begin{figure}[t]
    \centering
    \includegraphics[width=\linewidth]{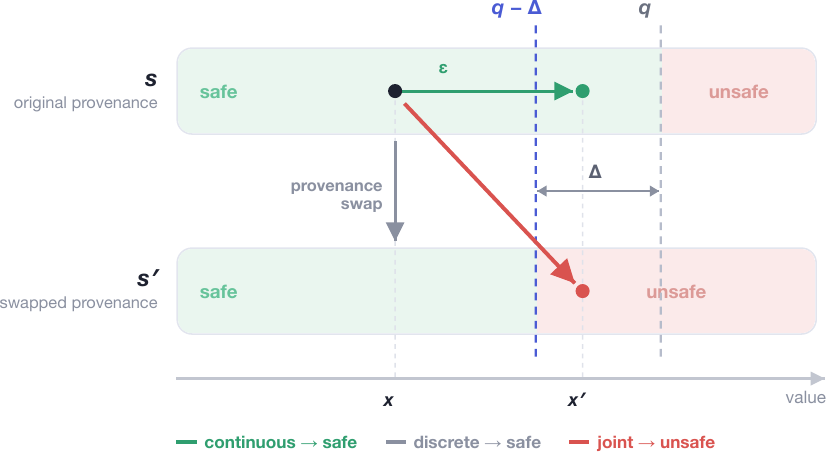}
    \caption{Geometry of a joint-gap witness: safe at the point and under each marginal check, yet the joint move (provenance swap + continuous shift) crosses the shifted threshold \(q-\Delta\); every such point lies within \(\varepsilon\) of the unsafe set, so any sound gate must refuse it (Proposition~\ref{prop:floor}).}
    \label{fig:joint-gap-witness}
\end{figure}

This backend-independent floor unifies the ladder: rung 1 attains it, rungs 2--3 approach it by the slack column of Table~\ref{tab:backends}; CFA measures the gate-overestimation failure (Propositions~5--6, Table~S6).

\section{Experiments}
\label{sec:experiments}

The evaluation first establishes the existence of joint-gap witnesses (Tables~\ref{tab:headtohead}--\ref{tab:systems}), then measures the soundness--autonomy trade-off (Table~\ref{tab:soundness}), and finally tests whether the logical gap produces committed side effects (\S\ref{sec:e2e}).
Additional experiments test whether these results depend on judge capacity, the analytic oracle, the attack strategy, policy stationarity, or the assumed budget (\(d,\varepsilon\), freshness).
The certificate covers a single decision \(\Allow(z,a)\) under the declared budget (\(d{=}1\), \(\varepsilon{=}0.10\); \S\ref{sec:q6}); rung~1 is policy-certified, rungs~2--3 are gate-certified under measured fidelity, and the LLM remains an uncertified proposer. Section~\ref{sec:limitations} discusses the assumptions outside this guarantee.

\begin{table*}[t]
    \centering
    \small
    \caption{Experiment map. Each setting is named by its input data, its policy or oracle, its execution substrate, and its scientific role, so that ``real'' is never ambiguous between real data, a real engine, a third-party policy, and real execution.}
    \label{tab:expmap}
    \resizebox{\textwidth}{!}{%
        \begin{tabular}{lllll}
            \toprule
            setting                    & input data                 & policy or oracle        & execution substrate & role               \\
            \midrule
            synthetic finance/SRE/ops  & generated records          & controlled oracle       & simulator           & geometry           \\
            IEEE-CIS + OPA             & real transaction marginals & authored OPA policy     & OPA                 & primary comparison \\
            OpenFisca                  & generated boundary cases   & third-party policy code & OpenFisca           & external validity  \\
            k8s / MCP / Marble         & constructed episodes       & instrumented policies   & live systems / APIs & committed effects  \\
            fault injection            & corrupted adapter returns  & corresponding oracle    & adapter stack       & budget calibration \\
            \bottomrule
        \end{tabular}}
\end{table*}

\subsection{Setup}
\label{sec:setup}
Table~\ref{tab:expmap} maps each setting to its data, policy, and role; Table~\ref{tab:setup} fixes the shared frame; secondary sweeps live in Appendix~E.
The oracle stratifies each pair by where in \(\Ball(z)\) safety can flip. We focus on three categories: \(U\), unsafe already at the observed point; \(C\), safe under each marginal check yet unsafe under the joint move (Theorem~\ref{thm:noncomp}'s witnesses); and \(R\), robust everywhere in the budget. Category \(C\) tests soundness; category \(R\) tests non-vacuity. We use \(A\)/\(B\) for stratification only.
We report \(\Callow,\Uallow,\Rallow=\Pr[\Allow=1\mid C,U,R]\) and the oracle-measured CFA, a certified allow the oracle deems joint-unsafe. RS parameters are fixed across experiments (Appendix~D).

\begin{table*}[t]
    \centering
    \small
    \caption{Shared experimental frame. Sampling modes: natural, boundary-balanced (tagged stress tests), \(C\)-targeted (Table~S1).}
    \label{tab:setup}
    \resizebox{\textwidth}{!}{%
        \begin{tabular}{ll}
            \toprule
            operating point & \(d=1\), \(\varepsilon=0.10\) (measured and validated in \S\ref{sec:q6})                                                 \\
            baselines       & no gate; learned point gate; exact predicate at the point; marginal composition; oracle                                  \\
            backends        & \method{}-Exact / -Lip / -RS; MILP ceiling (offline); gates: \(1\)-Lipschitz orthogonal (Lip), tabular MLP smoothed (RS) \\
            settings        & synthetic finance/SRE/ops; OPA policy-as-code~\cite{opa2024}; IEEE-CIS; NAB; PSD2/AML; DevOps                            \\
            \bottomrule
        \end{tabular}}
\end{table*}

\subsection{Return dependence and pointwise authorization}
\label{sec:q2}\label{sec:q3}
\textbf{Question.} \emph{How often does authorization depend on the return, and how often do point checks admit joint-gap witnesses?}\\
\textbf{Setup.} Matched pairs sharing domain, action, and schema (\(10{,}000\)/domain, analytic oracle; generator-conditional rates).\\
\textbf{Result.} Return-dependent pairs span \(42.5\)--\(45.6\%\) across finance/SRE/ops, so any return-blind rule errs on the minority class (majority error \(31.6\)--\(34.0\%\)). A learned pre-return baseline errs \(28\)--\(33\%\), a post-return gate \(0.4\)--\(0.7\%\).\\
\textbf{Interpretation.} A pre-return rule cannot distinguish these cases; a post-return gate can (Proposition~\ref{prop:return}).

\begin{table*}[t]
    \centering
    \small
    \caption{Point vs.\ neighborhood on the Proposition~\ref{prop:floor} witness set \(W=\{\text{clean-safe}\}\setminus R\) (real IEEE-CIS + OPA policy; \(6032/10{,}000\)). CFA is the joint-unsafe fraction of \emph{all} certified allows in the run, not of \(W\). \(^\dagger\)\method{}-Lip is gate-certified; the nonzero entry is a base-gate fidelity caveat.}
    \label{tab:headtohead}
    \begin{tabular}{llcc}
        \toprule
        gate                                             & evaluates    & allow-on-\(W\)  & CFA \\
        \midrule
        no gate & --- & \(100\%\)~(by constr.) & ---                             \\
        learned point-gate & point & \(99.3\%\)~(by constr.) & ---                             \\
        exact predicate @ point (OPA) & point & \(100\%\)~(by constr.) & ---                             \\
        LLM point judge (32B)                            & point        & \(92\%\)        & ---                             \\
        budget-aware LLM judge                           & point        & \(44\pm43\%\)   & ---                             \\
        \method{}-Exact (\(d{=}1\) enum.) / \method{}-RS & neighborhood & \(0\%\)         & \(0\)                           \\
        \method{}-Lip                                    & neighborhood & \(1.5\pm0.4\%\) & \(8.2\%^\dagger\)               \\
        \bottomrule
    \end{tabular}
\end{table*}

Because every element of \(W\) is safe at the observed point by construction, point accuracy is not informative on this set. Table~\ref{tab:headtohead} instead distinguishes gates that evaluate the observed point from gates that certify the full neighborhood \(\Ball(z)\).

\textbf{Frequency of joint-gap witnesses.} Theorem~\ref{thm:noncomp}'s geometry is confirmed empirically: sweeping the authored gap \(\delta\) on both real datasets, \(\Pr[C]\) tracks \(\min(\delta,\varepsilon)\) (correlation \(1.0\)/\(0.96\); Table~S44).
Joint-gap witnesses follow the geometry of a provenance-shifted threshold and arise in every setting we measured, at \(3.5\)--\(12\%\) natural within-setting frequency (Table~\ref{tab:soundness}); DevOps is boundary-constructed (Table~S9).
OPA and GoRules ZEN~\cite{goruleszen} reproduce the analytic taxonomy exactly (agreement \(1.000\); Table~S24) and Marble~\cite{checkmarble2024marble} yields \(C\)-set Jaccard \(1.000\) (Table~S21), and a second real adapter reproduces the taxonomy through Kyverno admission (Table~S23), ruling out an oracle artifact.

\textbf{Point and marginal certificates fail.} Neither pointwise exactness nor channel-wise composition suffices: on the witness set \(W=\{\text{clean-safe}\}\setminus R\), which any sound gate must refuse (Proposition~\ref{prop:floor}), the exact OPA predicate allows \emph{all} witnesses, and every other point-evaluating gate (learned, LLM judges, guard model~\cite{inan2023llamaguard}) behaves the same, while every neighborhood-evaluating gate refuses \(W\) with \(\mathrm{CFA}=0\) (Tables~\ref{tab:headtohead}, S17).
A stronger \(36\)B-MoE judge does not close the gap (\(\le61\%\) of \(W\) allowed, unchanged at temperature \(0\)): the results are explained by the evaluation locus: added judge capacity does not provide a neighborhood certificate.
By their published evaluation semantics, deployed enforcers AgentSpec and VeriGuard evaluate only the observed \((z,a)\) and admit every witness by construction (Table~S18).

\subsection{End-to-end agent and system validation}
\label{sec:e2e}
\textbf{Question.} \emph{Do point and marginal failures produce concrete unsafe effects in a running agent, and does \method{} stop them?}\\
\textbf{Setup.} Full chains with committed side effects: finance/SRE money-commit and alert-suppression episodes, a live k8s/Kyverno cluster on a stale-registry TOCTOU surface with a real MCP write path, and Marble's decision API.\\
\textbf{Result.} Table~\ref{tab:systems}: marginal composition and a transcript classifier with point accuracy \(1.00\) commit at the full no-gate rate, pre-execution screening at \(0.44/0.23\) (Table~S19); only the joint certificate and the oracle commit none on every substrate, safe controls non-vacuous.
CaMeL and the joint certificate are orthogonal: each blocks a case the other admits (Table~S18).
Gate overhead is \(\approx10\,\mu\)s vs \(\approx2.5\)\,s LLM decode; protocols: Tables~S18 (AgentSpec/VeriGuard/CaMeL), S20 (k8s/Kyverno), S22 (Marble).

\begin{table}[t]
    \centering
    \small
    \caption{End-to-end validation on deployed substrates: \method{} blocks the unsafe effect while the matching safe control passes (\(\checkmark\); the MCP-write arm has no safe-control counterpart).}
    \label{tab:systems}
    \setlength{\tabcolsep}{2.5pt}
    \begin{tabular}{lll}
        \toprule
        substrate               & ungated unsafe effect                & \method{} \\
        \midrule
        fin/SRE                 & \(1.00/0.40\) (=point=marginal)      & \(0/0\)\,\(\checkmark\)   \\
        k8s stale registry      & unsafe deploy                        & blocked\,\(\checkmark\)   \\
        MCP write (4 LLMs)      & side effect \(4/4\)                  & \(0/4\)   \\
        filesystem MCP quota    & over-quota write                     & blocked\,\(\checkmark\)   \\
        Marble decision API     & \(100/100\) persisted                & \(0/100\)\,\(\checkmark\) \\
        \bottomrule
    \end{tabular}
\end{table}

\subsection{Certificate soundness and autonomy on real settings}
\label{sec:q5}
\textbf{Question.} \emph{Is every certified allow oracle-safe, while the gate still allows a useful fraction of the robust-safe set?}\\
\textbf{Setup.} Primary setting: executable policy-as-code (labels from a real \texttt{opa eval} engine) over real IEEE-CIS marginals; each Table~\ref{tab:soundness} row carries its backend tag.\\

\begin{table}[t]
    \centering
    \small
    \setlength{\tabcolsep}{3pt}
    \caption{Soundness and robust-safe autonomy across the evaluated settings at \(d{=}1\), \(\varepsilon{=}0.10\): every certified allow is oracle-safe (\(\Callow{=}\Uallow{=}\mathrm{CFA}{=}0\)) with \(\Rallow>0\) throughout; zeros carry Wilson-95\% upper bounds (Table~S51). Per-setting sources: Tables~S27 (OpenFisca), S3 (OPA), S1 (IEEE-CIS), S2 (NAB), S9 (PSD2/AML).}
    \label{tab:soundness}
    \begin{tabular}{llccc}
        \toprule
        setting       & backend & \(C\) freq.        & CFA   & \(\Rallow\)                         \\
        \midrule
        OpenFisca     & Exact   & \(3.5\)--\(8.9\%\) & \(0\) & \(22\)--\(34\%\)                    \\
        \midrule
        OPA           & Lip/RS  & \(10\)--\(12\%\)   & \(0\) & \(27\)--\(37\%\)/\(6.5\)--\(7.3\%\) \\
        IEEE-CIS      & RS      & \(6.4\%\)          & \(0\) & \(20\)--\(24\%\)                    \\
        NAB telemetry & Lip     & \(10.9\%\)         & \(0\) & \(100\%\)                           \\
        PSD2/AML      & RS      & \(6.5\)--\(9.8\%\) & \(0\) & \(20\)--\(34\%\)                    \\
        \bottomrule
    \end{tabular}
\end{table}

In Table~\ref{tab:soundness}, CFA measures soundness, whereas \(\Rallow\) measures the fraction of robust-safe decisions retained for autonomous execution.

\textbf{Result: soundness without vacuity.} Across Table~\ref{tab:soundness}, every certified allow is oracle-safe, while every setting retains nonzero robust-safe coverage.
On IEEE-CIS, no fraud is observed in the autonomous tranche, whereas \(3\)--\(4\%\) of blocked transactions are fraudulent.
This contrast captures the role of abstention: the certificate removes decisions near an unsafe boundary rather than merely improving point accuracy.

The same pattern holds in three additional settings (robust-safe coverage \(11\)--\(38\%\); Table~S9); scaling sweeps reach \(63\%\) while uncertified attacks keep \(90\)--\(98\%\) false allows (Table~S8).
Under known ground truth, \method{}-Lip recovers \(65\%\) of the exact robust-safe set with zero measured policy false allows. The corresponding Wilson upper bound is \(2.2\%\) (Table~S7).
The nonzero entry for \method{}-Lip in Table~\ref{tab:headtohead} results from base-gate policy misfit; the neighborhood certificate remains valid for the learned gate (Table~S6).

\textbf{Adaptive attacks and policy shift.} An adaptive adversary knowing \(g_\theta,\Ball,\tau,\Safe\) keeps \(1.7\)--\(15\%\) false allows through a point gate (all search families); \method{} removes them all (Table~S10); adaptive display-channel attacks are likewise removed (Table~S12).
Under policy shift only the uncertified component breaks: learned adaptive-FA reaches \(\mathbf{47.5\%}\) on a held-out threshold, certified \(0\%\) (Table~S11).

\textbf{Implicit policy: a gate-fidelity stress test.} With the real IEEE-CIS \(\texttt{isFraud}\) label as policy (outside the affine fragment), only gate-level certification is available: Lip certifies \(72\%\) of safe transactions, RS \(40\%\); under attack a utility-matched point gate re-admits \(\approx48\%\) of fraud vs \(41\%/4.4\%\) certified (Table~S13).

\textbf{Ablations.} Soundness is invariant on every grid tested (\(\varepsilon\times\tau\), \(\sigma\), engine-labelled, backend-swap; Tables~S3--S5); only utility and cost move; the MILP ceiling certifies \(\Rallow=100\%\), locating the deficit in learned margin (Table~S6).

\textbf{External validity.} A deployed third-party rule carries the threshold idiom itself: OpenFisca-France's social-housing income ceilings vary by zone~\cite{openfisca}, with gaps of EUR~5,029--34,543; the point gate grants all \(1{,}773\) joint-gap witnesses, the exact certificate none (Table~\ref{tab:soundness}; Table~S27).
The same structure is first-class in DMN decision tables and policy code~\cite{gatekeeperlibrary2024,azurekeyvaultpolicy}, and in the AML engines Tazama~\cite{tazama2024} and Marble~\cite{checkmarble2024marble} (Table~S26).
The PSD2/AML thresholds~\cite{psd2rts2018,fincenctr} instantiate the continuous mechanism with source-locked constants (Tables~S9, S24).
What remains unmeasured is prevalence in deployed agentic pipelines: three protocol-frozen scans and a registry adjudication return pipeline-provenance nulls (\(0/31\) documented \(\theta(s)\); Tables~S24, S25).
We therefore claim existence, realizability, and mechanism; field prevalence remains unmeasured.
With four real proposers (three Qwen2.5 variants and the 36B-MoE \texttt{qwen3.6}), the measured certified false-allow rate remains zero for every proposer and domain (Tables~S14--S16).

\textbf{Takeaway.} In the implicit-policy setting, performance is evaluated relative to the soundness limit of Proposition~\ref{prop:floor}.
At \(37\%\) certified autonomy on IEEE-CIS, in-budget fraud in the autonomous tranche is \(4.4\%\), versus \(53.2\%\) for a volume-matched point gate (Table~S49). On natural traffic, \method{}-Exact autonomously clears \(57\%\) and \(46\%\) of IEEE-CIS and NAB decisions, respectively, whereas the stricter learned backends clear \(2\)--\(14\%\) (Table~S50).

\begin{figure}[t]
    \centering
    \includegraphics[width=0.88\linewidth]{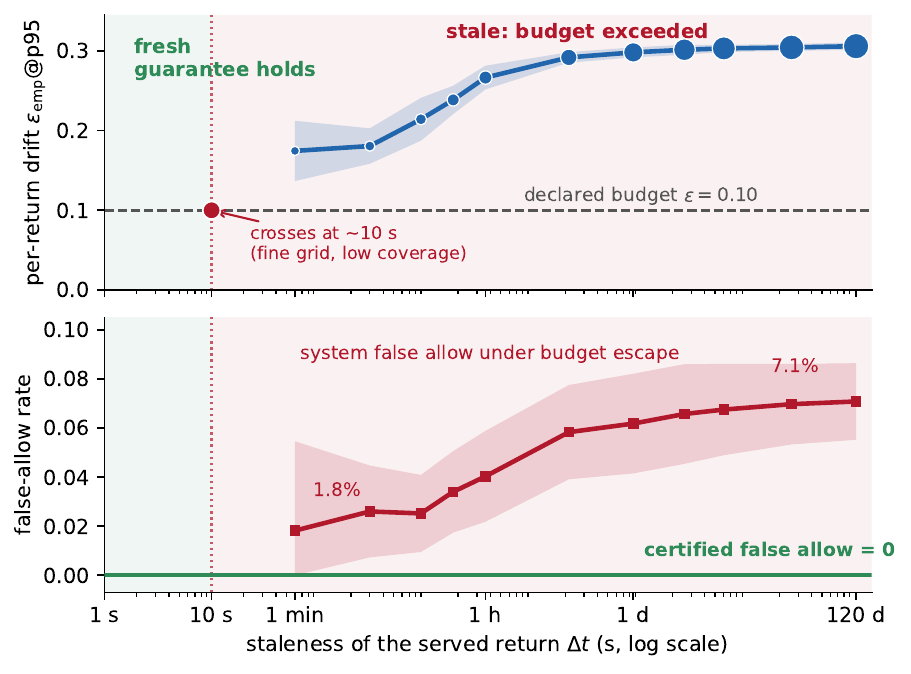}
    \caption{Freshness governs the residual. \emph{Top:} same-entity wall-clock \(\varepsilon_{\mathrm{emp}}@\text{p95}\) crosses the declared \(\varepsilon=0.10\) (dashed) at \(\approx10\)\,s staleness (Table~S35). \emph{Bottom:} the system false-allow grows as the SLA relaxes while certified false-allow stays \(0\): residual risk is bounded by the measured escape.}
    \label{fig:reconcile}
\end{figure}

\subsection{Assumptions and residual risk}
\label{sec:q6}
\textbf{Question.} \emph{Does the declared budget \((d{=}1, \varepsilon{=}0.10)\) match how real return corruption behaves, and what escapes it?}\\
\textbf{Setup.} The budget is an empirical claim about return-assembly failure: seven adapter faults give per-fault \((d,\varepsilon)\), replayed timestamps test freshness, leave-one-out on \(\Ndisc\) tests coverage, and constructor corruption probes the TCB.\\
\textbf{Result.} Table~\ref{tab:faults}: every fault the budget claims to cover is atomic (\(d=1\)) with residual inside \(\varepsilon=0.10\); the two uncovered faults have no discrete footprint and blow past \(\varepsilon\) (validation-stack failures); corrupting the trusted constructor raises verified-point false-allow to \(1.7\%\) (Table~S34); recalibration through a real adapter into Marble holds held-out, leaking only outside the ball (Tables~S33, S39).

\begin{table}[t]
    \centering
    \small
    \caption{Budget measurement from the injected adapter-fault mechanisms (\(4000\)/fault; catalogue in Appendix~C). Covered faults are atomic (\(\Pr[d{=}1]=1.000\)) and the validated residual is \(\varepsilon_{\mathrm{emp}}@\mathrm{p95}\approx0.079\)--\(0.095\) (per-fault: Table~S31); leave-one-out escape is \(0\) for every covered class; compound faults land at \(d\le2\), still sound, a rare triple at \(d=3\) (Table~S28); branch over-declaration in Table~S30.}
    \label{tab:faults}
    \resizebox{\columnwidth}{!}{%
        \begin{tabular}{llll}
            \toprule
            fault class            & discrete covered & continuous residual     & escape mode        \\
            \midrule
            stale provenance       & yes (atomic)     & within \(\varepsilon\)  & none               \\
            policy-pack confusion  & yes (atomic)     & within \(\varepsilon\)  & none               \\
            read-then-act (TOCTOU) & yes (atomic)     & within \(\varepsilon\)  & none               \\
            schema transposition   & no footprint     & exceeds \(\varepsilon\) & validation failure \\
            key collision          & no footprint     & exceeds \(\varepsilon\) & validation failure \\
            \bottomrule
        \end{tabular}}
\end{table}

\textbf{Freshness is the binding precondition.} Same-card staleness crosses \(\varepsilon\) at \(\approx10\)\,s; the system false-allow grows \(1.8\%\to7.1\%\) with the SLA while \(\mathrm{CFA}=0\), so \(\Pr[\text{system false-allow}]\le\Pr[\text{escape}]\) (Figure~\ref{fig:reconcile}; Tables~S35--S36); no well-covered sub-minute SLA meets \(<1\%\): recompute \emph{in-loop}.\\
\textbf{Takeaway: a deployment recipe.} Inject the stack's adapter faults to measure \((d,\varepsilon_{\mathrm{emp}})\); declare \(\Ndisc\), verify leave-one-out escape \(0\); set per-field \(\varepsilon\) and the SLA so \(\varepsilon_{\mathrm{emp}}@\text{p95}\le\varepsilon\) (Tables~S37--S38); audit certified allows; refuse deployment on failed Table~\ref{tab:backends} diagnostics.

\textbf{Dimension and cost.} The regime is low-dimensional typed returns, sound through \(k=50\); above it we project onto the typed policy state (Tables~S46, S40). Certified cost: exact \(<1\,\mu\)s, Lip \(7\)--\(17\)\,ms, RS \(10\)--\(132\)\,ms (\(1.1\)--\(6.5\%\) of decode; Tables~S6, S42); per-field budgets and extensions: Tables~S29/S38/S47.

\section{Discussion, Scope, and Limitations}
\label{sec:limitations}

\method{} formulates authorization over typed tool returns as a joint-neighborhood decision.
The non-composition result identifies pairs that are safe under each marginal check yet unsafe under a combined binding and numerical deviation.
Across policy-engine, regulatory, and real-transaction settings, the joint certificates remove measured in-budget policy false allows while retaining useful autonomy; \method{}-Exact is policy-certified on the verified fragment, \method{}-Lip and \method{}-RS extend it to learned gates under an explicit gate--policy fidelity condition.

The guarantee applies to a single authorization decision and assumes a validated typed-return constructor, complete mediation, an enumerable discrete neighborhood, and continuous uncertainty within the calibrated freshness and numerical budget; budget-escape (\(1.8\%\to7.1\%\), Figure~\ref{fig:reconcile}) and constructor-corruption (\(0\to1.7\%\)) measurements quantify what happens when these conditions weaken.
Cross-turn accumulation, tool selection, prompt-injection resistance, MCP metadata, and multi-step execution remain uncertified; boundary-seeking traffic can inflate abstention but never converts it into a false allow (Table~S41); learned backends require continued gate--policy auditing (false-allow \(\le0.021\) at \(95\%\); Table~S43).

The results support joint-neighborhood authorization as a runtime control wherever typed-return uncertainty can be measured and enforced.
Further work should measure provenance-conditioned policies and binding faults in the field, extend the certificate to sequential decisions, and tie calibration to live validation infrastructure.
\method{} therefore provides a calibrated authorization mechanism for decisions whose safety depends on uncertain typed returns.


\bibliography{references}

\clearpage
\appendix
\setcounter{proposition}{3}
\setcounter{table}{0}
\setcounter{figure}{0}
\renewcommand{\thetable}{S\arabic{table}}
\renewcommand{\thefigure}{S\arabic{figure}}

\noindent These appendices contain the deferred proofs (Appendix~A), extended background (Appendix~B), the fault-mechanism catalogue (Appendix~C), the certification algorithm and experimental setup (Appendix~D), and the additional result tables S1--S51 (Appendix~E).
Proposition and theorem numbering is continuous with the main text: Propositions~1--3 and Theorem~1 are stated in the main text, Propositions~4--7 and Definition~1 here.
Throughout the appendices, \(z\) denotes the validated observed record, following the shorthand introduced in Section~3.2; elements \(z'\in\Ball(z)\) represent plausible correctly bound returns.

\appendix

\section{Deferred proofs}
\label{app:proofs}

\begin{proof}[Proof of Proposition~1 (return dependence)]
    Any pre-return authorization rule observes the user request, transcript, and proposed tool call, but not the realized return.
    Therefore it must assign the same authorization decision to the two executions that differ only in \(z_0\) versus \(z_1\).
    If it allows, it is wrong on \(z_1\); if it denies, it is conservative on \(z_0\).
    Hence return-dependent safety cannot be decided soundly and non-vacuously before the return is observed.
\end{proof}

\begin{proof}[Proof of Theorem~1 (non-composition)]
    Let the continuous variable be one-dimensional and let \(s,s'\) be two discrete states whose thresholds differ by \(\Delta>0\):
    \[
        \begin{aligned}
            \Safe((s,x),a)  & =\mathbf 1[x\le q],        \\
            \Safe((s',x),a) & =\mathbf 1[x\le q-\Delta].
        \end{aligned}
    \]
    Continuous-only safety at \(s\), over the symmetric \(\varepsilon\)-ball, requires \(x+\varepsilon\le q\), i.e.\ \(x\le q-\varepsilon\).
    Discrete-only safety at the observed \(x\) requires \(x\le q-\Delta\).
    But a joint move is unsafe whenever \(x+\varepsilon>q-\Delta\), i.e.\ \(x>q-\Delta-\varepsilon\).
    All three conditions hold simultaneously exactly for
    \[
        x\in\big(\,q-\Delta-\varepsilon,\ \min(q-\varepsilon,\,q-\Delta)\,\big],
    \]
    a nonempty interval of length \(\min(\Delta,\varepsilon)\).
    Any such \(x\) is a joint-gap witness: safe under every continuous-only perturbation at \(s\) and under the discrete swap at the observed \(x\), yet unsafe under the joint perturbation \(s\to s'\), \(x\to x+\varepsilon\).
\end{proof}

\begin{proof}[Proof of Proposition~2 (Lipschitz-margin certificate)]
    Fix a branch \(s'\in\Ndisc(s)\) and any \(x'\) with \(\|x'-x\|_2\le\varepsilon\).
    By \(L_{\mathrm{cert}}\)-Lipschitzness of \(h_\theta\) in the continuous channel,
    \begin{align*}
        h_\theta(s',x',a) & \ge h_\theta(s',x,a)-L_{\mathrm{cert}}\|x'-x\|_2   \\
                          & \ge h_\theta(s',x,a)-L_{\mathrm{cert}}\varepsilon.
    \end{align*}
    If \(\Allow_{\mathrm{Lip}}(z,a)=1\) then \(h_\theta(s',x,a)\ge\min_{s''\in\Ndisc(s)}h_\theta(s'',x,a)>L_{\mathrm{cert}}\varepsilon\), so \(h_\theta(s',x',a)>0\).
    As \((s',x')\) ranges over \(\Ball(z)\), the learned gate stays on the allow side throughout.
\end{proof}

\begin{proof}[Proof of Proposition~3 (soundness requires boundary abstention)]
    Suppose \(m(z,a)<\varepsilon\). Then \(\Ball(z)\) contains a point \(z'=(s',x')\) with \(\Safe(z',a)=0\), and soundness forces \(\Allow(z,a)=0\); contrapositively \(\Allow(z,a)=1\) forces \(m(z,a)\ge\varepsilon\), with no assumption on the unsafe set.
    When the unsafe set is closed the infimum defining \(m\) is attained, so the same argument applies at \(m=\varepsilon\) and the implication sharpens to \(m(z,a)>\varepsilon\).
    Hence \(\Pr[\Allow=1]\le\Pr[m\ge\varepsilon]\); the oracle robust gate allows exactly \(R=\{m>\varepsilon\}\), and \(\Pr[m\ge\varepsilon]=\Pr[m>\varepsilon]\) whenever the boundary \(\{m=\varepsilon\}\) has measure zero (true of our continuous marginals), giving equality.
    A joint-gap witness contains an unsafe point inside \(\Ball(z)\) by definition and is therefore refused, again with no closedness assumption.
\end{proof}

\begin{remark}[Boundary mass on real, quantized marginals]
\label{rem:boundarymass}
Proposition~3's equality clause uses \(\Pr[m=\varepsilon]=0\). On the real pools this holds only approximately: the exact clean safety boundary carries zero empirical mass on IEEE-CIS (\(885{,}809\) record-evaluations) and NAB (\(87{,}556\)); the \(\varepsilon\)-flip knife-edge \(m=\varepsilon\) touches \(7\) and \(1\) records respectively (\(\approx10^{-5}\)); and the one macroscopic atom is the NAB CPU-saturation clip at value \(1.0\) (\(643/87{,}556=0.73\%\)), which lies strictly on the unsafe side and is blocked by both oracles. We therefore read the proposition through its conservative closed inequality: the only effect of the positive boundary mass is a bounded, measured amount of extra abstention (\(\le0.73\%\), dominated by the saturation clip), never a false allow.
\end{remark}

\begin{proposition}[Exactness of enumeration]
    \label{prop:enum}
    Fix \((z,a)\) with \(z=(s,x)\) and \(\Ndisc(s)\) finite. If each per-branch bound \(\underline p(s',x,a)\) is a valid lower confidence bound on \(\inf_{\|x'-x\|_2\le\varepsilon}p_\theta(s',x',a)\) at level \(1-\alpha_{\mathrm{branch}}\) with \(\alpha_{\mathrm{branch}}=\alpha_{\mathrm{FWER}}/|\Ndisc(s)|\), then with probability at least \(1-\alpha_{\mathrm{FWER}}\), \(\underline p_{\mathrm{joint}}(s,x,a)\le\inf_{z'\in\Ball(z)}p_\theta(z',a)\), and the discrete minimization contributes no Monte-Carlo error.
\end{proposition}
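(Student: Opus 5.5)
The plan is to reduce the joint bound to a union bound over the finitely many branches, after first rewriting the joint infimum as a minimum of per-branch infima. Throughout, let \(\underline p_{\mathrm{joint}}(s,x,a)=\min_{s'\in\Ndisc(s)}\underline p(s',x,a)\), which is the quantity used by rung 3.

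\textbf{Step 1: the deterministic decomposition.} By the definition of \(\Ball\), the joint ball is the disjoint union over branches:
\[
\Ball(z)=\bigcup_{s'\in\Ndisc(s)}\{s'\}\times\{x':\|x'-x\|_2\le\varepsilon\}.
\]
Consequently
\[
\inf_{z'\in\Ball(z)}p_\theta(z',a)=\min_{s'\in\Ndisc(s)}\,\inf_{\|x'-x\|_2\le\varepsilon}p_\theta(s',x',a).
\]
This identity involves no randomness at all. The outer operation is a minimum over a finite set, computed exactly by enumeration, so no sampling enters the discrete channel. That is precisely the claim that ``the discrete minimization contributes no Monte-Carlo error.''

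\textbf{Step 2: the union bound.} For each branch \(s'\), let \(E_{s'}\) be the event that \(\underline p(s',x,a)>\inf_{\|x'-x\|_2\le\varepsilon}p_\theta(s',x',a)\). By hypothesis, \(\Pr[E_{s'}]\le\alpha_{\mathrm{branch}}=\alpha_{\mathrm{FWER}}/|\Ndisc(s)|\). The union bound, which needs no independence between the per-branch Monte-Carlo samples, gives
\[
\Pr\Bigl[\textstyle\bigcup_{s'}E_{s'}\Bigr]\le|\Ndisc(s)|\cdot\alpha_{\mathrm{branch}}=\alpha_{\mathrm{FWER}}.
\]

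\textbf{Step 3: monotonicity of the minimum.} On the complement event, \(\underline p(s',x,a)\le\inf_{x'}p_\theta(s',x',a)\) holds for every \(s'\) simultaneously. Since the minimum is monotone, taking minima over \(s'\) on both sides and applying Step 1 yields
\[
\underline p_{\mathrm{joint}}(s,x,a)\le\inf_{z'\in\Ball(z)}p_\theta(z',a).
\]
This holds with probability at least \(1-\alpha_{\mathrm{FWER}}\).

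\textbf{Main obstacle.} The only delicate point is interpretive rather than technical. \(|\Ndisc(s)|\) must be fixed before sampling and must not depend on the random draws, so that the Bonferroni split is legitimate. It is determined by \(s\) and \(d\) alone, so this holds. Finiteness of \(\Ndisc(s)\) is needed both for the union bound and for the infimum over branches to be an attained minimum.
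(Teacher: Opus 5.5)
Your proof is correct and matches the paper's argument step for step. The paper uses per-branch success events where you use failure events, which is only cosmetic, applies a union bound to get probability at least \(1-\alpha_{\mathrm{FWER}}\), and uses the decomposition \(\Ball(z)=\bigcup_{s'\in\Ndisc(s)}\{s'\}\times\{x':\|x'-x\|_2\le\varepsilon\}\) to equate the minimum of per-branch infima with the joint infimum; your remark that \(|\Ndisc(s)|\) must be fixed independently of the Monte-Carlo draws for the Bonferroni split to be legitimate is a useful clarification the paper leaves implicit.
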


\begin{proof}[Proof of Proposition~\ref{prop:enum}]
    By assumption each event \(E_{s'}=\{\underline p(s',x,a)\le\inf_{\|x'-x\|_2\le\varepsilon}p_\theta(s',x',a)\}\) holds with probability at least \(1-\alpha_{\mathrm{branch}}\) with \(\alpha_{\mathrm{branch}}=\alpha_{\mathrm{FWER}}/|\Ndisc(s)|\).
    A union bound over the finitely many branches gives \(\Pr[\bigcap_{s'\in\Ndisc(s)}E_{s'}]\ge 1-\alpha_{\mathrm{FWER}}\).
    On that event,
    \begin{multline*}
        \underline p_{\mathrm{joint}}(s,x,a)=\min_{s'\in\Ndisc(s)}\underline p(s',x,a)\\
        \le\min_{s'\in\Ndisc(s)}\inf_{\|x'-x\|_2\le\varepsilon}p_\theta(s',x',a)=\inf_{z'\in\Ball(z)}p_\theta(z',a),
    \end{multline*}
    where the last equality holds because \(\Ball(z)=\bigcup_{s'\in\Ndisc(s)}\{s'\}\times\{x':\|x'-x\|_2\le\varepsilon\}\).
    The outer minimum is taken over the finite, exactly enumerated set \(\Ndisc(s)\), so it adds no sampling error beyond the per-branch continuous estimates.
\end{proof}

\begin{proposition}[Smoothing buffer and approach to the floor]
    \label{prop:buffer}
    Suppose on each branch \(s'\in\Ndisc(s)\) the base gate realizes the oracle as a halfspace, \(p_\theta(s',x,a)=\Phi(m_{s'}(x)/\sigma)\) with \(m_{s'}\) the signed distance to that branch's boundary. Then in the Monte-Carlo--exact limit the certified gate allows \((z,a)\) iff \(m(z,a)\ge\varepsilon+\sigma\Phi^{-1}(\tau)\); for \(\tau>\tfrac12\) the allow region is contained in \(\{m>\varepsilon\}\) and converges to it as \(\sigma\Phi^{-1}(\tau)\to0\).
\end{proposition}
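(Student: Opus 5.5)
The plan is to reduce the certified gate to a per-branch closed-form test and then take the minimum over branches, as in Proposition~\ref{prop:enum}. In the Monte-Carlo--exact limit the lower bound \(\underline p(s',x,a)\) equals the true infimum \(\inf_{\|x'-x\|_2\le\varepsilon}p_\theta(s',x',a)\), so rung~3 allows iff this infimum is at least \(\tau\) on every branch \(s'\in\Ndisc(s)\).

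\textbf{Per-branch computation.} Under the halfspace hypothesis, on branch \(s'\) we have \(p_\theta(s',x',a)=\Phi(m_{s'}(x')/\sigma)\). Here \(m_{s'}(x')=b-w^\top x'\) for a unit normal \(w\), so \(m_{s'}\) is affine and \(1\)-Lipschitz. Since \(\Phi\) is increasing, the infimum over the \(\varepsilon\)-ball is attained at \(x'=x+\varepsilon w\), which gives \(\Phi((m_{s'}(x)-\varepsilon)/\sigma)\). The branch test \(\Phi((m_{s'}(x)-\varepsilon)/\sigma)\ge\tau\) is then equivalent to \(m_{s'}(x)\ge\varepsilon+\sigma\Phi^{-1}(\tau)\), by strict monotonicity of \(\Phi^{-1}\).

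\textbf{Identifying the joint margin.} For a halfspace, the signed distance \(m_{s'}(x)\), when positive, equals \(\inf\{\|x'-x\|_2:\Safe((s',x'),a)=0\}\). Taking the minimum over branches therefore recovers \(m(z,a)\), and the allow condition becomes \(\min_{s'}m_{s'}(x)\ge\varepsilon+\sigma\Phi^{-1}(\tau)\). The main technical care is the sign convention when \(x\) is already unsafe on some branch. There \(m_{s'}(x)\le0\) while the distance to the unsafe set is \(0\), so the two quantities differ. Both sides still fail the threshold, because \(\varepsilon+\sigma\Phi^{-1}(\tau)>0\) for \(\tau>\tfrac12\), so the equivalence survives. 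I would state this case split explicitly and read \(m\) as the signed margin.

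\textbf{Containment and convergence.} For \(\tau>\tfrac12\) we have \(\beta:=\sigma\Phi^{-1}(\tau)>0\), so \(m\ge\varepsilon+\beta\) implies \(m>\varepsilon\), which gives containment in \(R\). As \(\beta\to0\), the sets \(\{m\ge\varepsilon+\beta\}\) increase, and their union over \(\beta>0\) is exactly \(\{m>\varepsilon\}\). This is monotone set convergence, which I would phrase pointwise: every \(z\) with \(m(z,a)>\varepsilon\) is eventually allowed. This attains the floor of Proposition~\ref{prop:floor} up to the null boundary \(\{m=\varepsilon\}\).
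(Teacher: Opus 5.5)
Your conclusion is right, but the opening step misdescribes the rung-3 gate. The certified gate does not compute \(\inf_{\|x'-x\|_2\le\varepsilon}p_\theta(s',x',a)\). It computes the smoothing bound of Appendix~D, \(\underline p(s',x,a)=\Phi\bigl(\Phi^{-1}(\widehat p_{\mathrm{LCB}}(s',x,a))-\varepsilon/\sigma\bigr)\). The Monte-Carlo--exact limit only replaces \(\widehat p_{\mathrm{LCB}}\) by the smoothed probability \(p_\theta(s',x,a)\) at the centre \(x\). For a general base gate that value is only a lower bound on the infimum over the ball, because the Neyman--Pearson bound is conservative. So ``in the Monte-Carlo--exact limit \(\underline p\) equals the true infimum'' does not follow from the limit. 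As written, you prove the claim for an idealized worst-case gate rather than the deployed one.

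The repair is one line, and it is exactly the paper's whole per-branch argument. Substitute \(p_\theta(s',x,a)=\Phi(m_{s'}(x)/\sigma)\) into the smoothing formula to get \(\underline p(s',x,a)=\Phi\bigl((m_{s'}(x)-\varepsilon)/\sigma\bigr)\), then invert \(\Phi\).

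Once that line is added, your argument is a correct and slightly richer route than the paper's. The paper stops at the substitution. Your direct minimization of \(\Phi(m_{s'}(x')/\sigma)\) over the ball, attained at \(x+\varepsilon w\), additionally shows that the gate's bound coincides with the true worst case. In other words, the smoothing certificate is tight for halfspace base gates, which is why the only slack over the floor is the buffer \(\sigma\Phi^{-1}(\tau)\).

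Your treatment of the sign convention is more careful than the paper's. The paper silently writes \(m(z,a)=\min_{s'}m_{s'}(x)\), although \(m\) was defined as a nonnegative distance. As you note, this identification is harmless exactly when \(\varepsilon+\sigma\Phi^{-1}(\tau)>0\). In the remaining regime (\(\tau<\tfrac12\) with \(\sigma|\Phi^{-1}(\tau)|\ge\varepsilon\)), the unconditional ``iff'' needs \(m\) read as the signed margin. Your containment and convergence steps match the paper's.
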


\begin{proof}[Proof of Proposition~\ref{prop:buffer}]
    In the Monte-Carlo--exact limit the lower confidence bound equals \(p_\theta\), so the per-branch certified bound is \(\underline p(s',x,a)=\Phi\!\bigl(\Phi^{-1}(p_\theta(s',x,a))-\varepsilon/\sigma\bigr)\) and the gate allows iff \(\min_{s'\in\Ndisc(s)}\underline p(s',x,a)\ge\tau\).
    Substituting \(p_\theta(s',x,a)=\Phi(m_{s'}(x)/\sigma)\),
    \begin{align*}
        \underline p(s',x,a)\ge\tau
         & \iff \tfrac{m_{s'}(x)}{\sigma}-\tfrac{\varepsilon}{\sigma}\ge\Phi^{-1}(\tau) \\
         & \iff m_{s'}(x)\ge\varepsilon+\sigma\Phi^{-1}(\tau).
    \end{align*}
    Taking the minimum over branches, the joint rule allows iff \(m(z,a)=\min_{s'}m_{s'}(x)\ge\varepsilon+\sigma\Phi^{-1}(\tau)\).
    Since \(\Phi^{-1}(\tau)>0\) for \(\tau>\tfrac12\) (our deployed \(\tau=0.90\)), the allow region is then contained in \(\{m>\varepsilon\}\) and increases to it as \(\sigma\Phi^{-1}(\tau)\to0\); at \(\tau=\tfrac12\) the containment weakens to \(\{m\ge\varepsilon\}\).
\end{proof}

\begin{proposition}[Lipschitz allow region]
    \label{prop:liparegion}
    Write \(L_{\mathrm{emp}}\le L_{\mathrm{cert}}\) for the gate's empirical continuous Lipschitz constant and \(\delta_{\mathrm{margin}}(x)=m(z,a)-\min_{s'\in\Ndisc(s)}h_\theta(s',x,a)/L_{\mathrm{emp}}\) for its margin underestimation. Then \(\Allow_{\mathrm{Lip}}(z,a)=1\) iff
    \[
        m(z,a)\ >\ \varepsilon+\underbrace{\varepsilon\!\left(\tfrac{L_{\mathrm{cert}}}{L_{\mathrm{emp}}}-1\right)}_{L\text{-slack}}+\underbrace{\delta_{\mathrm{margin}}(x)}_{\text{margin slack}} .
    \]
\end{proposition}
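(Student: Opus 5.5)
The statement is essentially an algebraic rewriting of the rung-2 allow rule from Section~\ref{sec:backends}, \(\Allow_{\mathrm{Lip}}(z,a)=1\iff\min_{s'\in\Ndisc(s)}h_\theta(s',x,a)>L_{\mathrm{cert}}\varepsilon\). The plan is to re-express both sides of that inequality in units of the worst-branch margin \(m(z,a)\), using the definition of \(\delta_{\mathrm{margin}}\) as the bridge. No geometric argument about the gate is needed; the content is already packed into the definitions of \(L_{\mathrm{emp}}\) and \(\delta_{\mathrm{margin}}\).

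Step 1. Write \(h_{\min}(x)=\min_{s'\in\Ndisc(s)}h_\theta(s',x,a)\). Invert the definition of the margin slack to get \(h_{\min}(x)/L_{\mathrm{emp}}=m(z,a)-\delta_{\mathrm{margin}}(x)\). This needs \(L_{\mathrm{emp}}>0\), which I will state as a standing nondegeneracy assumption: if the gate were constant in \(x\), the ratio would be undefined.

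Step 2. Divide the allow inequality \(h_{\min}(x)>L_{\mathrm{cert}}\varepsilon\) by \(L_{\mathrm{emp}}>0\). Dividing by a positive constant preserves both the strict inequality and the direction of the equivalence. This gives \(m(z,a)-\delta_{\mathrm{margin}}(x)>\varepsilon L_{\mathrm{cert}}/L_{\mathrm{emp}}\).

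Step 3. Split \(\varepsilon L_{\mathrm{cert}}/L_{\mathrm{emp}}=\varepsilon+\varepsilon(L_{\mathrm{cert}}/L_{\mathrm{emp}}-1)\) and move \(\delta_{\mathrm{margin}}(x)\) to the right-hand side. This yields exactly the displayed inequality. Because every step is an equivalence, both directions of the ``iff'' hold at once.

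I would add two remarks for interpretation. First, \(L_{\mathrm{emp}}\le L_{\mathrm{cert}}\) makes the \(L\)-slack nonnegative. Second, \(\delta_{\mathrm{margin}}\) can be negative when the gate overestimates the policy margin. That case is precisely the gate--policy misfit measured by CFA, and it is why this proposition describes the allow region without claiming policy soundness. Policy soundness would need \(\delta_{\mathrm{margin}}\ge0\), i.e.\ that the gate is dominated by the oracle. Proposition~\ref{prop:floor} then shows the region lies inside \(\{m>\varepsilon\}\) whenever both slacks are nonnegative.

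The main obstacle is not the algebra but pinning down the definitions so the identity is exact rather than a bound. In particular, \(L_{\mathrm{emp}}\) must be a single fixed positive constant, not branch- or point-dependent, so that dividing the minimum over branches commutes with the scaling. I would state this explicitly before Step 1.
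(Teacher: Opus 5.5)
Your proposal is correct and follows the paper's proof step for step: divide the rung-2 rule \(\min_{s'}h_\theta(s',x,a)>L_{\mathrm{cert}}\varepsilon\) by \(L_{\mathrm{emp}}\), substitute the definition of \(\delta_{\mathrm{margin}}\), and split \(\varepsilon L_{\mathrm{cert}}/L_{\mathrm{emp}}\) into the floor \(\varepsilon\) plus the \(L\)-slack; your explicit \(L_{\mathrm{emp}}>0\) is an assumption the paper leaves implicit. One correction to your closing remark: when both slacks are nonnegative, containment of the allow region in \(\{m>\varepsilon\}\) follows directly from the displayed inequality, as the paper notes, not from Proposition~\ref{prop:floor} (a necessary condition on gates already known to be sound), and \(\delta_{\mathrm{margin}}\ge0\) is sufficient for that containment but not necessary.
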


\begin{proof}[Proof of Proposition~\ref{prop:liparegion}]
    By definition \(\Allow_{\mathrm{Lip}}(z,a)=1\) iff \(\min_{s'}h_\theta(s',x,a)>L_{\mathrm{cert}}\varepsilon\), equivalently \(\min_{s'}h_\theta(s',x,a)/L_{\mathrm{emp}}>(L_{\mathrm{cert}}/L_{\mathrm{emp}})\varepsilon\).
    Substituting \(\min_{s'}h_\theta(s',x,a)/L_{\mathrm{emp}}=m(z,a)-\delta_{\mathrm{margin}}(x)\) from the definition of \(\delta_{\mathrm{margin}}\) gives
    \begin{multline*}
        m(z,a)-\delta_{\mathrm{margin}}(x)>\tfrac{L_{\mathrm{cert}}}{L_{\mathrm{emp}}}\varepsilon\\
        \iff
        m(z,a)>\varepsilon\tfrac{L_{\mathrm{cert}}}{L_{\mathrm{emp}}}+\delta_{\mathrm{margin}}(x).
    \end{multline*}
    Writing \(\varepsilon(L_{\mathrm{cert}}/L_{\mathrm{emp}})=\varepsilon+\varepsilon(L_{\mathrm{cert}}/L_{\mathrm{emp}}-1)\) separates the floor \(\varepsilon\) from the \(L\)-slack; since \(L_{\mathrm{emp}}\le L_{\mathrm{cert}}\) the \(L\)-slack is nonnegative.
    When additionally \(\delta_{\mathrm{margin}}(x)\ge 0\) (the gate never overestimates the oracle margin, the fidelity condition measured empirically by \(\texttt{cert\_false\_allow}\)), the allow region is contained in \(\{m>\varepsilon\}\); a gate that overestimates the margin (\(\delta_{\mathrm{margin}}<0\)) can allow points with \(m\le\varepsilon\), which is the empirical fidelity caveat of Table~\ref{tab:headtohead}.
\end{proof}

\begin{definition}[Verified affine policy fragment]
\label{def:fragment}
A policy is in the \emph{fragment} if, for every action \(a\) and every discrete state \(s'\) in the finite declared vocabulary, its restriction to the continuous fields is a finite conjunction of affine constraints,
\[
\Safe((s',x),a)=1 \iff \bigwedge_{j=1}^{m} w_j(s',a)^\top x \le b_j(s',a),
\]
with parameters read syntactically from the policy source (finite categorical branching over \(s'\); no numeric operations beyond affine comparisons). Membership is decided on the policy's abstract syntax tree; a policy outside the fragment is reported \texttt{unsupported} and routed to the verifier ceiling or a gate-certified rung, never silently approximated.
\end{definition}

\begin{proposition}[Exact robust evaluation on the fragment]
\label{prop:fragment}
For a fragment policy, a branch \(s'\in\Ndisc(s)\), and the \(\ell_2\) budget \(\varepsilon\),
\[
\begin{aligned}
&\forall x'\in B_\varepsilon(x):\ \Safe((s',x'),a)=1 \\
&\iff \bigwedge_{j=1}^{m} w_j(s',a)^\top x+\varepsilon\|w_j(s',a)\|_2\le b_j(s',a).
\end{aligned}
\]
Consequently \(\Allow_{\mathrm{exact}}(z,a)\) is computed exactly in \(O(|\Ndisc(s)|\,mk)\) arithmetic operations for \(k\) continuous fields, and \method{}-Exact is policy-certified on the fragment.
\end{proposition}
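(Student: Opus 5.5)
The plan is to reduce the universal statement over the ball to a single scalar inequality per affine constraint, using the exact support function of the \(\ell_2\) ball. Fix \(a\) and the branch \(s'\), and write \(w_j=w_j(s',a)\) and \(b_j=b_j(s',a)\). By Definition~\ref{def:fragment}, \(\Safe((s',x'),a)=1\) holds iff every constraint \(w_j^\top x'\le b_j\) holds. A universal quantifier commutes with a finite conjunction, so
\[
\forall x'\in B_\varepsilon(x):\ \Safe((s',x'),a)=1
\iff
\bigwedge_{j=1}^{m}\ \sup_{\|x'-x\|_2\le\varepsilon} w_j^\top x'\le b_j.
\]
This reduces the claim to computing each supremum exactly.

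Next I compute the supremum. Write \(x'=x+u\) with \(\|u\|_2\le\varepsilon\). Then \(w_j^\top x'=w_j^\top x+w_j^\top u\). Cauchy--Schwarz gives \(w_j^\top u\le\|w_j\|_2\,\varepsilon\). When \(w_j\ne0\), the bound is attained at \(u=\varepsilon w_j/\|w_j\|_2\), which lies in the closed ball. When \(w_j=0\), both sides equal \(0\). Hence the supremum is a maximum and equals \(w_j^\top x+\varepsilon\|w_j\|_2\), which yields the stated equivalence.

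The one point to check, and the place I expect the only real obstacle, is that attainment matters for the \(\Rightarrow\) direction. If some branch condition fails, the maximiser \(x'=x+\varepsilon w_j/\|w_j\|_2\) is an explicit unsafe point inside \(B_\varepsilon(x)\). This works because the ball is closed and the constraints are non-strict \(\le\). If any constraint were strict, attainment would make the boundary case fail, and the equivalence would need restating; I would note this rather than handle it.

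For the consequences, I combine the branchwise characterization with the decomposition \(\Ball(z)=\bigcup_{s'\in\Ndisc(s)}\{s'\}\times B_\varepsilon(x)\) used in Proposition~\ref{prop:enum}. Safety over the whole joint ball is then equivalent to the conjunction of the branch tests over the finite set \(\Ndisc(s)\). Enumeration is exact, so no error is introduced.

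The cost count is direct. Each constraint needs an inner product and a norm in \(\mathbb R^k\), which is \(O(k)\). There are \(m\) constraints per branch and \(|\Ndisc(s)|\) branches, giving \(O(|\Ndisc(s)|\,mk)\).

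Finally, the test is an equivalence with \(\forall z'\in\Ball(z):\Safe(z',a)=1\), not merely an implication. So \(\Allow_{\mathrm{exact}}\) is sound for the policy \(\Safe\) itself, which makes it policy-certified. It also coincides with the oracle robust gate, so by Proposition~\ref{prop:floor} it attains the abstention floor. The parameters are read syntactically from the policy rather than learned, so no fidelity assumption enters.
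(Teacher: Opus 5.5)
Your proposal is correct and follows the paper's proof essentially step for step: you compute the support function \(\sup_{\|x'-x\|_2\le\varepsilon}w_j^\top x'=w_j^\top x+\varepsilon\|w_j\|_2\), attained at \(x+\varepsilon w_j/\|w_j\|_2\) with \(w_j=0\) treated separately, then split the finite conjunction, enumerate the branches, and count \(m\) inner products over \(k\) fields per branch. One small correction to your emphasis: because the fragment's constraints are non-strict, the \(\Rightarrow\) direction does not actually need attainment, since \(\sup>b_j\) already gives some ball point with \(w_j^\top x'>b_j\); attainment becomes essential only in your strict-constraint aside.
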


\begin{proof}
For affine \(g(x')=w^\top x'\), \(\sup_{\|x'-x\|_2\le\varepsilon}w^\top x'=w^\top x+\varepsilon\|w\|_2\), attained at \(x'=x+\varepsilon w/\|w\|_2\) (the support function of the ball; a constraint with \(w=0\) is \(x\)-independent). Hence constraint \(j\) holds on the whole ball iff \(w_j^\top x+\varepsilon\|w_j\|_2\le b_j(s',a)\), and a finite conjunction holds on the ball iff each conjunct does. Enumerating the \(|\Ndisc(s)|\) branches and evaluating \(m\) inner products over \(k\) fields gives the stated cost. Our threshold policies are the \(m{=}1\), coordinate-\(w\) case; the OPA/Rego and JDM suites compile to this fragment, and the sub-microsecond figure of Table~S42 measures exactly this test.
\end{proof}

\section{Extended background}
\label{app:background}

\paragraph{Tool-using agents and harness boundaries.}
An LLM agent runs a think\(\to\)act-via-tool\(\to\)observe loop. The tool has two faces: the executable function (shell command, HTTP request, SQL query, file edit, payment API, monitoring endpoint), which can cause side effects, and the representation shown to the model (name, description, JSON schema), which can be poisoned, especially when tools are discovered dynamically through protocols such as MCP~\cite{anthropic2024mcp,wang2025mcptox}. In Claude Code the model emits structured tool-use blocks while the harness validates, checks permissions, dispatches tools, optionally sandboxes execution, and stores transcripts~\cite{liu2026claudecode}; the safety layer includes rule-based permission checks, hooks, an auto-mode classifier, built-in and MCP tools, sandboxing, and subagents. The right object to certify is thus the runtime decision boundary exposed by the harness rather than the raw LLM.

\paragraph{MCP, tool poisoning, and indirect prompt injection.}
Indirect prompt injection differs from direct injection: malicious instructions enter through third-party content (emails, web pages, retrieved documents, tool outputs, tool descriptions, or agent-to-agent messages)~\cite{greshake2023indirect}, are read by the model, and influence future privileged actions, as quantified by dedicated benchmarks~\cite{zhan2024injecagent,debenedetti2024agentdojo}. MCP-style discovery expands the surface: an external server may publish tool descriptions that become part of the agent context, so tool-metadata poisoning is close to RAG poisoning over tool descriptions, with the added consequence that selected tools have real side effects~\cite{wang2025mcptox,yeon2025toolcert}. We target a different stage: a tool has already returned a typed object, and the question is whether the agent may act on it.

\paragraph{Permission classifiers and their limits.}
Deployed permission systems are mostly pre-execution: Claude Code auto mode classifies proposed tool calls into safe, unsafe, or manual-approval, with a three-tier structure (read-only auto-allowed, in-project edits may bypass the classifier, shell/external actions passed to a transcript classifier), and a stress test reports high false-negative rates on ambiguous DevOps tasks plus a coverage gap where the same state-changing effect is reachable through an unevaluated tool path~\cite{ji2026permissiongate}. Safety thus depends on where the boundary sits as much as on classifier accuracy: a classifier over tool calls does not certify whether a later decision on a returned object is robust.

\section{Fault mechanisms and a worked example}
\label{app:faults}

Throughout, the typed constructor, schema validation, and the integrity and freshness checks constitute the \emph{trusted computing base} (TCB) assumed by the certificate: the guarantee is conditional on this base, and Section~6.5 measures what happens as it weakens (constructor corruption, freshness escape).

The discrete budget \(d=1\) models a single admissible \emph{binding} fault rather than arbitrary string rewriting: \(s'\in\Ndisc(s)\) may be a stale provenance tag, a schema-version mismatch, a confused policy pack, a multiplexed return attributed to the wrong tool, or a TOCTOU race between an environment label and the numerical fields. The continuous budget models bounded distortion of operational quantities that remain subject to range checks, unit normalization, plausibility filters, signed telemetry, or partial integrity checks, so \(\|x'-x\|_2\le\varepsilon\) captures the residual uncertainty left after ordinary validation rather than free choice of values. Section~6.5 injects seven such faults (provenance/policy-pack/TOCTOU swaps, stale cache reads, schema skew, key collisions, numeric jitter, normalization skew) and measures the induced \((d,\varepsilon)\); re-read jitter (\(\texttt{frac\_in\_B}=0.998\)) and normalizer skew (\(0.974\)) are covered fully, while the faults \(\varepsilon\) misses (schema transposition, key collisions) are exactly those a schema/identity validation layer is meant to catch.
The catalogue measures \emph{single} faults; simultaneous faults in one return-assembly window are excluded by an explicit fault-independence assumption (Section~3.2), with the deterministic backend's \(d\le 3\) coverage (Table~\ref{tab:dsweep}) as the fallback if compounding is realistic in a given stack.

\paragraph{Worked example.}
An infrastructure agent may receive an aggregated deployment return whose discrete part \(s\) (\texttt{env}, \texttt{namespace}, \texttt{tool\_source}, \texttt{policy\_pack}) and numerical part \(x\) (\texttt{cpu\_limit}, \texttt{memory}, \texttt{risk\_score}, \texttt{estimated\_cost}) are produced by a \texttt{helm diff}\(\to\)policy scan\(\to\)cost estimate\(\to\)risk score stack. A low-integrity adapter or stale cache can confuse \(\texttt{staging}\leftrightarrow\texttt{prod}\) or \(\texttt{helm\_diff}\leftrightarrow\texttt{external\_report}\) while the numerical fields stay close because they are normalized, range-checked, or signed: exactly a \(d=1\) provenance swap paired with a within-\(\varepsilon\) numerical move. Likewise, a calibrated financial risk score can be numerically plausible yet unsafe under a stricter source (a sanctions screen rather than a credit check), so the same value carries different authorization meaning under the swapped provenance.

\section{Algorithm and experimental setup}
\label{app:algorithm}
\label{app:setup}

\paragraph{Certification algorithm.}
Every rung shares the discrete step: given a typed return \((s,x)\), a candidate action \(a\), and budgets \((d,\varepsilon)\), compute \(\Ndisc(s)=\{s':\Disc(s,s')\le d\}\) exactly and allow only if every branch \(s'\in\Ndisc(s)\) passes its continuous test; the branch test depends on the backend.
\emph{\method{}-Exact (rung 1).} For a fragment policy (Definition~\ref{def:fragment}), test every affine constraint at its ball-worst point, \(w_j^\top x+\varepsilon\|w_j\|_2\le b_j(s',a)\) (Proposition~\ref{prop:fragment}; differential validation against OPA over \(200{,}000\) returns: Table~S48); no learned gate, no sampling, and none of \((\sigma,\tau,M)\).
\emph{\method{}-Lip (rung 2).} Test the deterministic margin \(h_\theta(s',x,a)>L_{\mathrm{cert}}\varepsilon\); no sampling and no confidence parameter (the guarantee is horizon-invariant, Table~\ref{tab:cx2horizon}).
\emph{\method{}-RS (rung 3).} Sample \(\eta_j\sim\mathcal N(0,\sigma^2I)\) (\(j=1,\dots,M\)), estimate the safe probability of \(g_\theta(s',x+\eta_j,a)\), form the Clopper--Pearson lower confidence bound \(\widehat p_{\mathrm{LCB}}(s',x,a)\) at level \(1-\alpha_{\mathrm{FWER}}/|\Ndisc(s)|\), apply the smoothing bound \(\underline p(s',x,a)=\Phi(\Phi^{-1}(\widehat p_{\mathrm{LCB}})-\varepsilon/\sigma)\), and allow iff \(\min_{s'\in\Ndisc(s)}\underline p(s',x,a)\ge\tau\).
RS experiments fix \(\sigma=0.10\), \(\tau=0.90\), \(M=2000\) Monte-Carlo samples per branch (\(M=10^4\) where stated), and \(\alpha_{\mathrm{FWER}}=10^{-3}\), unless a sweep varies them explicitly (Section~6.1); the exact and Lipschitz backends use none of these parameters.
The RS confidence guarantees are \emph{per decision}: a deployment over a horizon of \(T\) decisions must allocate a lifetime budget across decisions (e.g., \(\alpha_t=\alpha_{\mathrm{total}}/T\) or an alpha-spending schedule) or fall back to the deterministic Lipschitz rung, whose guarantee carries no sampling confidence term.

\paragraph{Normalization.}
All continuous fields are normalized before training and certification, and \(\varepsilon,\sigma\) are defined in this normalized space: bounded operational quantities are affinely scaled to \([0,1]\), and heavy-tailed quantities (amounts, latencies) are log-transformed then quantile- or standard-deviation-scaled. Raw units are used only for display and policy descriptions, so \(\varepsilon=0.10\) is a radius in normalized operational-feature space, which is what makes a single \(\varepsilon,\sigma\) meaningful across heterogeneous fields (risk scores, amounts, latencies, counts, error rates).

\paragraph{Computing infrastructure and software.}
All experiments ran on a single workstation running Ubuntu~24.04 LTS (Linux kernel~6.8), with an Intel Xeon Gold~6326 CPU (2.90\,GHz), 495\,GiB of RAM, and one NVIDIA RTX~PRO~6000 Blackwell GPU (96\,GB VRAM; driver 595.58.03, CUDA~13.2).
The analytic core (exact certification, the frozen detectors, and the frozen-protocol scans) is pure Python~3.12 standard library and requires no GPU; the learned backends use \texttt{torch}~2.8.0 with \texttt{orthogonium}~0.0.4 for the 1-Lipschitz gate, plus \texttt{numpy}~2.4.2, \texttt{scipy}~1.17.1, \texttt{scikit-learn}~1.8.0, and \texttt{pandas}~2.3.2.
Policy engines and substrates: OPA~1.17.1, the GoRules ZEN engine (\texttt{zen-engine}~0.53.0), Kyverno on a kind cluster, and the Marble decision API; the LLM proposers are served locally through Ollama, with tags \texttt{qwen2.5:7b-instruct}, \texttt{qwen2.5-coder:7b}, \texttt{qwen2.5:32b}, and \texttt{qwen3.6:latest} (the last a 36B mixture-of-experts, Q4\_K\_M quantization, \(\approx23\)\,GB).
All timing figures reported in Section~6 were measured on this machine.
Code, exact run commands, and per-experiment capability tags will be made publicly available.

\section{Additional result tables}
\label{app:tables}

These tables support Sections~6.2--6.5; the main text keeps one key table per claim and relocates the secondary ones here.
Zero cells across these tables carry explicit \(n/N\) and Wilson-95\% upper bounds, audited in Table~\ref{tab:wilson}.

\FloatBarrier
\subsection{Core certificate evaluation}
Primary real-data certificate evaluation, backend study, and controlled scaling diagnostics.

\medskip\noindent\textbf{Question.} Is the joint certificate sound and non-vacuous on real transaction marginals?
\textbf{Setting.} Real IEEE-CIS marginals, \(10{,}000\) records per sampling mode (boundary-balanced, natural, \(C\)-targeted).
\textbf{Result.} Sound in every mode (\(\Callow=\Uallow=\texttt{cert\_false\_allow}=0\)) and non-vacuous on \(R\); no fraud among certified allows against a \(2.7\)--\(3.8\%\) base rate (Table~\ref{tab:ieeecis}).

\begin{table}[h]
    \centering
    \small
    \caption{Joint certificate on real IEEE-CIS marginals (Section~6.4). Sound (\(\Callow=\Uallow=\texttt{cert\_false\_allow}=0\)) and non-vacuous on \(R\) across sampling modes; fraud rate all/certified-allowed/blocked: \(3.8/0.0/4.1\%\) (boundary-balanced), \(2.8/0.0/3.3\%\) (natural), \(2.7/0.0/2.8\%\) (\(C\)-targeted); \(10{,}000\) records per mode.}
    \label{tab:ieeecis}
    \resizebox{\columnwidth}{!}{%
        \begin{tabular}{lccccc}
            \toprule
            sampling          & \(|C|\) & \(\texttt{clean\_acc}\) & \(\Rallow\) & \(\texttt{naive\_C}\) & fraud (allowed) \\
            \midrule
            boundary-balanced & 2000    & 0.9985                  & 0.225       & 1.0                   & 0.00            \\
            natural           & 636     & 0.9985                  & 0.200       & 1.0                   & 0.00            \\
            \(C\)-targeted    & 6000    & 0.9990                  & 0.238       & 1.0                   & 0.00            \\
            \bottomrule
        \end{tabular}}
\end{table}

\medskip\noindent\textbf{Question.} Do the phenomenon and the certificate transfer to a second real dataset outside finance?
\textbf{Setting.} NAB CloudWatch CPU telemetry (\(\approx58\)k rows) under a staging/production monitoring policy (\(\delta=0.08\), real q\(0.70\) base threshold); \(n=6000\), \(3\) seeds.
\textbf{Result.} Natural \(C=10.91\pm2.01\%\) with \(600\) audited witnesses and \(\texttt{naive\_C\_falseallow}=1.0\); all three backends are sound, with the numeric-block scaling held-out validated (Table~\ref{tab:nab}).

\begin{table}[h]
    \centering
    \small
    \caption{Second real dataset (Section~6.2): NAB AWS CloudWatch CPU telemetry~\cite{lavin2015nab} (\(\approx 58\)k rows) under a constructed staging/production monitoring policy (\(\theta(s)\) gap \(\delta=0.08\), \(\theta_{\mathrm{base}}\) the real CPU q\(0.70\) quantile); \(n=6000\), \(3\) seeds, \(\varepsilon=0.10\). Natural \(C=10.91\pm2.01\%\); \(600\) audited joint-gap witnesses with zero violations; \(\texttt{naive\_C\_falseallow}=1.0\). Numeric-block feature scaling (\(\texttt{fscale}=4\), certified constant \(\texttt{fscale}\cdot L_{\mathrm{cert}}\)) resolves the Lipschitz underfit soundly. The NAB anomaly label is a diagnostic only, never the certification label.
The \(\texttt{fscale}\) choice is held-out validated: a stratified selection/eval split selects \(\texttt{fscale}^{*}=6.0\) on every seed with eval \(\texttt{cert\_false\_allow}=0\) and \(\Rallow=1.0\) (no soundness cliff on this grid; sound at \(d=2\) as well).}
    \label{tab:nab}
    \resizebox{\columnwidth}{!}{%
        \begin{tabular}{lccc}
            \toprule
            backend                                    & clean acc.\       & \(\texttt{cert\_false\_allow}\) & \(\Rallow\) \\
            \midrule
            Lipschitz (primary, \(\texttt{fscale}=4\)) & \(0.831\pm0.012\) & \(0\)                           & \(1.00\)    \\
            randomized smoothing (MLP)                 & \(0.999\)         & \(0\)                           & \(1.00\)    \\
            exact predicate (ceiling)                  & \(1.000\)         & \(0\)                           & \(1.00\)    \\
            \bottomrule
        \end{tabular}}
\end{table}

\medskip\noindent\textbf{Question.} Do the results survive when authorization labels come from a real policy engine instead of the analytic oracle?
\textbf{Setting.} \texttt{opa eval} over authored provenance-conditioned Rego (\(5\) seeds), then an operating-point sweep of \(24\) cells (\(3\) seeds \(\times\,\varepsilon\times\tau\times\) backend \(\times\) domain, \(10{,}800\) records, per-record FWER correction).
\textbf{Result.} Joint-gap witnesses arise spontaneously at \(\approx10\)--\(12\%\), the certified gate is sound with zero seed variance, and every sweep cell is sound (Tables~\ref{tab:opa}--\ref{tab:opasweep}).

\begin{table}[h]
    \centering
    \small
    \caption{Policy-as-code grounding (Section~6.4). Authorization labels are produced by a real Open Policy Agent engine over authored provenance-conditioned Rego (mean \(\pm\) std over 5 seeds, \(\varepsilon=\sigma=0.10\), \(\tau=0.90\)). Joint-gap witnesses (category \(C\)) arise spontaneously at \(\approx 10\)--\(12\%\); the certified gate is sound with zero seed variance.}
    \label{tab:opa}
    \resizebox{\columnwidth}{!}{%
        \begin{tabular}{lccccc}
            \toprule
            domain  & \(C\) prevalence  & \(\Rallow\)       & \(\texttt{cert\_false\_allow}\) & \(\Callow\)       & \(\Uallow\)       \\
            \midrule
            finance & \(0.112\pm0.007\) & \(0.067\pm0.010\) & \(0.000\pm0.000\)               & \(0.000\pm0.000\) & \(0.000\pm0.000\) \\
            SRE     & \(0.107\pm0.016\) & \(0.073\pm0.018\) & \(0.000\pm0.000\)               & \(0.000\pm0.000\) & \(0.000\pm0.000\) \\
            ops     & \(0.122\pm0.011\) & \(0.065\pm0.017\) & \(0.000\pm0.000\)               & \(0.000\pm0.000\) & \(0.000\pm0.000\) \\
            \bottomrule
        \end{tabular}}
\end{table}

\begin{table}[h]
    \centering
    \small
    \caption{OPA operating-point sweep (Section~6.4). Real \texttt{opa eval} labels with a per-record family-wise correction \(\alpha_{\mathrm{branch}}=\alpha_{\mathrm{FWER}}/|\Ndisc(s)|\); \(3\) seeds \(\times\,\varepsilon\in\{0.05,0.10\}\times\tau\in\{0.90,0.95\}\times\{\text{smoothing, Lipschitz}\}\times 3\) domains (\(10{,}800\) records). All \(24\) cells are sound (\(\Callow=\Uallow=\texttt{cert\_false\_allow}=0\)) while the uncertified gate's attack-false-allow spans \(0.30\)--\(0.47\); only utility moves. A separate \(60\)-point \(\varepsilon\times\tau\) grid over the three synthetic domains (\(\varepsilon\in\{0.03,\dots,0.20\}\times\tau\in\{0.80,\dots,0.95\}\)) and a \(\sigma\in[0.05,0.20]\) ablation are likewise sound at every point, with finance \(\Rallow\) falling \(0.68\to0.49\to0\) as \((\varepsilon,\tau)\) tightens. The three cells where smoothing goes vacuous at \((\varepsilon,\tau)=(0.10,0.95)\) are a utility failure rather than a soundness one; the deterministic Lipschitz backend recovers \(\Rallow=0.36\)--\(0.51\) there.}
    \label{tab:opasweep}
    \resizebox{0.95\columnwidth}{!}{%
        \begin{tabular}{cclccl}
            \toprule
            \(\varepsilon\) & \(\tau\) & backend   & sound & non-vacuous & note                       \\
            \midrule
            0.05            & 0.90     & smoothing & 3/3   & 3/3         &                            \\
            0.05            & 0.95     & smoothing & 3/3   & 3/3         &                            \\
            0.10            & 0.90     & smoothing & 3/3   & 3/3         & Table~\ref{tab:opa}        \\
            0.10            & 0.95     & smoothing & 3/3   & 0/3         & vacuous (\(\Rallow=0\))    \\
            0.05            & 0.90     & Lipschitz & 3/3   & 3/3         &                            \\
            0.05            & 0.95     & Lipschitz & 3/3   & 3/3         &                            \\
            0.10            & 0.90     & Lipschitz & 3/3   & 3/3         & Table~\ref{tab:lip}        \\
            0.10            & 0.95     & Lipschitz & 3/3   & 3/3         & \(\Rallow=0.36\)--\(0.51\) \\
            \bottomrule
        \end{tabular}}
\end{table}

\medskip\noindent\textbf{Question.} How do the certified backends compare on one fixed task, and how much utility does the learned margin leave unclaimed?
\textbf{Setting.} The same trained OPA-track gate across \(75\) multi-seed cells, plus a complete big-\(M\) MILP verification of the identical gate as ceiling (\(3\) seeds, \(\varepsilon=0.10\)).
\textbf{Result.} Every cell is sound; smoothing a bounded-logit \texttt{LipGate} is structurally near-vacuous; the complete ceiling certifies the gate itself, locating the utility deficit in the learned margin rather than in the certificate (Tables~\ref{tab:lip}--\ref{tab:cv}).

\begin{table}[h]
    \centering
    \small
    \caption{Backend comparison on the OPA track (Experiment LIP; Section~6.4). All \(75\) multi-seed cells are sound (\(\Callow=\Uallow=\texttt{cert\_false\_allow}=0\)); \(\varepsilon=0.10\) deterministic entries are mean over five seeds (finance/SRE/ops), others ranges across domains. Smoothing a bounded-logit \texttt{LipGate} is structurally near-vacuous (\(\Rallow\approx 0\)) and omitted as a backend--architecture mismatch. At the strict point the finite-MC tax is \(\approx 0\) (the deterministic backend is within seed variance of \(M{=}10^4\) smoothing on two of three domains) and a certified local gradient bound (\(\approx 0.98\approx L_{\mathrm{cert}}=1\)) leaves no removable \(L\)-slack: the residual deficit to exact utility is learned-margin deficiency. Margin geometry: allowed-\(R\) points sit at mean oracle margin \(0.378\), refused-\(R\) at \(0.189\), straddling \(\varepsilon+\sigma\Phi^{-1}(\tau)=0.228\) (correlation \(0.86\)): the smoothed allow region converges to the abstention floor from above (Proposition~5).}
    \label{tab:lip}
    \resizebox{\columnwidth}{!}{%
        \begin{tabular}{llccc}
            \toprule
            model                                   & backend                                  & \(\Rallow\,(\varepsilon{=}0.10)\) & \(\Rallow\,(\varepsilon{=}0.03)\) & ms  \\
            \midrule
            MLP                                     & \(\Allow_{\mathrm{RS}}\) (\(M{=}2000\))  & \(0.13\)--\(0.19\)                & \(0.40\)--\(0.47\)                & --- \\
            MLP                                     & \(\Allow_{\mathrm{RS}}\) (\(M{=}10000\)) & \(0.23\)--\(0.32\)                & \(0.43\)--\(0.52\)                & 46  \\
            LipGate                                 & \(\Allow_{\mathrm{Lip}}\)                & \(0.35/0.37/0.27\)                & \(0.41\)--\(0.54\)                & 7.8 \\
            \midrule
            \multicolumn{2}{l}{exact (rung 1, OPA)} & \(1.00\)                                 & \(1.00\)                          & \(<0.001\)                              \\
            \bottomrule
        \end{tabular}}
\end{table}

\begin{table}[h]
    \centering
    \small
    \caption{Complete-verification ceiling (Section~5.1): the same trained OPA-track gate certified per branch by big-\(M\) MILP over \(\|\delta\|_2\le\varepsilon\) (circumscribing outer polytope: complete for the polytope, sound and possibly conservative for the ball), vs the Lipschitz and smoothing backends; \(3\) seeds, \(\varepsilon=0.10\), \(\Rallow\) as finance/SRE/ops. CV certifies the gate's entire robust-safe set; its \(0.0085\) oracle false-allow on ops (the multivariate rule) is base-gate fidelity passed through exactly: completeness w.r.t.\ the gate is not soundness w.r.t.\ the policy. MILP branch; \(\alpha,\beta\)-CROWN unavailable in this environment.}
    \label{tab:cv}
    \resizebox{\columnwidth}{!}{%
        \begin{tabular}{lccc}
            \toprule
            backend                      & \(\Rallow\) (fin/SRE/ops) & \(\texttt{cert\_false\_allow}\) & ms/record        \\
            \midrule
            Lipschitz (primary)          & \(0.32/0.35/0.37\)        & \(0\)                           & \(7\)--\(17\)    \\
            smoothing (\(M{=}10^4\))     & \(0.32/0.33/0.34\)        & \(0\)                           & \(61\)--\(132\)  \\
            complete verification (MILP) & \(1.00/1.00/1.00\)        & \(0/0/0.0085\)                  & \(\approx 1700\) \\
            \bottomrule
        \end{tabular}}
\end{table}

\medskip\noindent\textbf{Question.} What does certification cost when engine-exact ground truth is available?
\textbf{Setting.} \(R_{\mathrm{OPA}}\) = the engine-exact robust-safe set; boundary-balanced evaluation, \(\varepsilon=0.10\), \(d=1\), \(3\) seeds, \(800\) records/domain.
\textbf{Result.} Point systems pay a boundary price; the gate-certified Lipschitz backend holds policy false-allow at \(0\) (Wilson-95\% upper \(\le0.022\)) while recovering a substantial fraction of \(R_{\mathrm{OPA}}\) (Table~\ref{tab:cx1fidelity}).

\begin{table}[h]
\centering
\small
\caption{Gate--policy fidelity benchmark under known ground truth (EXP-CX1; Section~6.4). \(R_{\mathrm{OPA}}\) = engine-exact robust-safe set; boundary-balanced evaluation, \(\varepsilon=0.10\), \(d=1\), 3 seeds, \(n_{\mathrm{eval}}=800\)/domain. Point systems pay a boundary price; the gate-certified Lipschitz backend holds policy false-allow \(0\) (Wilson 95\% upper \(\le0.022\)) while recovering \(\approx0.65\) of \(R_{\mathrm{OPA}}\) (the quantified learning tax vs the policy-certified exact backend). RS at \(n_{\mathrm{mc}}=800\) is vacuous (documented budget choice; Lipschitz is the primary).}
\label{tab:cx1fidelity}
\resizebox{\columnwidth}{!}{%
\begin{tabular}{lcccc}
\toprule
system & policy false-allow & robust-safe coverage & Jaccard vs \(R_{\mathrm{OPA}}\) & point acc.\ \\
\midrule
OPA at the point        & \(0.30\)  & ---            & ---            & \(1.00\) \\
point MLP (uncertified) & \(0.23\)  & ---            & ---            & \(\approx0.91\) \\
\method{}-Exact         & \(0.000\) & \(1.0\)        & \(1.0\)        & \(1.00\) \\
\method{}-Lip           & \(0.000\) (\(\le0.022\)) & \(\approx0.65\) & \(0.65\) & \(\approx0.91\) \\
\method{}-RS (\(n_{\mathrm{mc}}{=}800\)) & \(0.000\) & \(\approx0.016\) & \(\approx0.016\) & --- \\
\bottomrule
\end{tabular}}
\end{table}

\medskip\noindent\textbf{Question.} Is the joint gap an artifact of one scale?
\textbf{Setting.} Three controlled sweeps \(\times\,8000\) synthetic records, \(d=1\), action-partitioned tool vocabularies.
\textbf{Result.} Joint-gap prevalence stays at \(7.8\)--\(9.7\%\), the certificate is sound at every point, and the uncertified learned gate's attack false-allow spans \(0.90\)--\(0.98\) --- the \(90\)--\(98\%\) anchor (Table~\ref{tab:scaling}).

\begin{table}[h]
\centering
\small
\caption{Scaling study (Experiment A; anchors the uncertified \(90\)--\(98\%\) attack success). Three sweeps \(\times\,8000\) records, \(d=1\), synthetic typed tools with action-partitioned vocabularies. Joint-gap prevalence stays \(7.8\)--\(9.7\%\), the certificate is sound at every point, and the uncertified learned gate's \(\texttt{attack\_false\_allow}\) spans \(0.90\)--\(0.98\) while naive composition is \(1.0\).}
\label{tab:scaling}
\resizebox{\columnwidth}{!}{%
\begin{tabular}{lccccc}
\toprule
sweep & \(C\) prevalence & clean acc & \(\texttt{cert\_false\_allow}\) & \(\Rallow\) & uncert.\ \(\texttt{attack\_FA}\) \\
\midrule
tool vocabulary \(K\in\{4,\dots,32\}\)   & \(7.8\)--\(9.7\%\) & \(0.974\)--\(0.997\) & \(0\) & \(0.35\)--\(0.625\) & \(0.90\)--\(0.98\) \\
numeric dim.\ \(k\in\{2,\dots,50\}\)     & \(7.8\)--\(9.7\%\) & \(0.974\)--\(0.997\) & \(0\) & \(0.35\)--\(0.625\) & \(0.90\)--\(0.98\) \\
categorical \(|X_1|\in\{2,4,8\}\)        & \(7.8\)--\(9.7\%\) & \(0.974\)--\(0.997\) & \(0\) & \(0.35\)--\(0.625\) & \(0.90\)--\(0.98\) \\
\bottomrule
\end{tabular}}
\end{table}

\medskip\noindent\textbf{Question.} Where do the PSD2/AML anchors and the additional realistic-schema and DevOps evaluations come from?
\textbf{Setting.} PSD2/AML source-locked regulatory thresholds (natural sampling), realistic-schema threshold-gap policies (\(50{,}000\) records/domain), and benchmark-grounded DevOps fixtures.
\textbf{Result.} Each setting's provenance, sampling scheme, and oracle are pinned per row (Table~\ref{tab:settinganchors}).

\begin{table}[h]
\centering
\small
\caption{Per-setting anchors for PSD2/AML and the additional realistic-schema and DevOps evaluations (Experiments REG, B, G). PSD2/AML: source-locked regulatory thresholds (PSD2 Art.~11/16/18 exemption values, US CTR), authored executably, natural sampling. Realistic schemas: \(50{,}000\) records/domain, synthetic threshold-gap policies. DevOps: \(9{,}600\) benchmark-grounded fixture records under the \texttt{hybrid\_policy} oracle (boundary-constructed, hence the higher witness share); its \(\Rallow\approx0.11\) is a conservative lower bound (\(\approx27\%\) analytically certifiable).}
\label{tab:settinganchors}
\resizebox{\columnwidth}{!}{%
\begin{tabular}{lcccc}
\toprule
setting & joint-gap prevalence & \(\texttt{cert\_false\_allow}\) & \(\Rallow\) (\(\varepsilon{=}0.10\)) & \(\Rallow\) (\(\varepsilon{=}0.03\)) \\
\midrule
PSD2/AML (REG)      & \(6.5\)--\(9.8\%\) (natural)  & \(0\) & \(0.20\)--\(0.34\) & \(0.44\)--\(0.56\) \\
realistic schemas (B) & \(8.0\)--\(8.6\%\)          & \(0\) & \(0.24\)--\(0.38\) & --- \\
DevOps fixture (G)  & \(1661/9600\) (constructed)   & \(0\) & \(\approx0.11\)    & --- \\
\bottomrule
\end{tabular}}
\end{table}

\medskip\noindent\textbf{Question.} Does an adaptive attacker who knows the gate break it?
\textbf{Setting.} Adaptive in-budget attack (TM2) against a locally-correct learned gate and against the certified gate.
\textbf{Result.} The learned gate admits true-unsafe points; the certified gate removes them all (\(0.000\)) while staying non-vacuous on \(R\) (Table~\ref{tab:tm2}).

\begin{table}[h]
    \centering
    \scriptsize
    \setlength{\tabcolsep}{4pt}
    \caption{Adaptive attack on the gate (Section~6.4). A locally-correct learned gate admits true-unsafe points; the certified gate removes them (\(0.000\)) while staying non-vacuous on \(R\).}
    \label{tab:tm2}
    \resizebox{\columnwidth}{!}{%
        \begin{tabular}{lcccccc}
            \toprule
            domain             & learned adv.\ FA & certified adv.\ FA & \(\Callow\) & \(\Uallow\) & \(\Rallow\) & \(\texttt{cert\_false\_allow}\) \\
            \midrule
            finance compliance & 0.017            & 0.000              & 0.000       & 0.000       & 0.495       & 0.000                           \\
            SRE monitoring     & 0.150            & 0.000              & 0.000       & 0.000       & 0.280       & 0.000                           \\
            ops security       & 0.043            & 0.000              & 0.000       & 0.000       & 0.230       & 0.000                           \\
            \bottomrule
        \end{tabular}}
\end{table}

\medskip\noindent\textbf{Question.} Does soundness survive held-out shift?
\textbf{Setting.} Held-out policies, schemas, and thresholds, with clean accuracy and certified false allow reported for each condition.
\textbf{Result.} Certified and certified-adversarial false allow are \(0\) in every condition; under threshold shift the learned gate degrades sharply, and both effects are reported rather than averaged away (Table~\ref{tab:generalization}).

\begin{table}[h]
    \centering
    \scriptsize
    \setlength{\tabcolsep}{4pt}
    \caption{Held-out generalization (Section~6.4). Soundness holds across all conditions (\(\texttt{cert\_false\_allow}=\) cert.\ adv.\ FA \(=0\)); under threshold shift the learned gate degrades sharply and utility moves, and both are reported. The headline numbers are reported inline (Section~6.4); this table adds clean accuracy and \(\texttt{cert\_false\_allow}\).}
    \label{tab:generalization}
    \resizebox{\columnwidth}{!}{%
        \begin{tabular}{lccccccc}
            \toprule
            condition          & \(\texttt{clean\_acc}\) & \(\Rallow\) & \(\Callow\) & \(\Uallow\) & \(\texttt{cert\_false\_allow}\) & learned adv.\ FA & cert.\ adv.\ FA \\
            \midrule
            in-distribution    & 1.00                    & 0.433       & 0.00        & 0.00        & 0.00                            & 0.017            & 0.00            \\
            held-out-threshold & 0.95                    & 0.567       & 0.00        & 0.00        & 0.00                            & 0.475            & 0.00            \\
            held-out-tool      & 1.00                    & 0.233       & 0.00        & 0.00        & 0.00                            & 0.092            & 0.00            \\
            \bottomrule
        \end{tabular}}
\end{table}

\medskip\noindent\textbf{Question.} What does each of the two channels contribute on real features?
\textbf{Setting.} IEEE-CIS features with display-channel attacks (fixed set of \(8\), adaptive set of \(13\); \(n=120\)/cell) and the \(\Ball\) certificate over the typed return (category-balanced \(n=600\)).
\textbf{Result.} Point gates are locally correct yet keep \(\Callow=1.0\); only the joint certificate closes the joint gap (Table~\ref{tab:realdata}).

\begin{table*}[t]
    \centering
    \small
    \caption{Real-data two-channel run on IEEE-CIS features (Section~6.4): display-channel attacks (fixed set of 8 and adaptive set of 13) against the untrusted text, and the \(\Ball\) certificate over the typed return; constructed provenance-threshold policy, \(n=120\)/cell (display) and category-balanced \(n=600\) (certificate). Point gates are locally correct yet keep \(\Callow=1.0\); only the certified gate is sound, at \(\Rallow=0.20\).}
    \label{tab:realdata}
        \begin{tabular}{lcccc}
            \toprule
            gate      & display \(\texttt{UnsafeExec}_U\) (best-of-\(K\)) & \(\Callow\) & \(\texttt{cert\_false\_allow}\) & \(\Rallow\) \\
            \midrule
            none      & \(1.00\)                                          & \(1.00\)    & \(0.80\)                        & \(1.00\)    \\
            rule      & \(0.00\)                                          & \(1.00\)    & \(0.75\)                        & \(1.00\)    \\
            learned   & \(0.00\)                                          & \(1.00\)    & \(0.75\)                        & \(1.00\)    \\
            certified & \(0.00\)                                          & \(0.00\)    & \(0.00\)                        & \(0.20\)    \\
            \bottomrule
        \end{tabular}
\end{table*}

\medskip\noindent\textbf{Question.} What remains when there is no executable policy at all?
\textbf{Setting.} Policy = the real IEEE-CIS \(\texttt{isFraud}\) label (gate AUC \(\approx0.72\)); exact and marginal certificates are undefined in this regime.
\textbf{Result.} \method{}-RS reduces empirical fraud exposure; the formal certificate remains a certificate of the smoothed gate, with the gate--policy fidelity gap made explicit (Table~\ref{tab:implicit}).

\begin{table}[h]
    \centering
    \small
    \caption{Implicit-policy regime as a gate-fidelity stress test (Section~6.4): policy = the real IEEE-CIS \(\texttt{isFraud}\) label (gate AUC \(\approx 0.72\); no executable predicate, so exact and marginal certificates are undefined). \method{}-RS reduces empirical fraud exposure in this low-fidelity regime, and the formal certificate remains a certificate of the smoothed gate: the gate--policy fidelity assumption stays. The deterministic backend keeps a stable certified fraction at low sample size where smoothing collapses; ground truth is the held-out fraud label (empirical, never a predicate-soundness theorem).}
    \label{tab:implicit}
    \resizebox{\columnwidth}{!}{%
        \begin{tabular}{lccc}
            \toprule
                                                     & \method{}-Lip & \method{}-RS       & point gate (matched) \\
            \midrule
            certified fraction of safe txns          & \(0.72\)      & \(0.40\)           & ---                  \\
            fraud false-allow under \(\Ball\) attack & \(0.41\)      & \(\mathbf{0.044}\) & \(0.47\)--\(0.49\)   \\
            \bottomrule
        \end{tabular}}
\end{table}

\FloatBarrier
\subsection{End-to-end systems and comparators}
Real LLM proposers, deployed-defense comparators, and committed side effects on real engines.

\paragraph{Engine-verified joint-gap witnesses (Section~6.2).}
The provenance-conditioned idiom \(\texttt{risk\_score} < \theta(\text{provenance})\) is authored executably with the real generation constants (\(\theta=0.488808\), \(\delta=0.08\)), once as Rego and once as a JDM decision table, and every point of the probe set (clean, the \(d{=}1\) loose\(\leftrightarrow\)strict provenance swap, the \(+\varepsilon\) numeric move, and the joint swap\(+\varepsilon\)) is labelled by the engine itself over real boundary-balanced IEEE-CIS transactions.
Both engines, OPA~1.17.1 and GoRules ZEN~0.53 (a Rust decision engine used in fintech), label \(800\) category-\(C\) witnesses over \(4000\) transactions each and reproduce the analytic taxonomy exactly (engine--analytic agreement \(1.000\), \(C\)-set Jaccard \(1.000\)).
The \(20\%\) rate is the engineered balanced-set rate; natural prevalence is \(\approx 6.4\%\) (Table~\ref{tab:ieeecis}).
Neither engine is a purpose-built fraud/AML engine: the claim is engine independence of the joint gap, and deployed-rule discovery remains a separate question.
A purpose-built AML engine confirms the same labels (Table~\ref{tab:marble}), and a second independent adapter reproduces both the taxonomy and the marginal-vs-joint verdicts through real Kyverno admission (Table~\ref{tab:a7}).

\medskip\noindent\textbf{Question.} Is the threat model-dependent while the defense is model-independent?
\textbf{Setting.} Four real proposers (three Qwen2.5 variants and the 36B-MoE \texttt{qwen3.6}) proposing actions on IEEE-CIS, mock control included.
\textbf{Result.} Undefended leak varies substantially across models and domains; the measured certified false-allow rate is \(0.000\) for every model and domain, and gate-level outcomes are identical across proposers, which pre-gate approve \(100\%\) of category-\(C\) transactions (Tables~\ref{tab:q7model}--\ref{tab:ieeecisagent}).

\begin{table}[h]
    \centering
    \small
    \caption{Model-dependent threat vs.\ model-independent defense (Section~6.4). Undefended leak (oracle \(\texttt{cert\_false\_allow}\) under the joint-gap attack) varies substantially across models and domains; the certified gate is \(0.000\) for every model and domain. Parse rate \(1.000\) for all models; certified \(R_{\mathrm{exec}}\) (finance) \(0.342\) (32B) and \(0.350\) (\texttt{qwen3.6}).}
    \label{tab:q7model}
    \resizebox{\columnwidth}{!}{%
        \begin{tabular}{llcc}
            \toprule
            model                        & domain  & no gate (joint-gap) & certified \\
            \midrule
            \texttt{qwen2.5-coder:7b}    & finance & 0.385                   & 0.000     \\
            \texttt{qwen2.5-coder:7b}    & SRE     & 0.639                   & 0.000     \\
            \texttt{qwen2.5:7b-instruct} & finance & 0.087                   & 0.000     \\
            \texttt{qwen2.5:7b-instruct} & SRE     & 0.118                   & 0.000     \\
            \texttt{qwen2.5:32b}         & finance & 0.158                   & 0.000     \\
            \texttt{qwen2.5:32b}         & SRE     & 0.448                   & 0.000     \\
            \texttt{qwen3.6} (36B MoE)   & finance & 0.474                   & 0.000     \\
            \texttt{qwen3.6} (36B MoE)   & SRE     & 0.436                   & 0.000     \\
            \bottomrule
        \end{tabular}}
\end{table}

\begin{table}[h]
    \centering
    \small
    \caption{Real-LLM action proposal on IEEE-CIS (Section~6.4), unsafe execution rate. Gate-level outcomes are identical across mock, 7B, and 32B; the model-dependent threat is in Table~\ref{tab:q7model}. Pre-gate, both 7B and 32B approve \(100\%\) of category-\(C\) transactions (paired display-only flip rate \(0.73\)--\(0.93\)) and comply with \(100\%\) of display-channel injections.}
    \label{tab:ieeecisagent}
    \resizebox{\columnwidth}{!}{%
        \begin{tabular}{lcccc}
            \toprule
            attack     & no gate & learned & certified & \(\texttt{cert\_false\_allow}\) \\
            \midrule
            c\_witness & 1.00    & 1.00    & 0.00      & 0.00                            \\
            injection  & 1.00    & 0.00    & 0.00      & 0.00                            \\
            \bottomrule
        \end{tabular}}
\end{table}

\medskip\noindent\textbf{Question.} Can untrusted display text steer unsafe execution through a typed gate?
\textbf{Setting.} Display-channel injections (TM1) against every system, including a robust prompt and a third model (\texttt{qwen3.6}).
\textbf{Result.} Display text moves the LLM proposal for every system, but typed gates make unsafe execution invariant to it (spread \(0.00\)); a robust prompt does not fix the undefended systems (Table~\ref{tab:tm1}).

\begin{table}[h]
    \centering
    \scriptsize
    \setlength{\tabcolsep}{4pt}
    \caption{Non-instructability (TM1; Section~6.4). Display text \(m\) moves the LLM proposal for every system, but typed gates make unsafe execution invariant to \(m\) (spread \(0.00\)). A robust prompt does not fix it. Reproduced with a third model (EXP-C3, \texttt{qwen3.6} 36B MoE): undefended \(\texttt{unsafe\_exec}\) \(0.85\)--\(1.00\), \(\texttt{llm\_flip}\) \(0.80\)--\(0.83\), typed gates invariant (\(\texttt{gate\_flip}=0\)); best-of-\(K\) pushes the strongest prompt baseline to \(0.100\) while the typed gate holds \(0\).}
    \label{tab:tm1}
    \resizebox{\columnwidth}{!}{%
        \begin{tabular}{lcccc}
            \toprule
            system                     & gate input & \(\texttt{priv\_prop}\) (benign\(\to\)inj) & \(\texttt{unsafe\_exec}\) (benign\(\to\)inj) & spread over \(m\) \\
            \midrule
            LLM-only (none)            & \((z,m)\)  & \(0.85\to 1.00\)                           & \(0.187\to 0.333\)                           & \(0.21\)          \\
            LLM + robust prompt        & \((z,m)\)  & \(0.85\to 1.00\)                           & \(0.173\to 0.333\)                           & \(0.21\)          \\
            LLM + learned typed gate   & \(z\) only & \(0.85\to 1.00\)                           & \(0.000\to 0.000\)                           & \(0.00\)          \\
            LLM + certified typed gate & \(z\) only & \(0.85\to 1.00\)                           & \(0.000\to 0.000\)                           & \(0.00\)          \\
            \bottomrule
        \end{tabular}}
\end{table}

\medskip\noindent\textbf{Question.} Can an LLM judge substitute for the certificate?
\textbf{Setting.} Judges evaluate the serialized observed pair \((z,a)\) on the witness set; a budget-aware judge additionally receives the exact \(\Ball\) and is asked for the worst case (\(3\) seeds, benign paraphrases).
\textbf{Result.} The budget-aware judge's verdict spans the full \([0,1]\) across three benign paraphrases of the same instruction: not a sound gate (Table~\ref{tab:judge}).

\begin{table}[h]
    \centering
    \small
    \caption{LLM-judge baselines on the witness set \(W\) (Section~6.2). Judges evaluate the serialized observed pair \((z,a)\); the certified gate evaluates \(\Ball(z)\). The budget-aware judge receives the exact \(\Ball\) in its prompt and is asked for the worst case; its verdict spans the full \([0,1]\) across three benign paraphrases of the same instruction (\(3\) seeds each, decode temperature \(0.7\)). Judge model \texttt{qwen2.5:32b}~\cite{qwen2024qwen25}; guard model \texttt{llama-guard3:1b}~\cite{inan2023llamaguard}. \(\texttt{cert\_false\_allow}\) is the empirical fraction of allows that are joint-unsafe. Stronger-judge leg (EXP-C3): \texttt{qwen3.6} (36B MoE, chain-of-thought off) at decode temperature \(0.7\) and \(0\); at temperature \(0\) the three decode seeds are identical per phrasing (one phrasing \(0.617\), the others \(0.000\)), so the residual spread is phrasing sensitivity, not sampling noise. A tool-augmented judge that reproduced the enumeration would concede the locus argument.}
    \label{tab:judge}
    \resizebox{\columnwidth}{!}{%
        \begin{tabular}{llcc}
            \toprule
            gate                       & evaluates    & allow-on-\(W\)  & \(\texttt{cert\_false\_allow}\) \\
            \midrule
            LLM point judge            & point        & \(0.92\)        & \(0.55\)                        \\
            budget-aware LLM judge     & point        & \(0.44\pm0.43\) & \(0.27\pm0.26\)                 \\
            stronger point judge (36B MoE)         & point & \(0.258\)     & ---                             \\
            stronger budget-aware judge (36B MoE)  & point & \(0.14\pm0.22\) (max \(0.61\)) & up to \(0.31\) \\
            guard model                & point        & \(0.50\)        & \(0.30\)                        \\
            certified gate (reference) & neighborhood & \(0.000\)       & \(0.000\)                       \\
            \bottomrule
        \end{tabular}}
\end{table}

\medskip\noindent\textbf{Question.} Do deployed defense hooks admit the joint-gap witness?
\textbf{Setting.} Runtime enforcement of the observed action (AgentSpec, VeriGuard), the capability defense CaMeL, and pre-execution screening, all on the same witness set (Experiment P5).
\textbf{Result.} A witness is safe at the observed point, so every point-evaluating defense admits it; only the joint certificate blocks it, and CaMeL and the certificate are orthogonal (Table~\ref{tab:comparators}).

\begin{table*}[t]
\centering
\small
\caption{Deployed-defense comparators (Experiment P5; Sections~2 and~6.3). \emph{Summary:} a joint-gap witness is safe at the observed point, so every point-evaluating defense admits it; runtime enforcement of the observed action (AgentSpec~\cite{wang2026agentspec}, VeriGuard~\cite{miculicich2025veriguard}) and the capability defense CaMeL~\cite{debenedetti2025camel} both allow it, and only the neighborhood certificate blocks it. \(^{*}\)CaMeL is orthogonal rather than dominated: on the real finance schema it blocks source-tainted values that \method{} allows and admits the in-scope-provenance witness that \method{} blocks, so the two are composable and neither subsumes the other (\emph{orthogonality} rows). \emph{Classifier:} a maximally generous pre-execution transcript classifier (full transcript, point accuracy \(1.00\)) used as an authorize-then-realize gate still commits unsafe effects at the no-gate rate on joint-gap witnesses, while the joint certificate commits none; clean \(C\)-vs-\(R\) separation AUC is \(0.54\)--\(0.65\): the distinction lives in \(B_{d,\varepsilon}\), not at the point.}
\label{tab:comparators}
\resizebox{\textwidth}{!}{%
\begin{tabular}{lcc}
\toprule
\multicolumn{3}{l}{\emph{Does a joint-gap witness pass? (the witness is safe at the observed point)}} \\
\midrule
defense & evaluation locus & witness \\
\midrule
runtime enforcement (AgentSpec, VeriGuard) & observed point \((z,a)\)               & \textsc{allow}         \\
CaMeL (capability + IFC)                    & value provenance / trust               & \textsc{allow}\(^{*}\) \\
\method{} (ours)                            & neighborhood \(B_{d,\varepsilon}(z)\)  & \textsc{block}         \\
\midrule
\multicolumn{3}{l}{\emph{Orthogonality on the real finance schema}} \\
\midrule
cell / system & CaMeL & \method{} joint certificate \\
in-scope provenance, joint-gap witness & \textsc{admit} & \textsc{block} \\
injected (source-tainted) value, robust point & \textsc{block} & \textsc{allow} \\
\midrule
\multicolumn{3}{l}{\emph{A point-accurate classifier still commits}} \\
\midrule
 & transcript classifier (pt.\ acc.\ \(1.00\)) & \method{} joint certificate \\
\(\Pr[\text{unsafe exec}]\) on witnesses (fin/SRE) & \(1.00/1.00\) (= no gate) & \(0.000\) \\
\bottomrule
\end{tabular}}
\end{table*}

\medskip\noindent\textbf{Question.} Does non-composition realize as committed side effects end-to-end?
\textbf{Setting.} Full agent chain with committed effects (money approved / alert suppressed), \(300\) episodes per (domain, attack), finance and SRE.
\textbf{Result.} Under the joint-gap attack the composed marginal certificates commit unsafe effects at the full no-gate rate; the joint certificate commits none (Table~\ref{tab:e2e}).

\begin{table}[h]
    \centering
    \small
    \caption{End-to-end unsafe commits (Section~6.3): full chain with committed side effects (money approved / alert suppressed), \(300\) episodes per (domain, attack), finance/SRE. Under the joint-gap attack the analytic marginal certificate commits unsafe effects at the no-gate rate: composing two sound marginal certificates produces concrete unsafe commits, the non-composition thesis realized end-to-end. The joint certificate stays non-vacuous on clean traffic (privileged-execution rates \(0.23\) finance / \(0.117\) SRE, all safe).}
    \label{tab:e2e}
    \resizebox{\columnwidth}{!}{%
        \begin{tabular}{lcc}
            \toprule
            gate                          & \(\Pr[\text{unsafe exec}]\), joint-gap witness (fin/SRE) & mixed                      \\
            \midrule
            no gate                       & \(1.00\ /\ 0.40\)                                    & \(0.33\ /\ 0.24\)          \\
            point (learned)               & \(1.00\ /\ 0.40\)                                    & \(0.32\ /\ 0.14\)          \\
            marginal certificate          & \(1.00\ /\ 0.40\)                                    & \(0.32\ /\ 0.14\)          \\
            pre-execution                 & \(0.44\ /\ 0.23\)                                    & \(0.11\ /\ 0.13\)          \\
            joint certificate (\method{}) & \(\mathbf{0.00\ /\ 0.00}\)                           & \(\mathbf{0.00\ /\ 0.00}\) \\
            robust oracle                 & \(0.00\ /\ 0.00\)                                    & \(0.00\ /\ 0.00\)          \\
            \bottomrule
        \end{tabular}}
\end{table}

\medskip\noindent\textbf{Question.} Does the exploit and its defense hold on a live cluster?
\textbf{Setting.} A \texttt{kind} cluster (Kubernetes v1.31) with real Kyverno v1.13 admission; the tier is resolved from a registry ConfigMap (the TOCTOU surface) serving a stale \texttt{lax} tier while the true tier is \texttt{strict}.
\textbf{Result.} The over-provisioned workload is admissible under the served binding and blocked by the certified gate, with the matching safe control passing (Table~\ref{tab:realharness}).

\begin{table}[h]
\centering
\small
\caption{Real-harness integration (Experiments P3, P3-MCP, P3-mediation, P3-A8; Section~6.3). A live \texttt{kind} cluster (Kubernetes v1.31) with real Kyverno v1.13 admission resolves the tier from a separate registry ConfigMap (the TOCTOU surface); the registry serves a stale \texttt{lax} tier while the true tier is \texttt{strict}, so a six-replica workload is admissible under the served binding and unsafe under the true one. The certified gate (rung~1) enumerates the tier neighborhood, tests the \(\varepsilon\)-shifted cap on every branch, and blocks before the call; with a fresh registry Kyverno itself denies, matching the gate; a genuinely safe workload deploys (non-vacuous). Gate overhead \(\approx0.3\,\mu\)s per decision (\(\approx10\,\mu\)s on the MCP path, vs \(\approx2.5\)\,s LLM decode). Four real proposers (three Qwen2.5 variants and \texttt{qwen3.6}) all submit the unsafe apply through a real MCP write path. Integration illustrations only: the certified object remains \(\Allow(z,a)\).}
\label{tab:realharness}
\resizebox{\columnwidth}{!}{%
\begin{tabular}{llll}
\toprule
arm & ungated outcome & certified gate & ground truth / note \\
\midrule
P3 admission (stale registry) & unsafe Deployment created & blocked, no side effect & fresh-registry Kyverno denies \\
P3-MCP write (4 LLM proposers) & side effect \(4/4\) models & \(0/4\) & proposer stays uncertified \\
P3 mediation (create + scale paths) & scale path leaks (coverage \(0.5\)) & both blocked (coverage \(1.0\)) & deployed policy misses the subresource \\
P3-A8 filesystem MCP (quota) & \(1.5\)\,MB over-quota file written & blocked before the call & safe \(0.5\)\,MB write allowed \\
\bottomrule
\end{tabular}}
\end{table}

\medskip\noindent\textbf{Question.} Does a deployed-grade AML engine reproduce the taxonomy and the exploit?
\textbf{Setting.} The idiom \(\texttt{risk\_score}<\theta(\text{provenance})\) authored as a Marble~v1.4.0 scenario with the real IEEE-CIS constants; probes and end-to-end commits run through the engine's own decision API.
\textbf{Result.} The engine labels the four probes exactly as the taxonomy predicts, and every ungated approval is persisted by the engine while the certified run commits none (Tables~\ref{tab:marble}--\ref{tab:marblee2e}).

\begin{table}[h]
\centering
\small
\caption{Purpose-built AML-engine verification of the joint-gap taxonomy (B2-Marble; Section~6.2). The idiom \(\texttt{risk\_score}<\theta(\text{provenance})\) is authored as a Marble~v1.4.0 scenario with the real IEEE-CIS constants and published; real transactions are run through the engine's own decision API, which labels the four probes (clean, \(d{=}1\) provenance swap, \(+\varepsilon\), joint swap\(+\varepsilon\)) per record. Claim discipline: a real deployed-grade AML engine executing an \emph{authored} rule on real data: engine-verified existence, one level above the general-purpose engines (OPA, GoRules ZEN), not a claim the rule was mined from a bank's deployed policies.}
\label{tab:marble}
\resizebox{\columnwidth}{!}{%
\begin{tabular}{lc}
\toprule
real IEEE-CIS transactions (decisions) & \(800\) (\(3200\)) \\
engine-labelled joint-gap witnesses & \(160\) (\(20\%\)) \\
engine\(\leftrightarrow\)analytic agreement & \(1.0\) \\
\(C\)-set Jaccard & \(1.0\) \\
per-witness engine pattern & approve/approve/approve/\textsc{review} \\
\bottomrule
\end{tabular}}
\end{table}

\begin{table}[h]
\centering
\small
\caption{Committed side effects on a deployed-grade AML engine (Experiment B2-Marble e2e; Section~6.3). The pipeline serves the stale loose provenance; the gate certifies \(\Ball(z)\) around the served return; on allow the adversary realizes the in-budget worst case. Every ungated approval is committed and persisted by Marble's own decision API (verified via \texttt{GET /v1/decisions}); the certified gate submits none of them and stays non-vacuous; Marble itself reviews every realized worst case, agreeing with the gate. \(100\) joint-gap witnesses + \(100\) robust-safe controls, real IEEE-CIS transactions, \(\varepsilon=0.10\).}
\label{tab:marblee2e}
\resizebox{0.95\columnwidth}{!}{%
\begin{tabular}{lcc}
\toprule
arm & unsafe approvals committed & robust-safe allowed \\
\midrule
engine only (served provenance) & \(100/100\) (persisted) & --- \\
certified gate & \(\mathbf{0/100}\) & \(100/100\) \\
oracle (worst case submitted) & reviewed \(40/40\) by Marble & --- \\
\bottomrule
\end{tabular}}
\end{table}

\medskip\noindent\textbf{Question.} Does a second, independent real adapter reproduce the taxonomy?
\textbf{Setting.} Kubernetes Deployment manifests mapped to \(z=(\texttt{tier},\texttt{cost})\), with real Kyverno admission enforcing \(\texttt{cost}\le\theta(\texttt{tier})\) from a ConfigMap.
\textbf{Result.} A different domain, return format, and threshold family reproduce the same A/B/C/R/U structure (Table~\ref{tab:a7}).

\begin{table}[h]
\centering
\small
\caption{Second independent real adapter (Experiment A7; Section~6.2). A Kubernetes Deployment manifest maps to \(z=(\texttt{tier},\texttt{cost})\) and real Kyverno admission enforces \(\texttt{cost}\le\theta(\texttt{tier})\) with the threshold resolved from a ConfigMap (\(\theta_{\mathrm{strict}}=0.50\), \(\theta_{\mathrm{loose}}=0.80\), \(\varepsilon=0.10\)): a different domain, return format, and engine from the IEEE-CIS adapters. The engine admits every single-channel move and denies the joint one, so naive marginal composition false-allows on the engine itself.}
\label{tab:a7}
\resizebox{0.95\columnwidth}{!}{%
\begin{tabular}{ll}
\toprule
quantity & value \\
\midrule
taxonomy over the cost channel & U \(39\) / A \(61\) / C \(20\) / R \(80\) (C band \([0.40,0.50)\)) \\
engine-vs-analytic agreement & \(1.0\) (\(18\) real admission checks) \\
real Kyverno joint-gap witnesses & \(6\): admit clean, swap-only, \(+\varepsilon\); deny joint \\
\bottomrule
\end{tabular}}
\end{table}

\FloatBarrier
\subsection{External-validity evidence}
The frozen-scan evidence hierarchy and the third-party deployed-rule anchor.

\medskip\noindent\textbf{Question.} What exactly is claimed at each level of external validity?
\textbf{Setting.} An evidence hierarchy kept separate by provenance: third-party existence, regulatory-grounded continuous mechanism, and engine-validated defense.
\textbf{Result.} Existence of the idiom is anchored in third-party code and quantized cloud policy; the continuous mechanism uses source-locked regulatory thresholds; deployment prevalence is explicitly not claimed (Table~\ref{tab:claimladder}).

\begin{table}[h]
    \centering
    \scriptsize
    \setlength{\tabcolsep}{4pt}
    \caption{Evidence hierarchy for external validity (Section~6.4), kept separate by provenance. Existence of the category-conditioned-threshold idiom is anchored in third-party code; the continuous mechanism and the certified defense use source-locked regulatory thresholds and an executable policy-as-code engine, respectively. Per-corpus counts behind Section~6.4: \(\approx 1424\) k8s/cloud policies (39 numeric thresholds, 0 provenance-keyed); 43 MCP reference tools (29 untyped, substrate 0); 4138 OpenAPI specs / 701 providers (structural co-occurrence \(9.8\%\), finance habitat \(22.4\%\), confirmed pipeline-set 0); MCP registries: 1085 servers / 5002 tools, substrate \(5.0\%\) of typed returns (Wilson 95\% \([3.5,7.0]\%\), threat-intel/content/bibliographic); DMN-TCK \(4.9\%\) / Kogito \(11.5\%\); OpenFisca \(6.8\%\) (subject-keyed).}
    \label{tab:claimladder}
    \resizebox{\columnwidth}{!}{%
        \begin{tabular}{>{\raggedright\arraybackslash}p{0.24\linewidth}>{\raggedright\arraybackslash}p{0.30\linewidth}>{\raggedright\arraybackslash}p{0.24\linewidth}>{\raggedright\arraybackslash}p{0.24\linewidth}}
            \toprule
            claim                              & evidence                                                                                                                      & provenance                                         & caveat                                                                        \\
            \midrule
            third-party existence (quantized)  & Azure Key Vault \texttt{keyType} \(\to\) \texttt{keySize} (7 instances, idiom rate \(0.667\))                                 & third-party logic re-implemented                   & \texttt{keySize} quantized: existence anchor rather than continuous substrate \\
            third-party existence (continuous) & OpenFisca \(6.8\%\); DMN tables \(4.9\)--\(11.5\%\) incl.\ OMG spec example; first-class in ZEN/Tazama/Marble; k8s/cloud null & third-party executable, frozen-protocol scan & subject-keyed rather than pipeline-set: security-relevant rate \(0\)          \\
            continuous mechanism               & PSD2 / AML source-locked thresholds; natural \(C\) \(6.5\)--\(9.8\%\) at \(\varepsilon=0.10\)                                 & regulatory-grounded authored policy                & executable policy authored from cited thresholds                              \\
            engine-verified \(C\) on real data & OPA \& ZEN over real IEEE-CIS: \(800\) witnesses, engine-vs-analytic agreement \(1.000\)                                      & authored rule, two production engines              & general-purpose engines rather than a deployed AML substrate                  \\
            certified defense                  & OPA/Rego (Track C): \(C\approx10\)--\(12\%\), \(\Callow=\Uallow=0\), \(\texttt{cert\_false\_allow}=0\)                        & authored provenance-conditioned Rego               & node-level, learned-gate certificate                                          \\
            \bottomrule
        \end{tabular}}
\end{table}

\medskip\noindent\textbf{Question.} Does the candidate substrate survive conservative adjudication?
\textbf{Setting.} Two-pass Stage-2 adjudication (lexical + frozen semantic table, disagreement \(\Rightarrow\) OUT) of the \(31\) Stage-1 candidates across \(8\) servers; zero execution.
\textbf{Result.} The \(5.0\%\) candidate habitat collapses to a fourth informative null; the structural residual is named rather than hidden (Table~\ref{tab:adjudication}).

\begin{table}[h]
\centering
\small
\caption{Registry-scale substrate adjudication (EXP-A2; Section~6.4). Conservative two-pass Stage-2 adjudication of the 31 Stage-1 candidates (8 servers, frozen detector): the $5.0\%$ candidate habitat collapses to a fourth informative null; the structural residual is named, not hidden. Zero-execution (published schemas).}
\label{tab:adjudication}
\resizebox{\columnwidth}{!}{%
\begin{tabular}{lll}
\toprule
stage & count & rate of typed returns (Wilson 95\%) \\
\midrule
Stage-1 candidates & $31/622$ tools, $8$ servers & $5.0\%$ $[3.5,7.0]\%$ \\
two-pass structural survivors & $1/31$ (\texttt{cache\_respected}) & $0.16\%$ $[0.03,0.9]\%$ \\
documented $\theta(s)$ & $0/31$ & $0.0\%$ $[0,0.6]\%$ \\
\bottomrule
\end{tabular}}
\end{table}

\medskip\noindent\textbf{Question.} How prevalent is the structural idiom in committed third-party rule code?
\textbf{Setting.} The frozen Phase-1 detector applied through additional format parsers to legislation-as-code, decision-table, and rule-engine corpora.
\textbf{Result.} Structural presence reaches \(4.9\)--\(11.5\%\), but in every hit the selector is a subject/status attribute: \(\texttt{provenance\_upstream\_rate}=0\) (Table~\ref{tab:idiomaticity}).

\begin{table}[h]
\centering
\small
\caption{Structural prevalence and per-engine idiomaticity of \(\mathrm{op}(f_{\mathrm{num}},\theta(s))\) in third-party committed rule code (Experiments P1-B, A-DMN, B1; frozen Phase-1 detector, detector integrity hashes recorded). \emph{Top:} structural rates in committed corpora; in every hit the selector \(s\) is a subject/status attribute, so \(\texttt{provenance\_upstream\_rate}=0\) (structural presence, distinct from security-relevant prevalence). \emph{Bottom:} per-engine expressibility inventory (no rate emitted): the idiom is a first-class construct in decision-table engines and purpose-built AML engines, expressible-but-generic elsewhere.}
\label{tab:idiomaticity}
\resizebox{\columnwidth}{!}{%
\begin{tabular}{lll}
\toprule
committed corpus & structural idiom rate & selector semantics \\
\midrule
OpenFisca-France models & \(6.8\%\) (\(9/132\)) & subject (zone, statut) \\
DMN-TCK decision tables & \(4.9\%\) (\(8/162\); incl.\ the OMG lending example) & subject \\
Kogito examples & \(11.5\%\) (\(6/52\)) & subject \\
k8s/cloud admission (control) & \(0\) (39 constant thresholds) & --- \\
\midrule
engine & idiom first-class? & evidence \\
\midrule
DMN/FEEL; GoRules ZEN (JDM) & yes & decision-table grammar \\
Tazama & yes & \texttt{RuleConfig} bands, on-disk-verified \\
Marble & yes & score-banding, public docs \\
OPA/Rego; Drools; Jube & partial & expressible, no dedicated construct \\
\bottomrule
\end{tabular}}
\end{table}

\medskip\noindent\textbf{Question.} Does a verbatim third-party policy instantiate the mechanism?
\textbf{Setting.} OpenFisca-France Bail R\'eel Solidaire income ceilings \(\theta(\mathrm{zone},\mathrm{household})\) at a frozen commit, used unmodified; \(\Ndisc\) = the policy's own five zones, \(n=20\)k.
\textbf{Result.} The point gate admits all \(1{,}773\) joint-gap witnesses and the exact certificate admits none; the zone key is subject-set, so the claim is scoped accordingly (Table~\ref{tab:cx5openfisca}).

\begin{table}[h]
\centering
\small
\caption{Independent third-party policy case study (EXP-CX5; Section~6.4). OpenFisca-France, Bail R\'eel Solidaire income ceilings \(\theta(\mathrm{zone},\mathrm{household})\) (frozen commit \texttt{a9d8dcbe}, transcribing Arr\^et\'e du 11/12/2023 / Art.~R255-1 CCH); the rule is used verbatim, \(\Ndisc\) = its own five zones, \(n=20\)k. Scope: zone is a subject/region attribute, not a pipeline-provenance key: this is a deployed-threshold \emph{existence} anchor, complementary to the pipeline-provenance nulls (S24, S25).}
\label{tab:cx5openfisca}
\resizebox{\columnwidth}{!}{%
\begin{tabular}{lcc}
\toprule
 & \(\varepsilon=0.03\) & \(\varepsilon=0.10\) \\
\midrule
natural \(\Pr[C]\) (joint-gap witnesses) & \(0.035\) & \(0.089\) \\
point gate grants on the witness set & \multicolumn{2}{c}{\(1773/1773\) (\(\varepsilon=0.10\))} \\
exact certificate grants on the witness set & \multicolumn{2}{c}{\(0\)} \\
\(\Rallow\) (non-vacuous) & \(0.34\) & \(0.22\) \\
real zone gaps (1--6 person household) & \multicolumn{2}{c}{EUR~5{,}029 -- EUR~34{,}543 (normalized \(\delta\) \(0.042\)--\(0.286\))} \\
\bottomrule
\end{tabular}}
\end{table}

\FloatBarrier
\subsection{Threat-model calibration and residual risk}
Budget calibration, discrete-channel and freshness escape, and negative controls.

\medskip\noindent\textbf{Question.} Do correlated faults exceed the declared discrete budget?
\textbf{Setting.} Compound fault injection (pairs and triples) with realized flip mass measured per clean category (EXP-A1).
\textbf{Result.} All fault pairs land at \(d\le2\) (\(\mathrm{frac}_{d\le2}=1.000\)) and the \(d=2\) Lipschitz gate stays sound with \(\Rallow=0.40\)--\(0.75\) (Table~\ref{tab:compound}).

\begin{table}[h]
\centering
\small
\caption{Compound / correlated fault injection (EXP-A1; Table~\ref{tab:faults} and Section~6.5). All fault \emph{pairs} land at $d\le2$ ($\mathrm{frac}_{d\le2}=1.000$); the $d=2$ Lipschitz gate stays sound ($\texttt{cert\_false\_allow}=0$, $f_{\mathrm{scale}}=3$, $\Rallow=0.40$--$0.75$). Realized C$\to$U flip mass is measured per clean category (e.g.\ finance \texttt{stale\_cache}+\texttt{wrong\_provenance}: $13.6\%$ of clean-$C$ records, adversarial regime). $4000$ samples/combo $\times$ 3 seeds, real IEEE-CIS + finance/SRE.}
\label{tab:compound}
\resizebox{\columnwidth}{!}{%
\begin{tabular}{llll}
\toprule
combo class & $d$ reached & $\varepsilon$ behavior & $\Pr$ (adversarial / independent) \\
\midrule
two distinct discrete (prov+policy, prov+env) & $d=2$ & unchanged & $1.0$ / $\approx0.05$ \\
discrete + continuous & $d\le1$ & larger, mostly $\le\varepsilon$ & --- \\
two continuous (stale+collision; skew+stale) & $d=0$ & $\varepsilon_{p95}\approx1.1$; $\approx0.8$ (cliff) & --- \\
three distinct discrete (prov+policy+env) & $d=3$ & --- & $0.80$/$0.50$ IEEE/fin.\ ; $0.001$--$0.004$ \\
\bottomrule
\end{tabular}}
\end{table}

\medskip\noindent\textbf{Question.} What does raising the discrete budget cost?
\textbf{Setting.} \(d\in\{1,2,3\}\) with exact \(\Ndisc\) enumeration, deterministic Lipschitz backend, both tracks, \(3\) seeds.
\textbf{Result.} Sound at every \(d\) on both tracks with graceful utility decay; the numeric scaling is held-out validated (Table~\ref{tab:dsweep}).

\begin{table}[h]
    \centering
    \small
    \caption{Discrete-budget sweep \(d\in\{1,2,3\}\) (Section~6.5), exact \(\Ndisc\) enumeration, deterministic Lipschitz backend, both tracks numeric-scaled (OPA \(\texttt{fscale}=3\), synthetic \(4\): the largest values keeping \(\texttt{cert\_false\_allow}=0\) at every \(d\); held-out selection hygiene in Table~\ref{tab:fscaleheldout}); \(3\) seeds. Sound at every \(d\) on both tracks with graceful utility decay; the RS ablation instead collapses through its per-branch correction \(\alpha/|\Ndisc|\) (OPA \(0.05\to0\to0\); synthetic \(0.39\to0.09\)). \(|\Ndisc|\) grows \(8\to24\to36\) (OPA) and \(18\to109\to256\) (synthetic \(|X_1|{=}8\)), cost ratio \(1/4.7/9.9\); the operational enumeration cliff is \(d\approx 2\), where \(|\Ndisc|\) reaches \(24\)--\(109\) across tracks. OPA \(\texttt{fscale}\ge 4\) lifts \(\Rallow\) but breaks soundness at \(d\ge 2\).}
    \label{tab:dsweep}
    \resizebox{0.95\columnwidth}{!}{%
        \begin{tabular}{lcccc}
            \toprule
            track                   & \(d{=}1\) & \(d{=}2\) & \(d{=}3\) & \(\texttt{cert\_false\_allow}\) \\
            \midrule
            OPA finance             & \(0.587\) & \(0.507\) & \(0.453\) & \(0\)                           \\
            synthetic \(|X_1|{=}4\) & \(0.88\)  & \(0.79\)  & \(0.79\)  & \(0\)                           \\
            synthetic \(|X_1|{=}8\) & \(0.85\)  & \(0.80\)  & \(0.80\)  & \(0\)                           \\
            \bottomrule
        \end{tabular}}
\end{table}

\medskip\noindent\textbf{Question.} Which fault mechanisms escape the discrete neighborhood?
\textbf{Setting.} Leave-one-fault-out: rebuild \(\Ndisc\) without each mechanism and measure the escape probability on real substrates (\(3\) seeds, \(\varepsilon=0.10\)).
\textbf{Result.} Provenance, policy-pack, and TOCTOU faults never escape; the two escaping faults have no discrete footprint and are exactly what the validation stack targets (Table~\ref{tab:escape}).

\begin{table}[h]
    \centering
    \small
    \caption{Discrete-channel escape, leave-one-fault-out (Section~6.5): rebuild \(\Ndisc\) without each mechanism and measure \(\Pr[z'\notin\Ndisc^{-M}\times B_\varepsilon]\) (real IEEE-CIS / finance / SRE substrates, \(3\) seeds, \(\varepsilon=0.10\)). Provenance, policy-pack, and TOCTOU faults never escape; the two escaping faults have no discrete footprint (they drift \(x\) past \(\varepsilon\), the out-of-budget tail scoped to validation). Over-declaring \(K\) inert branches costs utility: \(\Rallow=0.172/0.109/0.051/0.012/0.002\) at \(K=0/1/2/4/8\) (\(|\Ndisc|\ 3\to 11\)).}
    \label{tab:escape}
    \resizebox{\columnwidth}{!}{%
        \begin{tabular}{lccc}
            \toprule
            held-out mechanism       & IEEE-CIS  & finance   & SRE       \\
            \midrule
            wrong provenance binding & \(0.000\) & \(0.000\) & \(0.000\) \\
            wrong policy pack        & \(0.000\) & \(0.000\) & \(0.000\) \\
            TOCTOU environment label & \(0.000\) & \(0.000\) & \(0.000\) \\
            schema skew              & \(0.811\) & \(0.861\) & \(0.865\) \\
            cache key collision      & \(0.982\) & \(1.000\) & \(0.999\) \\
            \bottomrule
        \end{tabular}}
\end{table}

\medskip\noindent\textbf{Question.} What continuous residual does each covered fault carry?
\textbf{Setting.} \(4000\) samples per fault, measured after the integrity+freshness validation stack (Experiment FAULT).
\textbf{Result.} Discrete binding faults carry no continuous drift; the pooled \(\varepsilon_{\mathrm{emp}}@\mathrm{p95}\approx0.079\)--\(0.095\) anchors Table~\ref{tab:faults} (Table~\ref{tab:perfault}).

\begin{table}[h]
\centering
\small
\caption{Per-fault continuous residuals (Experiment FAULT, \(4000\) samples/fault; anchors Table~\ref{tab:faults}'s pooled \(\varepsilon_{\mathrm{emp}}@\mathrm{p95}\approx0.079\)--\(0.095\), measured after the integrity+freshness validation stack). Discrete binding faults carry no continuous drift; the two out-of-budget mechanisms are exactly those a schema/identity validation layer targets.}
\label{tab:perfault}
\resizebox{\columnwidth}{!}{%
\begin{tabular}{llll}
\toprule
fault & \(d\) & continuous drift (pre-validation) & in \(B_{1,0.10}\) \\
\midrule
provenance / policy-pack / TOCTOU & \(1\) (\(\Pr=1.000\)) & \(\varepsilon=0\) & yes \\
numeric jitter                    & \(0\) & \(\texttt{frac\_in\_B}=0.998\) & yes \\
normalization skew                & \(0\) & \(\texttt{frac\_in\_B}=0.974\) & yes \\
stale cache (same surface)        & \(0\) & \(\varepsilon_{p90}\approx0.25\), \(\varepsilon=0.10\) covers \(\approx64\%\) & partial \\
schema transposition              & \(0\) & \(\texttt{frac\_in\_B}\approx0.13\)--\(0.18\) & no (validation failure) \\
cache key collision               & \(0\) & \(\texttt{frac\_in\_B}\approx0.00\)--\(0.01\) & no (validation failure) \\
\bottomrule
\end{tabular}}
\end{table}

\medskip\noindent\textbf{Question.} What does the certificate explicitly not cover?
\textbf{Setting.} Fabricated returns with gap \(\le\varepsilon\) versus gap \(\gg\varepsilon\) (negative control).
\textbf{Result.} The in-budget lie is refused; far outside the budget the certificate makes no claim and a sound gate allows the fabricated return --- the boundary of the guarantee, stated (Table~\ref{tab:negcontrol}).

\begin{table}[h]
    \centering
    \scriptsize
    \setlength{\tabcolsep}{4pt}
    \caption{Negative control (Appendix~E.4). For \(\texttt{gap}\le\varepsilon\) the lie is inside the budget and the gate refuses; for \(\texttt{gap}\gg\varepsilon\) the certificate makes no claim and a sound gate allows the fabricated return.}
    \label{tab:negcontrol}
    \resizebox{\columnwidth}{!}{%
        \begin{tabular}{lccccc}
            \toprule
            \(\texttt{fabrication\_gap}\) & dist.\ to true & inside budget & gate allows fake & claim applicable & unsafe if endpoint lies \\
            \midrule
            0.05                          & 0.05           & yes           & 0.00             & 1.00             & 0.00                    \\
            0.10 (\(=\varepsilon\))       & 0.10           & yes           & 0.00             & 1.00             & 0.00                    \\
            0.20                          & 0.20           & no            & 0.00             & 0.00             & 0.00                    \\
            0.40                          & 0.40           & no            & 0.283            & 0.00             & 0.283                   \\
            0.80                          & 0.678          & no            & 1.00             & 0.00             & 1.00                    \\
            \bottomrule
        \end{tabular}}
\end{table}

\medskip\noindent\textbf{Question.} Where does the declared budget actually break?
\textbf{Setting.} The model-free robust-oracle certificate for \(B_{1,0.10}\) held fixed while the adversary moves strictly outside it (Experiment P2).
\textbf{Result.} False allow over certified allows is \(0\) at every in-budget point and first leaves zero at \(\varepsilon^{*}=0.11\), \(d^{*}=2\) (Table~\ref{tab:breaking}).

\begin{table}[h]
\centering
\small
\caption{Breaking radius of the declared budget (Experiment P2; Section~6.5). The model-free robust-oracle certificate for \(B_{1,0.10}\) is held fixed while the adversary moves strictly outside it; the false-allow rate over certified allows is \(0\) at every in-budget point and first leaves zero at \(\varepsilon^{*}=0.11\) and \(d^{*}=2\) (\(d=3\) adds nothing: the schema exhausts its discrete atoms at \(d=2\)): degradation is graceful and visible, never a silent in-budget false allow. Mechanism placement matches Table~\ref{tab:faults}: the in-budget provenance fault leaks \(0.000\); stale cache \(0.014/0.018\); schema skew \(0.089/0.028\); cache key collision (\(\varepsilon_{p95}\approx1.2\), \(12\times\) the budget) \(0.325/0.244\). Scope probe (Experiment SEL): a metadata-poisoning mis-selection within the gated action group is a \(d=1\) provenance swap the certificate already covers (unsafe execution \(1.00\to0.00\)); a cross-action mis-selection lies outside the per-action ball and retains a \(0.31\) residual: tool selection stays an upstream defense surface.}
\label{tab:breaking}
\resizebox{0.9\columnwidth}{!}{%
\begin{tabular}{lcc}
\toprule
probe & finance & SRE \\
\midrule
in-budget (\(d{=}1\), \(\varepsilon\le0.10\)) & \(0.000\) & \(0.000\) \\
\(\varepsilon^{*}=0.11\) (first leak) & \(0.046\) & \(0.083\) \\
\(\varepsilon=0.50\) & \(0.98\) & \(0.998\) \\
\(d=2\) at \(\varepsilon=\varepsilon_{\mathrm{cert}}\) & \(0.126\) & \(0.297\) \\
\bottomrule
\end{tabular}}
\end{table}

\medskip\noindent\textbf{Question.} What happens when the trusted constructor itself is corrupted?
\textbf{Setting.} A field flip with probability \(p\) at the \(z\)-constructor, below the typed interface (\(5\) seeds).
\textbf{Result.} False allow grows monotonically with \(p\), delimiting where the guarantee stops; this anchors the \(1.7\%\) trust-boundary number (Table~\ref{tab:constructor}).

\begin{table}[h]
\centering
\small
\caption{Constructor-corruption sweep (EXP2, part B; anchors the \(1.7\%\) trust-boundary number). A field flip with probability \(p\) at the \(z\)-constructor, below the typed interface; the verified-point gate trusts the delivered provenance. False allow grows monotonically with \(p\), delimiting where the guarantee stops; \(5\) seeds.}
\label{tab:constructor}
\resizebox{\columnwidth}{!}{%
\begin{tabular}{lccccc}
\toprule
flip probability \(p\) & \(0\) & \(0.05\) & \(0.10\) & \(0.20\) & \(0.40\) \\
\midrule
verified-point false allow & \(0.000\) & \(0.0025\) & \(0.0047\) & \(0.0077\) & \(0.0167\) \\
\bottomrule
\end{tabular}}
\end{table}

\medskip\noindent\textbf{Question.} What freshness SLA keeps the declared budget valid?
\textbf{Setting.} Same-card wall-clock staleness on real IEEE-CIS: a fine sub-minute grid \(\Delta t\in\{1,\dots,120\}\)\,s (\(5\) seeds, coverage-aware) and the coarse grid plotted in Figure~4.
\textbf{Result.} Certified false allow is \(0\) at every \(\Delta t\), but no well-covered sub-minute SLA meets a \(<1\%\) system escape: scores must be recomputed in-loop (Tables~\ref{tab:slasweep}--\ref{tab:slacoarse}).

\begin{table}[h]
\centering
\small
\caption{Sub-minute freshness-SLA sweep (EXP-A3; Section~6.5). Fine grid $\Delta t\in\{1,\dots,120\}$\,s on real IEEE-CIS same-card wall-clock staleness, 5 seeds, coverage-aware. $\texttt{cert\_false\_allow}=0$ at every $\Delta t$; no well-covered sub-minute SLA meets a $<1\%$ system escape, so scores must be recomputed in-loop.}
\label{tab:slasweep}
\resizebox{\columnwidth}{!}{%
\begin{tabular}{llll}
\toprule
$\Delta t$ band & coverage & $\varepsilon_{\mathrm{emp}}$@p95 & system false-allow \\
\midrule
$\le1$\,s & $\approx0.5$ re-reads (artifact) & below $\varepsilon$ & $<0.01$ (zero-coverage) \\
$\approx10$\,s & low & crosses $0.10$ & --- \\
$\ge10$\,s & growing & $0.13$--$0.17$ & --- \\
$30$--$90$\,s & $\ge30$ serves (well-covered) & $0.13$--$0.17$ & $0.03$--$0.05$ \\
\bottomrule
\end{tabular}}
\end{table}

\begin{table}[h]
\centering
\small
\caption{Coarse freshness-SLA grid (Experiment EXP2-A; the points Figure~4 plots). Same-card wall-clock staleness on real IEEE-CIS; \(\texttt{cert\_false\_allow}=0\) at every \(\Delta t\); the fine sub-minute grid is Table~S35.}
\label{tab:slacoarse}
\resizebox{\columnwidth}{!}{%
\begin{tabular}{lcccccc}
\toprule
\(\Delta t\) & 60\,s & 30\,min & 1\,h & 6\,h & 1\,d & 120\,d \\
\midrule
\(\varepsilon_{\mathrm{emp}}\)@p95 & \(0.174\) & \(0.239\) & \(0.267\) & \(0.292\) & \(0.298\) & \(0.306\) \\
system false-allow                 & \(0.018\) & \(0.034\) & \(0.040\) & \(0.058\) & \(0.062\) & \(0.071\) \\
\bottomrule
\end{tabular}}
\end{table}

\medskip\noindent\textbf{Question.} What does the normalized \(\varepsilon=0.10\) mean in raw units?
\textbf{Setting.} The raw-unit move corresponding to a normalized \(\varepsilon\) step, per field and anchor.
\textbf{Result.} Per-field raw-unit equivalents (marginal view; the joint \(\ell_2\) ball couples fields, so single-field values are upper bounds) (Table~\ref{tab:rawunit}).

\begin{table}[h]
\centering
\small
\caption{Raw-unit $\varepsilon$ audit (EXP-B2; Section~6.5 and Appendix~D). The raw-unit move corresponding to a normalized $\varepsilon=0.10$ step, per field and anchor (marginal per-field view; the joint $\ell_2$ ball couples fields, so single-field values are upper bounds).}
\label{tab:rawunit}
\begin{tabular}{llll}
\toprule
field (scaling) & p50 & p95 & p99 \\
\midrule
TransactionAmt (log) & $\approx\$71$ & $\approx\$453$ & $\approx\$557$ \\
NAB CPU (linear) & \multicolumn{3}{l}{$10$ CPU points (constant)} \\
IEEE C-aggregate (clip) & \multicolumn{3}{l}{$\pm10.7$ (constant)} \\
\texttt{risk\_score} & \multicolumn{3}{l}{$0.10$ probability points} \\
\bottomrule
\end{tabular}
\end{table}

\medskip\noindent\textbf{Question.} Do calibrated per-field budgets beat one isotropic \(\varepsilon\)?
\textbf{Setting.} Multivariate-affine policy (\(k=6\)), exact certificates with each geometry's dual norm, every geometry calibrated on a frozen split to the same held-out fault coverage (\(q=0.95\)), \(5\) seeds.
\textbf{Result.} The comparison is coverage-matched by construction; the residual \(\approx0.038\) policy false allow is the calibration escape --- the \(5\%\) fault tail beyond the frozen quantile (Table~\ref{tab:cx4perfield}).

\begin{table}[h]
\centering
\small
\caption{Calibrated per-field budget (EXP-CX4; Section~6.5). Multivariate-affine policy (\(k=6\)), exact certificate with the dual norm of each geometry; every geometry calibrated on a frozen split to the same held-out joint fault coverage (\(q=0.95\)), evaluated disjointly, 5 seeds. The residual \(\approx0.038\) policy false-allow is the calibration escape (the \(5\%\) fault tail beyond the frozen budget), identical across geometries: not certificate unsoundness. Real IEEE-CIS anchor: per-field p95 residual heterogeneity \(2.9\times\) (\(0.055\) vs \(0.019\)).}
\label{tab:cx4perfield}
\resizebox{\columnwidth}{!}{%
\begin{tabular}{lcc}
\toprule
budget geometry & held-out fault coverage & certified autonomy \(\Rallow\) \\
\midrule
global \(\ell_2\) ball        & \(\approx0.95\) & \(0.45\) \\
weighted-\(\ell_\infty\) box  & \(\approx0.95\) & \(0.48\) \\
diagonal ellipsoid            & \(\approx0.95\) & \(\mathbf{0.62}\) \\
\bottomrule
\end{tabular}}
\end{table}

\medskip\noindent\textbf{Question.} Does budget calibration survive a real adapter into a real engine?
\textbf{Setting.} The Table~\ref{tab:faults} fault mechanisms run through a real adapter into Marble's decision API; \(\varepsilon_{\mathrm{cal}}=0.052\) is the p95 residual on a calibration half (Experiment CX6).
\textbf{Result.} The calibrated budget holds held-out, leaking only outside the ball (Table~\ref{tab:cx6}).

\begin{table}[h]
\centering
\small
\caption{Real-adapter budget calibration through the Marble engine (Experiment CX6; Section~6.5). The Table~\ref{tab:faults} fault mechanisms run through a real adapter into Marble's decision API; \(\varepsilon_{\mathrm{cal}}=0.052\) is the p95 residual under integrity+freshness faults on a calibration half, and escape (a fault that flips the engine's decision yet falls outside \(B_{1,\varepsilon_{\mathrm{cal}}}\)) is measured on a disjoint holdout (\(n=800\) real IEEE-CIS transactions, \(1658\) unique engine decisions). The calibrated budget generalizes; the schema/identity tail is the measured out-of-budget escape, scoped to the validation layer.}
\label{tab:cx6}
\resizebox{0.95\columnwidth}{!}{%
\begin{tabular}{lcc}
\toprule
mechanism (holdout) & \(\varepsilon_{p95}\) & budget escape \\
\midrule
wrong provenance binding (\(d{=}1\)) & \(0\) & \(0.000\) \\
numeric jitter / normalization skew & within budget & \(\approx0\) \\
stale cache (same surface) & \(0.105\) & \(0.020\) \\
schema skew & \(0.53\) & \(0.103\) \\
cache key collision & \(0.42\) & \(0.31\) \\
\midrule
pooled: integrity+freshness vs out-of-budget tail & --- & \(0.006\) vs \(0.207\) \\
\bottomrule
\end{tabular}}
\end{table}

\medskip\noindent\textbf{Question.} Does typed-state projection restore fidelity at high ambient dimension?
\textbf{Setting.} A scalar policy on \(k_{\mathrm{active}}=5\) fields padded with nuisance dimensions to \(k_{\mathrm{raw}}\in\{20,50,100\}\); four gates differing only in the fields they may use, same smoothed certificate, scored against the true oracle.
\textbf{Result.} The dense gate reproduces the \(k=100\) fidelity edge; the projected gate removes it (Table~\ref{tab:projection}).

\begin{table}[h]
\centering
\small
\caption{Policy-state projection (Experiment D.4; Section~6.5). A scalar policy depends on \(k_{\mathrm{active}}=5\) fields, padded with nuisance dimensions to \(k_{\mathrm{raw}}\in\{20,50,100\}\); four gates differ only in the fields they may use, with the same smoothed certificate, scored against the true oracle. The dense gate reproduces the \(k=100\) fidelity edge of Table~\ref{tab:dimsweep}; projecting to the low-dimensional policy state removes it. The certifiable interface is a typed low-dimensional policy state rather than the raw record: the recommendation is projection, not smoothing the raw space.}
\label{tab:projection}
\resizebox{\columnwidth}{!}{%
\begin{tabular}{lcc}
\toprule
gate (fields used) & fidelity (\(k_{\mathrm{raw}}{=}20/50/100\)) & \(\texttt{cert\_false\_allow}\) (\(k{=}100\)) \\
\midrule
dense (all raw fields) & \(0.933/0.850/0.742\) & \(0.017\) \\
oracle projection (\(5\) active fields) & \(0.975/0.992/0.983\) & \(0\) \\
L1-bottleneck (\(\approx8\)--\(25\) fields) & \(\ge0.917\) & \(0\) \\
noise-trained dense & \(0.867\) at \(k{=}100\) & \(0\) \\
\bottomrule
\end{tabular}}
\end{table}

\medskip\noindent\textbf{Question.} Can abstention itself be weaponized (availability, not safety)?
\textbf{Setting.} Boundary-seeking input selection against the exact analytic certificate on real IEEE-CIS (\(3\) seeds).
\textbf{Result.} The attack inflates abstention but never safety: certified false allow stays \(0\) under benign and every adversarial strength; rate-limit mitigations are noted (Table~\ref{tab:dos}).

\begin{table}[h]
    \centering
    \small
    \caption{Abstention-DoS (Section~7): boundary-seeking input selection against the exact analytic certificate on real IEEE-CIS (\(3\) seeds; \(\theta_{\mathrm{base}}=0.4888\), \(\delta=0.08\), \(\varepsilon=0.10\)). The attack inflates abstention (availability) but never safety: \(\texttt{cert\_false\_allow}=0\) under benign and every adversarial strength. Mitigations: a per-source rate limit caps abstention at \(0.37\) (cost: \(0.84\) of adversarial volume dropped); adaptive-\(\varepsilon\) reclaims availability (\(0.99\to0.75\) abstention at \(\varepsilon=0.02\)) at residual exposure \(0.24\); \(\texttt{cert\_false\_allow}=0\) under both. The boundary-balanced pool makes the benign floor conservative.}
    \label{tab:dos}
    \resizebox{\columnwidth}{!}{%
        \begin{tabular}{lcc}
            \toprule
            selection                               & abstention                  & inflation                \\
            \midrule
            benign (deep interior)                  & \(0.406\pm0.003\)           & \(1.00\times\)           \\
            boundary-seeking (steer \(0.25\to1.0\)) & \(0.552/0.698/0.843/0.990\) & \(1.36\)--\(2.44\times\) \\
            \bottomrule
        \end{tabular}}
\end{table}

\FloatBarrier
\subsection{Reproducibility and statistical audits}
Cost, monitoring, selection hygiene, horizon corrections, and the zero-cell audit.

\medskip\noindent\textbf{Question.} What does each backend cost per decision, and what does the cost buy?
\textbf{Setting.} Per-decision latency on the reference machine of Appendix~D, pointwise gates included.
\textbf{Result.} Pointwise gates are fast but unsound under an in-budget adversary; certified cost is backend-indexed (exact \(<1\,\mu\)s, Lipschitz \(\approx7.8\) ms, smoothing \(\approx10\)--\(46\) ms) (Table~\ref{tab:runtime}).

\begin{table}[h]
    \centering
    \small
    \caption{Validity regime: per-decision cost and soundness (Section~6.5). Pointwise gates are fast but unsound under an in-budget adversary; cost among the certified backends is backend-indexed (exact \(<\)1\,\(\mu\)s, Lipschitz \(\approx 7.8\) ms, smoothing \(\approx 10\)--\(46\) ms). The smoothing rows are on the MLP backend.}
    \label{tab:runtime}
    \resizebox{\columnwidth}{!}{%
        \begin{tabular}{llcccc}
            \toprule
            domain  & gate / backend                                      & branches \(\times M\) & mean ms    & \(\texttt{cert\_false\_allow}\) & \(\Rallow\) \\
            \midrule
            finance & none                                                & ---                   & 0.0002     & 0.667                           & 1.00        \\
            finance & rule                                                & ---                   & 0.004      & 0.500                           & 1.00        \\
            finance & learned                                             & ---                   & 0.080      & 0.500                           & 1.00        \\
            finance & exact (rung~1)                                      & \(9\times 1\)         & \(<0.001\) & 0.000                           & 1.00        \\
            finance & Lipschitz (\(\Allow_{\mathrm{Lip}}\))               & \(9\times 1\)         & 7.8        & 0.000                           & 0.26        \\
            finance & smoothing (\(\Allow_{\mathrm{RS}}\), \(M{=}2000\))  & \(9\times 2000\)      & 10.27      & 0.000                           & 0.50        \\
            finance & smoothing (\(\Allow_{\mathrm{RS}}\), \(M{=}10000\)) & \(9\times 10000\)     & 46         & 0.000                           & 0.30--0.33  \\
            SRE     & smoothing (\(\Allow_{\mathrm{RS}}\), \(M{=}2000\))  & \(9\times 2000\)      & 10.20      & 0.000                           & 0.22        \\
            \bottomrule
        \end{tabular}}
\end{table}

\medskip\noindent\textbf{Question.} Can gate--policy fidelity drift be monitored in operation?
\textbf{Setting.} A two-window delayed-audit detector over the real IEEE-CIS gate pool, swept over audit delay, window size, and alarm threshold (\(3\) seeds), with an anytime-valid upper bound.
\textbf{Result.} The monitor detects fidelity drift from audited allows alone, providing the operational complement to the one-shot certificates (Table~\ref{tab:monitor}).

\begin{table*}[t]
\centering
\small
\caption{Operational fidelity monitor (EXP-A4; Section~7). Two-window delayed-audit detector over the real IEEE-CIS gate pool (baseline audited $\texttt{cert\_false\_allow}\approx0.28$--$0.34$, the weak implicit signal); $\Delta_{\mathrm{audit}}\in\{1\mathrm{h},1\mathrm{d},7\mathrm{d}\}$, window $n\in\{500,2000,5000\}$, $\theta_{\mathrm{alarm}}\in\{0.02,0.05\}$, 3 seeds.
An anytime-valid upper confidence sequence ($\alpha$-spending Clopper--Pearson, peeking-safe) upgrades the audit to a finite-audit deployment guarantee on the same real stream: established at $N\approx500$--$1000$ with audited false-allow $\le0.021$ at $95\%$ uniformly over $N$, with an establish-then-halt fallback rule (control halt-rate $0$, and the bound holds even under the severe injected over-permissive gate; on a deterministic semi-synthetic rate jump $0.02\to0.12$ the rule halts $9{,}881$ audits past the jump). The cumulative sequence is a lifetime guarantee; the two-window detector above remains the complementary local change detector.}
\label{tab:monitor}
\resizebox{\textwidth}{!}{%
\begin{tabular}{lllll}
\toprule
injected regression & detection & control false alarms & exposure cut & latency \\
\midrule
label shift (strong) & $100\%$ at $15/18$ points & $0$ at those points & $6192\to69$--$489$ ($90$--$99\%$) & $\approx\Delta_{\mathrm{audit}}$ \\
over-permissive gate (subtle) & $33$--$67\%$ ($\theta=0.02$ only) & trade-off & partial & long \\
no-regression control & --- & rate $0.167$ (small-$n$/small-$\theta$ corner) & --- & --- \\
\bottomrule
\end{tabular}}
\end{table*}

\medskip\noindent\textbf{Question.} Is the observed \(C\) prevalence an artifact of one authored gap \(\delta\)?
\textbf{Setting.} The analytic taxonomy re-generated at \(\delta\in\{0.02,0.05,0.08,0.15,0.30\}\), \(\varepsilon=0.10\) fixed, natural sampling, on real IEEE-CIS and NAB (\(3\) seeds).
\textbf{Result.} \(\Pr[C]\) tracks \(\min(\delta,\varepsilon)\) and saturates above \(\varepsilon\), and the exact certificate is sound at every \(\delta\): the geometry predicts the prevalence (Table~\ref{tab:deltasweep}).

\begin{table}[h]
\centering
\small
\caption{$\delta$-sensitivity of $C$ prevalence (EXP-B1; Section~6.2). Analytic taxonomy at $\delta\in\{0.02,0.05,0.08,0.15,0.30\}$, $\varepsilon=0.10$ fixed, natural sampling, 3 seeds, on real IEEE-CIS ($n\approx295$k) and NAB ($n\approx29$k). $\Pr[C]$ tracks $\min(\delta,\varepsilon)$ and saturates above $\varepsilon$; the exact certificate has $\texttt{cert\_false\_allow}=0$ at every $\delta$. Levels are dataset-specific (boundary density); the $\min(\delta,\varepsilon)$ \emph{shape} is the claim.}
\label{tab:deltasweep}
\resizebox{\columnwidth}{!}{%
\begin{tabular}{lcccccc}
\toprule
dataset & $\delta{=}0.02$ & $0.05$ & $0.08$ & $0.15$ & $0.30$ & corr$(\Pr[C],\min(\delta,\varepsilon))$ \\
\midrule
IEEE-CIS & $0.011$ & $0.022$ & $0.033$ & $0.040$ & $0.040$ & $1.0$ \\
NAB      & $0.002$ & $0.012$ & $0.109$ & $0.191$ & $0.191$ & $0.956$ \\
\bottomrule
\end{tabular}}
\end{table}

\medskip\noindent\textbf{Question.} Was the numeric-block scaling outcome-conditioned?
\textbf{Setting.} \(f_{\mathrm{scale}}\) selected as the largest sound value on a disjoint SELECTION half and re-reported on the untouched EVAL half (grid \(\{2,3,4,6\}\), \(3\) seeds \(\times\,2\) tracks).
\textbf{Result.} Held-out selection reproduces the soundness headline: it is not outcome-conditioned hyperparameter selection (Table~\ref{tab:fscaleheldout}).

\begin{table}[h]
\centering
\small
\caption{Held-out $f_{\mathrm{scale}}$ selection (EXP-C4; selection hygiene for the numeric-block scaling of Tables~\ref{tab:nab} and~\ref{tab:dsweep}). $f_{\mathrm{scale}}$ chosen as the largest value with $\texttt{cert\_false\_allow}=0$ on a disjoint SELECTION half, re-reported on the untouched EVAL half (grid $\{2,3,4,6\}$, 3 seeds $\times$ 2 tracks): the soundness headline is not outcome-conditioned. The rule genuinely binds on the synthetic track (drops $f_{\mathrm{scale}}=2.0$ at selection $\texttt{cfa}=0.1$, seed 0): a small $f_{\mathrm{scale}}$ under-resolves the numeric block, so the gate misfits the boundary and its certified allows inherit the misfit. On NAB the same protocol selects $6.0$ (all seeds) with eval $\texttt{cert\_false\_allow}=0$ and $\Rallow=1.0$: soundness generalizes across the split with no cliff.}
\label{tab:fscaleheldout}
\resizebox{\columnwidth}{!}{%
\begin{tabular}{lllll}
\toprule
track & selected & eval $\texttt{cert\_false\_allow}$ & eval $\Rallow$ & clean acc \\
\midrule
synthetic & $6.0$ (rule binds) & $0$ (all seeds) & $0.967$ & $0.90$--$0.94$ \\
OPA finance & $6.0$ (no cliff) & $0$ (all seeds) & $0.781$ & $0.90$--$0.94$ \\
\bottomrule
\end{tabular}}
\end{table}

\medskip\noindent\textbf{Question.} Where does the regime of validity end in the continuous dimension?
\textbf{Setting.} A dimension sweep of the smoothed gate against the true oracle up to \(k=100\).
\textbf{Result.} Sound and non-vacuous through \(k=50\); at \(k=100\) the small smoothed gate stops tracking the oracle and the certificate faithfully certifies a weaker gate --- gate fidelity, not the certificate, is the bottleneck, and projection removes the edge (Table~\ref{tab:dimsweep}).

\begin{table}[h]
\centering
\small
\caption{Dimension sweep (Experiment D.2; anchors the \(k=100\) fidelity edge). Through \(k=50\) the certificate is sound and non-vacuous; at \(k=100\) the small smoothed gate stops tracking the oracle and the certificate faithfully certifies a weaker gate: gate fidelity, not the certificate, is the bottleneck. Policy-state projection removes this edge (Table~\ref{tab:projection}).}
\label{tab:dimsweep}
\resizebox{\columnwidth}{!}{%
\begin{tabular}{lccc}
\toprule
numeric dimension & clean acc & \(\texttt{cert\_false\_allow}\) & \(\Rallow\) \\
\midrule
\(k\le 50\)  & \(0.974\)--\(0.997\) & \(0\)      & \(0.38\)--\(0.60\) \\
\(k=100\)    & \(0.921\)            & \(0.23\)   & --- \\
\bottomrule
\end{tabular}}
\end{table}

\medskip\noindent\textbf{Question.} Does randomized-smoothing confidence survive a deployment horizon?
\textbf{Setting.} Certified \(\Rallow\) versus horizon \(T\) under a lifetime failure budget \(\alpha_{\mathrm{total}}=0.01\) (EXP-CX2).
\textbf{Result.} The frozen kill criterion fires: horizon-corrected RS goes vacuous, so long-horizon deployments fall back to the deterministic rungs (Table~\ref{tab:cx2horizon}).

\begin{table}[h]
\centering
\small
\caption{Deployment-horizon confidence for randomized smoothing (EXP-CX2; Section~5.1). Certified \(\Rallow\) on exact robust-safe records vs horizon \(T\) under a lifetime failure budget \(\alpha_{\mathrm{total}}=0.01\) (finance track shown; SRE/ops collapse to \(0\) sooner). The frozen kill criterion fires: horizon-corrected RS goes vacuous, per-record \(\alpha\) has a vacuous lifetime bound for \(T\ge10^3\), adaptive MC buys back utility at \(4\)--\(16\times\) sampling cost and stays below the deterministic certificate; the \(1\)-Lipschitz backend is horizon-invariant (no \(\alpha\), no MC; lifetime bound \(0\)).}
\label{tab:cx2horizon}
\resizebox{\columnwidth}{!}{%
\begin{tabular}{lcccc}
\toprule
scheme & \(T{=}10^3\) & \(10^4\) & \(10^5\) & \(10^6\) \\
\midrule
per-record \(\alpha{=}10^{-3}\) (lifetime bound vacuous) & \(0.12\) & \(0.12\) & \(0.12\) & \(0.12\) \\
Bonferroni \(\alpha_{\mathrm{total}}/T\) & \(0.035\) & \(0.020\) & \(0.010\) & \(0.005\) \\
\(\alpha\)-spending \(\propto1/t^2\) & \multicolumn{4}{c}{front-loaded: early \(0.21\), late \(\approx0\)} \\
adaptive MC (cap \(n_{\mathrm{mc}}{=}8000\)) & \multicolumn{4}{c}{\(0.22\)--\(0.29\) at \(4\)--\(16\times\) cost} \\
deterministic \method{}-Lip & \multicolumn{4}{c}{\(0.64/0.57/0.66\) (fin/SRE/ops), invariant in \(T\)} \\
\bottomrule
\end{tabular}}
\end{table}

\medskip\noindent\textbf{Question.} Does the shipped \method{}-Exact implementation match Definition~1 and Proposition~7?
\textbf{Setting.} A frozen-seed differential battery: \(200\) fragment policies \(\times\,1000\) returns (\(200{,}000\) total) against the independent OPA~1.17.1 engine evaluated at the exact per-constraint worst points, plus soundness-only ball sampling.
\textbf{Result.} The battery validates the implementation against the independent engine (Table~\ref{tab:cx3differential}).

\begin{table}[h]
\centering
\small
\caption{Differential validation of \method{}-Exact (EXP-CX3; the empirical leg of Definition~1 / Proposition~7). Frozen-seed battery: \(200\) fragment policies \(\times\,1000\) returns (\(200{,}000\) total; branches \(2\)--\(6\), \(m\le12\), \(k\le8\)) against the independent OPA~1.17.1 engine evaluated at the exact per-constraint worst points \(x+\varepsilon w/\|w\|_2\), plus soundness-only ball sampling and dense grids (\(k\le2\)); \(20\) out-of-fragment policies must be refused; the sub-\(\mu\)s figure of Table~S42 is the compiled scalar-threshold case.}
\label{tab:cx3differential}
\resizebox{\columnwidth}{!}{%
\begin{tabular}{lc}
\toprule
gate mismatches (both directions) & \(0\) / \(200{,}000\) \\
out-of-fragment refused (\texttt{unsupported}) & \(20/20\) \\
compiled-Rego vs Python point safety & \(0\) / \(500\) mismatches \\
median cost per decision (pure Python) & \(38.8\,\mu\mathrm{s}\) \\
log-log runtime slope in \(|\Ndisc|\,m\,k\) & \(0.64\) \\
\bottomrule
\end{tabular}}
\end{table}

\medskip\noindent\textbf{Question.} What autonomy does certification buy at operational operating points?
\textbf{Setting.} Implicit-\(\texttt{isFraud}\) triage on held-out mixed traffic (\(n=3000\), \(68\) fraud, \(3\) seeds), then the unconditional natural-traffic accounting (certified autonomy \(=\Pr[R]\cdot\Rallow\)).
\textbf{Result.} The certified strict-\(0\) frontier admits zero fraud on every seed; the operating points of the Takeaway (autonomy vs.\ in-budget fraud) come from these rows, and the exact rung's natural-traffic clearance is restated unconditionally (Tables~\ref{tab:triage}--\ref{tab:autonomy}).

\begin{table}[h]
    \centering
    \small
    \caption{Operational triage (Section~6.4): \(\Rallow\) as a certified-autonomy fraction on real IEEE-CIS (implicit \(\texttt{isFraud}\) policy, held-out mixed traffic \(n=3000\), \(68\) fraud, \(3\) seeds). ``In-budget fraud'' is the fraud fraction admitted in the autonomous tranche under the \(\Ball\) attack; the certified strict-\(0\) frontier admits \([0,0,0]\) per seed. Matching the certified volume without the robust-safe contract re-opens the exploit.}
    \label{tab:triage}
    \resizebox{0.9\columnwidth}{!}{%
        \begin{tabular}{lcc}
            \toprule
            gate                        & autonomous volume & in-budget fraud \\
            \midrule
            certified gate              & \(37\%\)          & \(0.044\)       \\
            point gate (volume-matched) & \(37\%\)          & \(0.532\)       \\
            point gate                  & \(80\%\)          & \(0.927\)       \\
            \bottomrule
        \end{tabular}}
\end{table}

\begin{table}[h]
\centering
\small
\caption{Natural-traffic autonomy accounting (Experiment M2; Section~6.4). \(\Rallow\) is conditional on the robust-safe set \(R\); this table restates it unconditionally on natural traffic: certified autonomy \(=\Pr[R]\cdot\Rallow\), human-review volume its complement, natural hazard \(\Pr[C]\) alongside (\(\delta=0.08\), \(\varepsilon=0.10\)). The exact rung-1 certificate on real data clears about half of all decisions autonomously; the learned rungs at the strict operating point (\(\varepsilon/\sigma=1\)) trade autonomy for a black-box-gate certificate, and that trade-off is now explicit per setting.}
\label{tab:autonomy}
\resizebox{\columnwidth}{!}{%
\begin{tabular}{llccc}
\toprule
setting & backend & natural \(\Pr[C]\) & certified autonomy & human review \\
\midrule
IEEE-CIS & exact (rung 1) & \(3.3\%\) & \(56.8\%\) & \(43.2\%\) \\
NAB telemetry & exact (rung 1) & \(10.9\%\) & \(46.1\%\) & \(53.9\%\) \\
PSD2/AML (REG) & RS (strict point) & \(6.5\)--\(9.8\%\) & \(2.0\)--\(9.2\%\) & \(90.8\)--\(98.0\%\) \\
OPA policy-as-code & Lipschitz & \(12.0\%\) & \(12.3\)--\(13.6\%\) & \(86.4\)--\(87.7\%\) \\
\bottomrule
\end{tabular}}
\end{table}

\medskip\noindent\textbf{Question.} What do the reported zeros mean statistically?
\textbf{Setting.} Every zero cell quoted in the main tables audited with explicit \(n/N\) and a Wilson-95\% upper bound, Clopper--Pearson cross-checked (\(91\) cells).
\textbf{Result.} All audited cells have \(k=0\) with honestly wide intervals where \(N\) is small, and each row carries an explicit denominator semantics (Table~\ref{tab:wilson}).

\begin{table}[h]
\centering
\small
\caption{Zero-cell audit (Experiment M1): explicit \(n/N\) and Wilson-95\% upper bound for the zeros quoted in the main tables; \(91\) cells audited, all \(k=0\), Clopper--Pearson cross-check agreeing throughout. Small-\(N\) cells are shown with their honestly wide intervals rather than a bare zero; each row carries an explicit denominator semantics (certified allows, Category-\(C\)/\(U\) records, exploit witnesses, episodes). Representative rows below; the full 91-cell audit ships with the code release.}
\label{tab:wilson}
\resizebox{\columnwidth}{!}{%
\begin{tabular}{lcc}
\toprule
cell (denominator semantics) & \(k/N\) & Wilson-95\% upper \\
\midrule
realistic schemas, \(\Uallow\) (finance; Category-\(U\) records) & \(0/17{,}200\) & \(2.2\times10^{-4}\) \\
e2e exploit, unsafe execution (joint cert / oracle; episodes) & \(0/300\) & \(0.0126\) \\
synthetic canonical, \(\Callow\), \(\Uallow\) (records) & \(0/40\) & \(0.0876\) \\
synthetic canonical, \(\texttt{cert\_false\_allow}\) (certified allows) & \(0/24\) & \(0.138\) \\
REG \texttt{psd2\_tra} natural \(\varepsilon{=}0.10\) (certified allows) & \(0/8\) & \(0.324\) \\
\bottomrule
\end{tabular}}
\end{table}


\end{document}